\documentclass{article}
\PassOptionsToPackage{table}{xcolor}
\usepackage[accepted]{icml2026}
{}%

\providecommand{\icmltitlerunning}[1]{}
\providecommand{\icmltitle}[1]{\title{#1}}
\providecommand{\icmlauthor}[2]{}
\makeatletter
\@ifundefined{icmlauthorlist}{\newenvironment{icmlauthorlist}{}{}}{}
\makeatother
\providecommand{\icmlaffiliation}[2]{}
\providecommand{\icmlcorrespondingauthor}[2]{}
\providecommand{\icmlkeywords}[1]{}
\providecommand{\icmlsetsymbol}[2]{}

\providecommand{\printAffiliationsAndNotice}[1]{\maketitle}
\makeatletter
\@ifpackageloaded{natbib}{}{%
  \IfFileExists{natbib.sty}{\usepackage[numbers,sort&compress]{natbib}}{}%
}%
\makeatother
\providecommand{\defcitealias}[2]{}
\usepackage[utf8]{inputenc}
\usepackage[T1]{fontenc}
\IfFileExists{times.sty}{\usepackage{times}}{}
\makeatletter
\AtBeginDocument{%
  \IfFileExists{t1ptm.fd}{\input{t1ptm.fd}}{}%
  \DeclareFontShape{T1}{ptm}{m}{scit}{<-> ssub * ptm/m/sc}{}%
}
\makeatother

\usepackage{textcomp}
\usepackage{microtype}
\newif\iffastcompile
\fastcompilefalse

\iffastcompile
  \usepackage[draft]{graphicx} 
\else
  \usepackage{graphicx}
\fi
\let\origincludegraphics\includegraphics
\newcommand{\safeincludegraphics}[2][]{%
  \IfFileExists{\detokenize{#2}}{\origincludegraphics[#1]{\detokenize{#2}}}{%
    \fbox{\parbox[c][0.22\textheight][c]{0.9\linewidth}{\centering\ttfamily Missing figure:\par #2}}%
  }%
}

\usepackage{subcaption} 
\usepackage{booktabs}
\usepackage{pifont}
\newcommand{\cmark}{\ding{51}} 
\newcommand{\xmark}{\ding{55}} 
\usepackage{amsmath, amssymb, mathtools, amsthm}
\IfFileExists{aliascnt.sty}{\usepackage{aliascnt}}{}
\usepackage{xcolor}
\usepackage{colortbl}
\usepackage{array}
\usepackage[normalem]{ulem}
\definecolor{oursgray}{gray}{0.90} 
\newcommand{\oursrow}{\rowcolor{oursgray}}

\newcommand{\secondbest}[1]{\uline{#1}}

\usepackage{multirow}
\usepackage{siunitx}


\usepackage{algorithm}
\usepackage{algpseudocode}
\usepackage{float}
\usepackage{dblfloatfix} 
\makeatletter
\algrenewcommand\alglinenumber[1]{\scriptsize #1:}
\makeatother
\usepackage{enumitem}
\usepackage{url}
\usepackage{xurl} 

\PassOptionsToPackage{hyphens}{url}

\PassOptionsToPackage{breaklinks}{hyperref}
\AtBeginDocument{%
  \hypersetup{hidelinks,breaklinks=true,pdfauthor={Marco Mustafa Mohammed, Fatemeh Daneshfar, Pietro Li{\`o}},pdfcreator={},pdfproducer={},pdfsubject={},pdfkeywords={Uncertainty Estimation, Evidential Deep Learning, OOD Detection}}%
  \Urlmuskip=0mu plus 1mu\relax%
}

\AtBeginEnvironment{thebibliography}{%
  \small\sloppy\setlength{\emergencystretch}{3em}%
  \setlength{\itemsep}{0pt}\setlength{\parskip}{0pt}\setlength{\parsep}{0pt}%
}
\usepackage{etoolbox} 
\makeatletter
\renewcommand\paragraph{\@startsection{paragraph}{4}{\z@}%
  {0pt}
  {-1em}
  {\normalfont\normalsize\bfseries}}
\makeatother

\AtBeginDocument{\raggedbottom}

\AtBeginDocument{%
\setlength{\textfloatsep}{8pt}
\setlength{\floatsep}{6pt}
\setlength{\intextsep}{6pt}
\setlength{\dbltextfloatsep}{8pt}
\setlength{\dblfloatsep}{6pt}
\setlength{\abovedisplayskip}{4pt}
\setlength{\belowdisplayskip}{4pt}
\setlength{\abovedisplayshortskip}{2pt}
\setlength{\belowdisplayshortskip}{2pt}
\setlength{\jot}{2pt}
\predisplaypenalty=0
\postdisplaypenalty=0

}

\usepackage{adjustbox}
\usepackage{makecell}
\newcommand{\compacttablesetup}{%
  \scriptsize
  \setlength{\tabcolsep}{3.0pt}
  \renewcommand{\arraystretch}{1.1}
}

\newcommand{\clip}{\operatorname{clip}}
\newcommand{\KL}{\mathrm{KL}}
\newcommand{\Dir}{\mathrm{Dir}}
\newcommand{\E}{\mathbb{E}}

\newenvironment{narroweq}{%
  \begingroup
  \setlength{\abovedisplayskip}{2pt}%
  \setlength{\belowdisplayskip}{2pt}%
  \setlength{\abovedisplayshortskip}{1pt}%
  \setlength{\belowdisplayshortskip}{1pt}%
  \setlength{\jot}{1pt}%
  \small
}{\endgroup}

\theoremstyle{plain}

\newaliascnt{proposition}{theorem}
\newtheorem{proposition}[proposition]{Proposition}
\aliascntresetthe{proposition}

\newaliascnt{lemma}{theorem}

\aliascntresetthe{lemma}

\newaliascnt{corollary}{theorem}

\aliascntresetthe{corollary}

\theoremstyle{definition}
\newaliascnt{definition}{theorem}

\aliascntresetthe{definition}

\newaliascnt{assumption}{theorem}
\newtheorem{assumption}[assumption]{Assumption}
\aliascntresetthe{assumption}

\theoremstyle{remark}
\newaliascnt{remark}{theorem}

\aliascntresetthe{remark}

\icmltitlerunning{GEM-FI: Gated Evidential Mixtures with Fisher Modulation}

\usepackage{silence}
\usepackage{bookmark}
\usepackage[capitalize,noabbrev]{cleveref}

\crefname{assumption}{Assumption}{Assumptions}
\Crefname{assumption}{Assumption}{Assumptions}
\crefname{proposition}{Proposition}{Propositions}
\Crefname{proposition}{Proposition}{Propositions}
\crefname{theorem}{Theorem}{Theorems}
\Crefname{theorem}{Theorem}{Theorems}
\crefname{lemma}{Lemma}{Lemmas}
\Crefname{lemma}{Lemma}{Lemmas}
\crefname{corollary}{Corollary}{Corollaries}
\Crefname{corollary}{Corollary}{Corollaries}
\crefname{definition}{Definition}{Definitions}
\Crefname{definition}{Definition}{Definitions}
\crefname{remark}{Remark}{Remarks}
\Crefname{remark}{Remark}{Remarks}


\usepackage{placeins}

\makeatletter
\newcommand{\newabbr}[3]{%
  \csgdef{abbr@short@#1}{\textsc{#2}}%
  \csgdef{abbr@long@#1}{#3}%
  \csgdef{abbr@first@#1}{1}%
  \expandafter\DeclareRobustCommand\csname #1\endcsname{%
    \ifnum\csuse{abbr@first@#1}=1\relax
      \csgdef{abbr@first@#1}{0}%
      \csuse{abbr@long@#1} (\csuse{abbr@short@#1})%
    \else
      \csuse{abbr@short@#1}%
    \fi
  }%
}

\newcommand{\abbrmarkused}[1]{%
  \csgdef{abbr@first@#1}{0}%
}
\newcommand{\abbrmarkusedlist}[1]{%
  \forcsvlist{\abbrmarkused}{#1}%
}
\makeatother

\newcommand{\abbr}[1]{%
  \ifcsdef{abbr@short@#1}{\csuse{abbr@short@#1}}{#1}%
}

\newabbr{BNN}{BNNs}{Bayesian neural networks}
\newabbr{MCD}{MC-dropout}{Monte Carlo dropout}
\newabbr{EDL}{EDL}{Evidential Deep Learning}
\newabbr{DAEDL}{DAEDL}{Density-Aware Evidential Deep Learning}
\newabbr{GDA}{GDA}{Gaussian Discriminant Analysis}
\newabbr{ID}{ID}{in-distribution}
\newabbr{OOD}{OOD}{out-of-distribution}
\newabbr{FI}{FI}{Fisher-informed}
\newabbr{ECE}{ECE}{Expected calibration error}
\newabbr{NLL}{NLL}{negative log-likelihood}
\newabbr{AUPR}{AUPR}{area under the precision--recall curve}
\newabbr{AUROC}{AUROC}{area under the receiver operating characteristic curve}
\newabbr{MI}{MI}{mutual information}
\newabbr{TS}{TS}{temperature scaling}
\newabbr{CNN}{CNN}{convolutional neural network}
\newabbr{MLP}{MLP}{multi-layer perceptron}
\newabbr{MaxP}{MaxP}{maximum softmax probability}

\providecommand{\NLL}{}\renewcommand{\NLL}{\abbr{NLL}}
\providecommand{\AUPR}{}\renewcommand{\AUPR}{\abbr{AUPR}}
\providecommand{\AUROC}{}\renewcommand{\AUROC}{\abbr{AUROC}}
\providecommand{\MI}{}\renewcommand{\MI}{\abbr{MI}}
\providecommand{\TS}{}\renewcommand{\TS}{\abbr{TS}}
\providecommand{\MLP}{}\renewcommand{\MLP}{\abbr{MLP}}

\newabbr{gem}{GEM}{Gated Evidential Mixtures}
\newcommand{\gemcore}{\textsc{GEM-Core}}
\newabbr{GEMMIX}{GEM-MIX}{Mixture of Beliefs} 
\newcommand{\gemmix}{\GEMMIX} 
\newcommand{\gemfi}{\textsc{GEM-FI}}

\defcitealias{zhang2024daedl}{Yoon \& Kim, 2024}
\defcitealias{kim2024dirichlet}{Yoon \& Kim, 2024}
\begin{document}
\twocolumn[
\icmltitle{GEM-FI: Gated Evidential Mixtures with Fisher Modulation}

\begin{icmlauthorlist}
\icmlauthor{Marco Mustafa Mohammed}{uok}
\icmlauthor{Fatemeh Daneshfar}{uok}
\icmlauthor{Pietro Li{\`o}}{cam}
\end{icmlauthorlist}

\icmlaffiliation{uok}{Department of Computer Engineering, Faculty of Engineering, University of Kurdistan, Sanandaj, Iran}
\icmlaffiliation{cam}{Department of Computer Science and Technology, University of Cambridge, Cambridge, United Kingdom}

\icmlcorrespondingauthor{Fatemeh Daneshfar}{Daneshfarshadi@gmail.com}

\icmlkeywords{Uncertainty Estimation, Evidential Deep Learning, Energy-based Models, Fisher Information, OOD Detection, ICML}

\vskip 0.3in
]
\hypertarget{icml-affiliations}{\mbox{}} 
\printAffiliationsAndNotice{} 

\begingroup
  \renewcommand{\EDL}{\textsc{EDL}}
  \renewcommand{\DAEDL}{\textsc{DAEDL}}
  \renewcommand{\BNN}{\textsc{BNNs}}
  \renewcommand{\MCD}{\textsc{MC-dropout}}
  \renewcommand{\GDA}{\textsc{GDA}}
  \renewcommand{\ID}{\textsc{ID}}
  \renewcommand{\OOD}{\textsc{OOD}}
  \renewcommand{\FI}{\textsc{FI}}
  \renewcommand{\gemmix}{\textsc{GEM-MIX}} 


\begin{abstract}
Evidential Deep Learning (EDL) enables single-pass uncertainty estimation by predicting Dirichlet evidence, but it can remain overconfident and poorly calibrated, and it often fails to represent multi-modal epistemic uncertainty. We introduce \textbf{G}ated \textbf{E}vidential \textbf{M}ixtures (GEM), a family of models that learns an in-model energy signal and uses it to gate evidential outputs end-to-end in a distance-informed manner. GEM-CORE learns a feature-level energy and maps it to a bounded gate that smoothly suppresses evidence when support is low. To capture epistemic multi-modality without multi-pass ensembling, GEM-MIX adds a lightweight mixture of evidential heads with learned routing weights while preserving single-pass inference. Finally, GEM-FI stabilizes mixture allocations via a Fisher-informed regularizer, reducing head collapse and producing smoother boundary uncertainty. Across image classification and OOD detection benchmarks, GEM improves calibration and ID/OOD separation with single-pass inference. On CIFAR-10, GEM-FI vs.\ DAEDL improves Acc.\ from 91.11 to 93.75 (+2.64 pp), reduces Brier$\times$100 from 14.27 to 6.81 ($-7.46$), and also improves misclassification-detection (AUPR) from 99.08 to 99.94 (+0.86). For epistemic OOD detection, GEM-FI achieves AUPR/AUROC of 92.59/95.09 on CIFAR-10$\rightarrow$SVHN and 90.20/89.06 on CIFAR-10$\rightarrow$CIFAR-100 (vs.\ 85.54/89.30 and 88.19/86.10 for DAEDL).
\end{abstract}

\endgroup

\abbrmarkused{EDL}
\abbrmarkused{gem}


\section{Introduction}
\label{sec:intro}
Reliable predictive uncertainty is essential when models operate beyond their training distribution or in safety-critical settings~\citep{ovadia2019can}. A calibrated model should be confident only on well-supported regions and defer elsewhere. \BNN{} offer a principled route by placing distributions over weights~\citep{blundell2015weight}, but deployments often face prohibitive training or inference costs. Popular approximations such as \MCD{}~\citep{gal2016dropout} and deep ensembles~\citep{lakshminarayanan2017simple} improve robustness, yet require multiple forward passes, which can be at odds with tight latency or energy budgets.

\EDL{}~\citep{sensoy2018evidential} provides a single-pass alternative by predicting Dirichlet parameters and interpreting their concentration as ``evidence.'' While competitive on \ID{}, networks may still be overconfident under distribution shift or for \OOD{} inputs~\citep{ovadia2019can,minderer2021revisiting}. Density-aware variants (e.g., \DAEDL{}~\citepalias{zhang2024daedl}) rescale evidential outputs using an offline feature-space likelihood, such as \GDA{}~\citep{murphy2012ml}. This improves calibration, but keeps the density cue \emph{static} and \emph{decoupled} from end-to-end learning. This decoupling creates several gaps that motivate our approach: (i) the offline density is not optimized jointly with the evidential mechanism, so the network cannot learn to \emph{shape} evidence where support is weak; (ii) the density surrogate is brittle to representation shift: when features drift, the pre-fit likelihood can systematically mis-rank near-boundary or near-\OOD{} inputs; and (iii) \DAEDL{} does not address \emph{epistemic multi-modality} near complex class boundaries, where a single evidential head may collapse to overconfident allocations unless explicitly regularized.

Energy-based views~\citep{liu2020energy} often yield stronger \ID{}/\OOD{} separability than softmax confidence, yet they are typically used \emph{post hoc}: thresholds and temperatures are tuned after training and do not intervene where evidential confidence is \emph{produced}~\citep{guo2017calibration}. As a result, energy signals rarely enforce \emph{local smoothness} (e.g., via Lipschitz-oriented regularization such as Parseval constraints~\citep{cisse2017parseval} or spectral normalization~\citep{miyato2018spectral}) or \emph{distance-aware monotonicity} of evidence during learning, and can exhibit dataset-specific sensitivity~\citep{ovadia2019can,minderer2021revisiting}. More broadly, many single-pass pipelines either rely on static, decoupled density surrogates (e.g., \DAEDL{}~\citepalias{zhang2024daedl}) or apply post hoc score adjustments (e.g., temperature scaling (\TS{}), energy scoring)~\citep{guo2017calibration,romero2024calibration}. Consequently, there is a need for an in-model, learnable support signal that (i) directly gates evidential outputs, (ii) preserves single-pass inference, and (iii) captures epistemic multi-modality without multi-pass ensembles.

This paper asks a simple question: Can we integrate a data-dependent notion of representation ``support'' directly into the evidential mechanism, while retaining single-pass inference? We answer in the affirmative with \gem{}. \gemcore{} learns a feature-level energy and maps it to a bounded in-model gate that smoothly scales evidence. The mapping from energy to the final integration gate is learned; empirically, off-support inputs yield smaller gates and thus more conservative predictions. To capture multi-modal epistemic structure near complex decision boundaries without multi-pass ensembling, \gemmix{} augments \EDL{} with a single-pass mixture of evidential heads and learned mixture weights. Finally, \gemfi{} introduces an \FI{} regularizer that stabilizes allocations and discourages head collapse, yielding smoother boundary uncertainty and stronger suppression off-support.

On a synthetic two-moons setup, Figure~\ref{fig:moons-2d} compares \EDL{}, \DAEDL{}, and \gemfi{} on the same data. \EDL{} (Fig.~\ref{fig:moons-edl}) concentrates uncertainty in a narrow band near the decision boundary while remaining overconfident far from support. \DAEDL{} (Fig.~\ref{fig:moons-daedl}) improves distance awareness and calibration, but its density cue is static and decoupled from end-to-end learning, and it can still underestimate uncertainty near the curved boundary and in parts of the \OOD{} region. In contrast, \gemfi{} (Fig.~\ref{fig:moons-gemfi}) yields smoother uncertainty near decision boundaries and more consistently lower confidence on \OOD{} inputs. In a corresponding one-dimensional example, Figure~\ref{fig:moons-toy-1d} contrasts a single-head \DAEDL{} model (Fig.~\ref{fig:toy-daedl}) with the \FI{}-regularized multi-head \gemfi{} (Fig.~\ref{fig:toy-gemfi}), illustrating how our mixture retains uncertainty across modes instead of collapsing to overconfident allocations.

\newcommand{\panelH}{0.21\textheight}

\begin{figure}[t]
  \centering

  \begin{subfigure}[b]{0.31\columnwidth}
    \centering
    \safeincludegraphics[width=\linewidth]{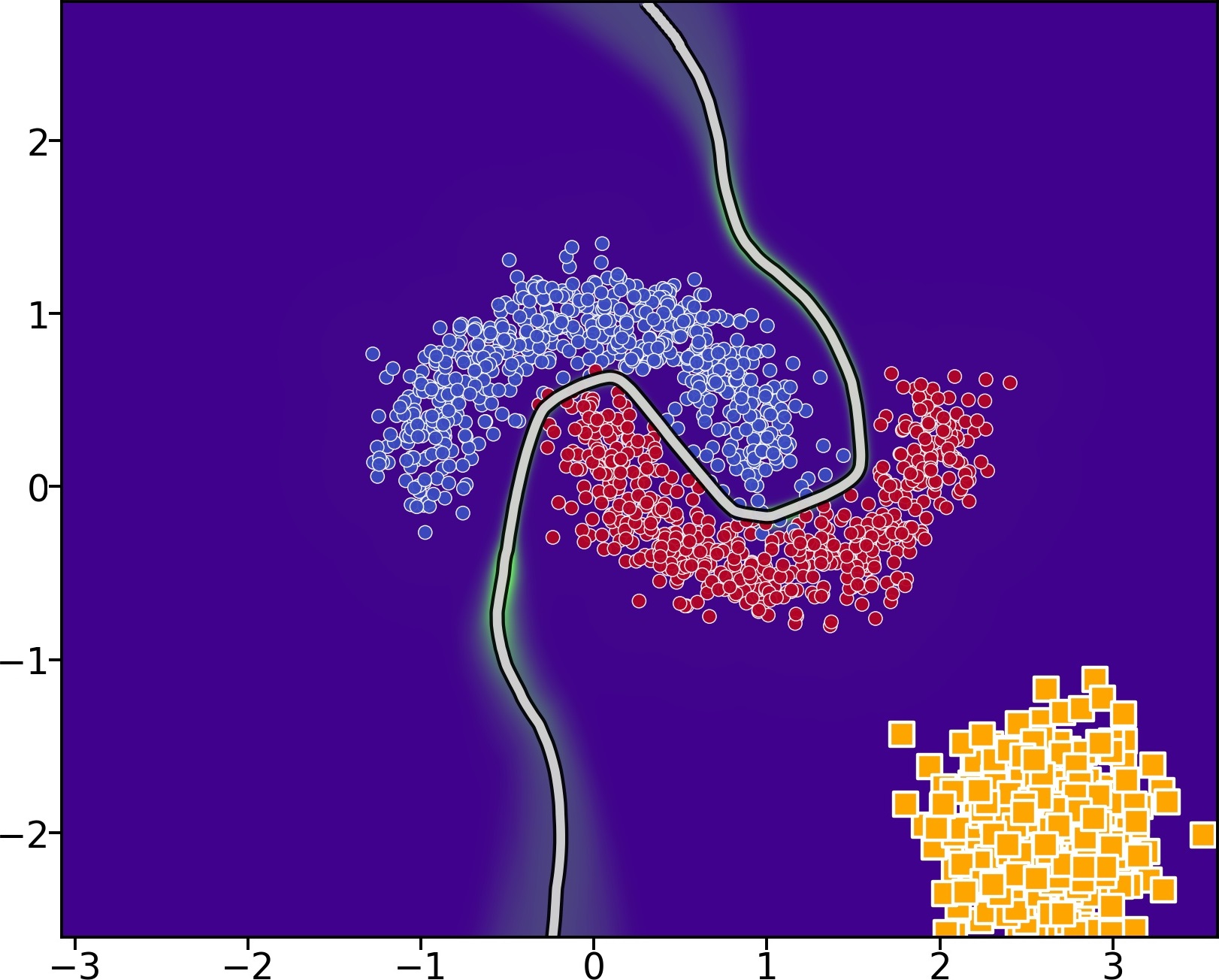}
    \caption{\textsc{EDL}}
    \label{fig:moons-edl}
  \end{subfigure}%
  \hfill
  \begin{subfigure}[b]{0.31\columnwidth}
    \centering
    \safeincludegraphics[width=\linewidth]{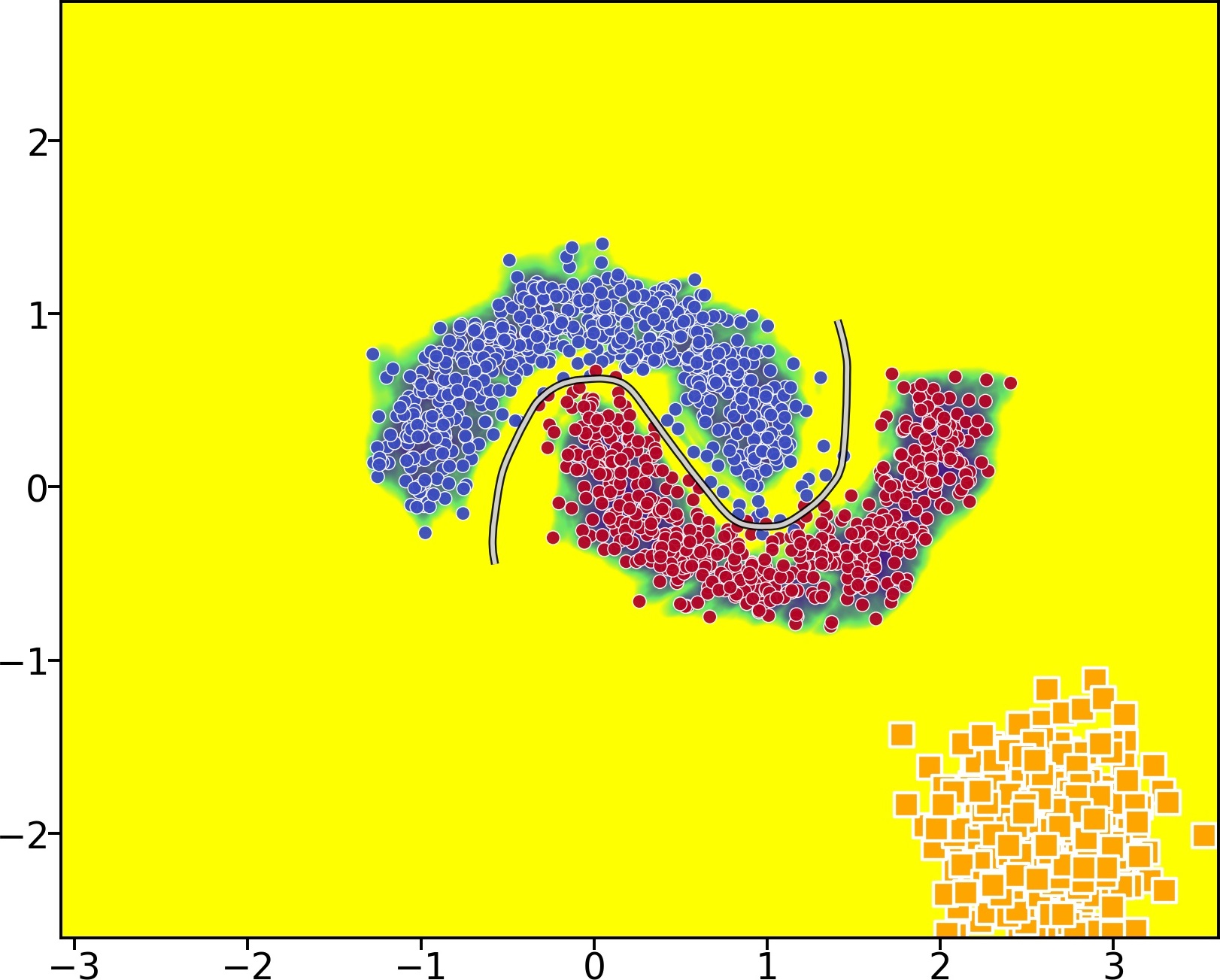}
    \caption{\textsc{DAEDL}}
    \label{fig:moons-daedl}
  \end{subfigure}%
  \hfill
  \begin{subfigure}[b]{0.31\columnwidth}
    \centering
    \safeincludegraphics[width=\linewidth]{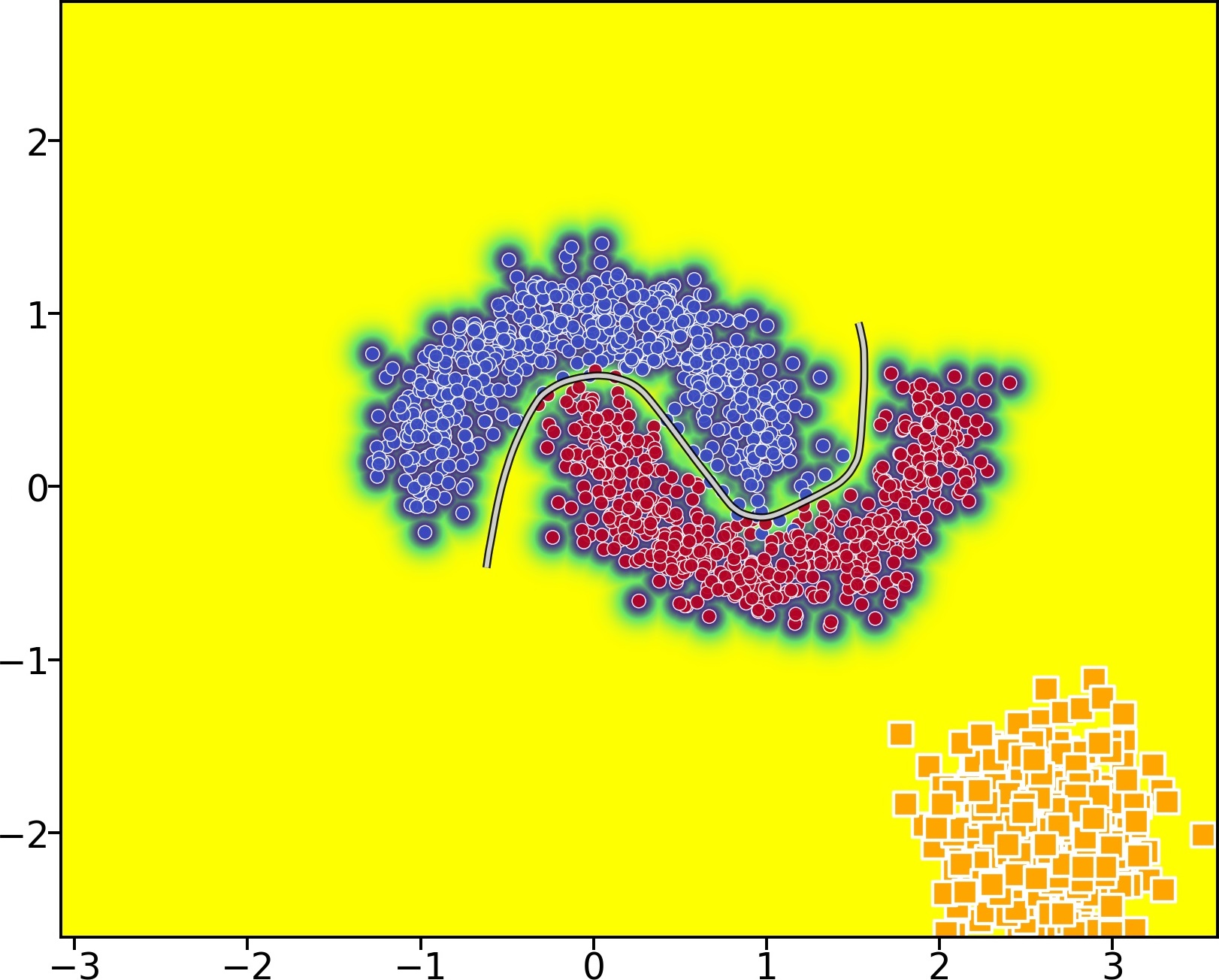}
    \caption{\gemfi{} (ours)}
    \label{fig:moons-gemfi}
  \end{subfigure}%
  \hfill
  \begin{subfigure}[b]{0.04\columnwidth}
    \centering
    \safeincludegraphics[height=0.09\textheight]{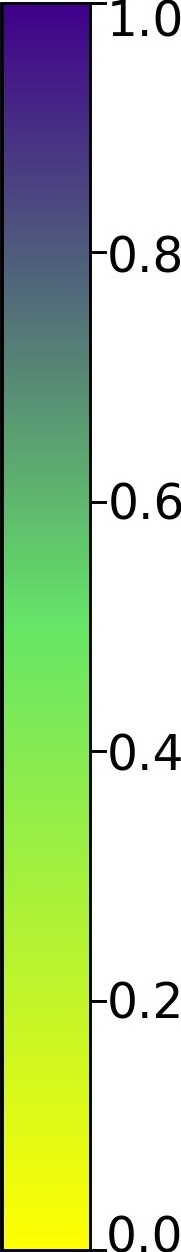}
    \caption*{}
  \end{subfigure}
\caption{
  Two-moons setup with an additional \textsc{OOD} cluster.
  Panels show predictive entropy (brighter = higher uncertainty).
}

  \label{fig:moons-2d}
\end{figure}

\begin{figure}[t]
  \centering

  \begin{subfigure}[b]{0.48\columnwidth}
    \centering
    \safeincludegraphics[width=\linewidth]{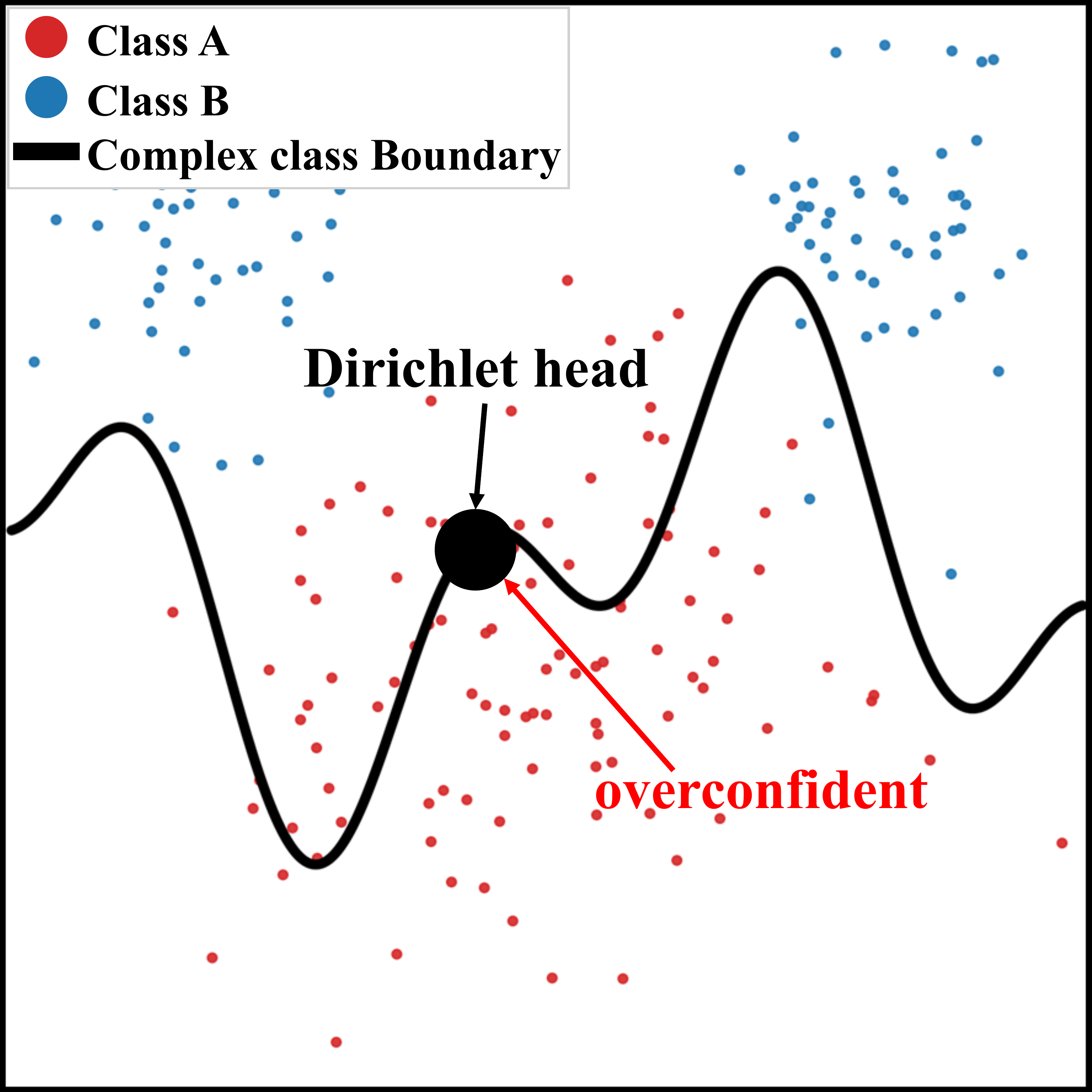}
    \caption{\DAEDL{}}
    \label{fig:toy-daedl}
  \end{subfigure}
  \hfill
  \begin{subfigure}[b]{0.48\columnwidth}
    \centering
    \safeincludegraphics[width=\linewidth]{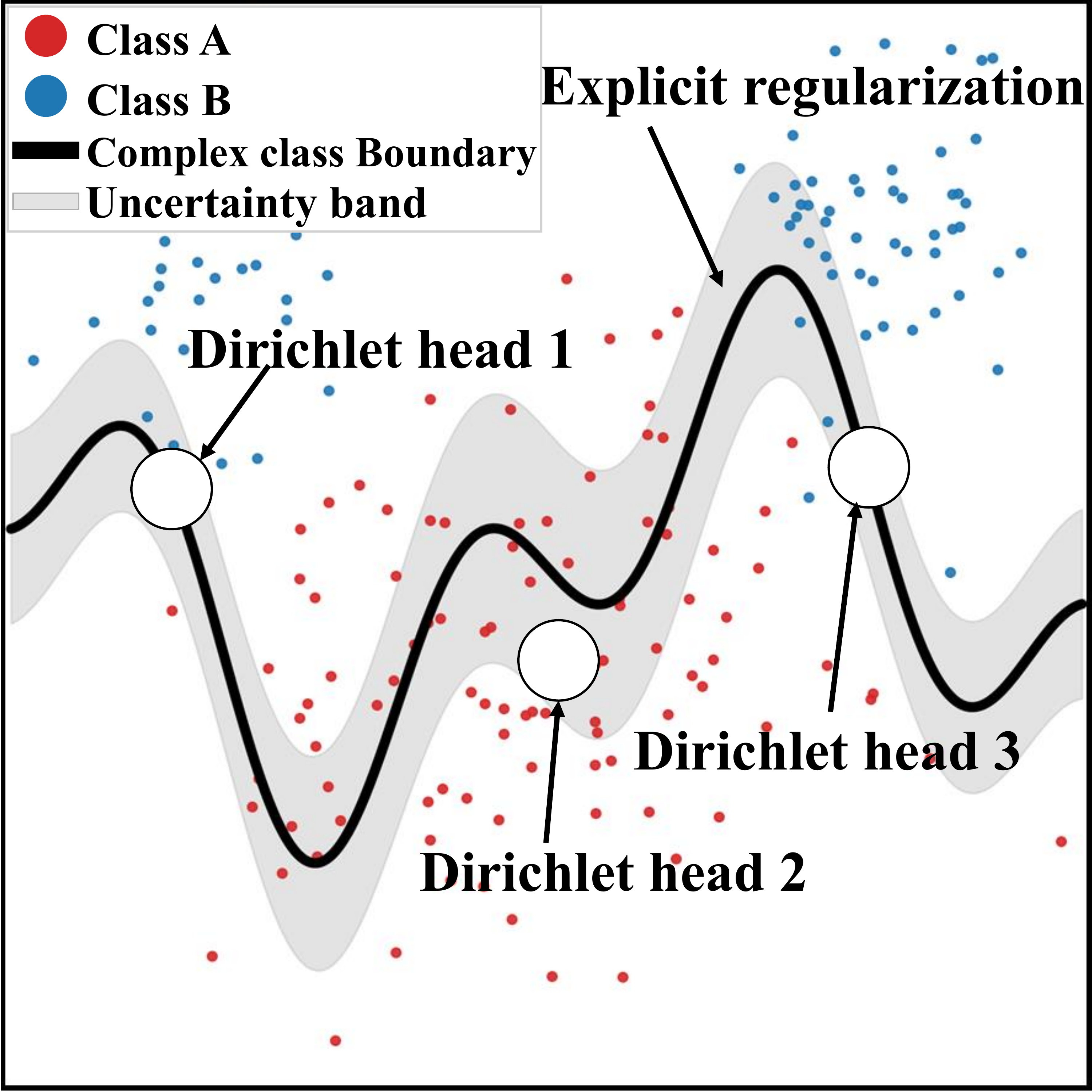}
    \caption{\gemfi{} (ours)}
    \label{fig:toy-gemfi}
  \end{subfigure}
  \caption{
One-dimensional toy example in a non-convex boundary region.
(a) \textsc{DAEDL} single-head model: tends to miss epistemic multi-modality and can yield
overconfident allocations.
(b) \textsc{FI}-regularized multi-head \gemfi{} in the same region: retains uncertainty
across modes and avoids overconfident collapse.
}
  \label{fig:moons-toy-1d}
\end{figure}

Our design follows two principles. (i) \textbf{Distance-informed confidence:} we learn a representation-level energy $E(x)$ and pass it through a bounded gate that directly scales evidential outputs, so higher energy yields a smaller gate and more conservative evidence. This creates a smooth link between feature-space support and confidence, reduces abrupt overconfidence under shift, and keeps the support signal learnable and end-to-end (rather than a static, offline density). (ii) \textbf{Single-pass epistemics:} we recover ensemble-like diversity with a lightweight mixture of evidential heads trained jointly on a shared backbone. Learned mixture weights provide soft specialization near complex decision boundaries, while \FI{} stabilization discourages head dominance and improves calibration and \OOD{} separability---all with single-pass inference and modest overhead. Empirically, across standard image-classification and \OOD{} detection benchmarks, \gem{} improves calibration and strengthens \ID{}/\OOD{} separation relative to \EDL{} and strong density-aware baselines.

Our main contributions are summarized as follows:
\begin{itemize}[leftmargin=*]
\item We learn a feature-level energy and map it to a bounded, in-model gate that directly modulates Dirichlet evidence, reducing overconfidence for atypical features while preserving confident \ID{} predictions.
\item We introduce a lightweight mixture of evidential heads with learned routing weights, capturing multi-modal epistemic structure without ensembles or extra forward passes.
\item We add a \FI{}-informed regularizer to stabilize mixture allocations and prevent head collapse, yielding smoother boundary uncertainty.
\item We demonstrate improvements in calibration and \OOD{} separation on standard benchmarks, and provide ablations isolating the roles of the energy gate, mixture size, and \FI{} regularization.
\end{itemize}

We defer the related-work discussion and a compact comparison with the closest single-pass evidential and density-aware methods to Appendix~\ref{app:related_work} (Table~\ref{tab:related_work_comparison}).


\section{Method}
\label{sec:method}

Figure~\ref{fig:arch} summarizes the architectures considered in this work.
Figure~\ref{fig:arch-daedl} presents the density-aware \DAEDL{} baseline, while
Figure~\ref{fig:arch-gemfi} illustrates the proposed \gemfi{} architecture and its main components.
The added modules in GEM---the energy head, integration gate, router, and evidential heads---are lightweight attachments on top of a backbone that produces fixed-dimensional features, so the design is compatible with standard CNN or Transformer-style feature extractors with minimal architectural assumptions.
\gemcore{} augments a spectrally normalized backbone with a learned feature-level energy and a bounded gate that modulates the predictive distribution via probability-space gating (Sec.~\ref{sec:gem-core}).
\gemmix{} introduces a single-pass mixture of evidential heads with learned mixing weights (Sec.~\ref{sec:gem-mix}).
\gemfi{} adds an \FI{}-informed regularizer, together with FI-based modulation of mixture weights, to stabilize allocations (Sec.~\ref{sec:gem-fi}).

\begin{figure*}[t]
  \centering
  \newcommand{\archW}{0.9\textwidth}

  \begin{subfigure}[t]{\archW}
    \centering
    \safeincludegraphics[width=\linewidth]{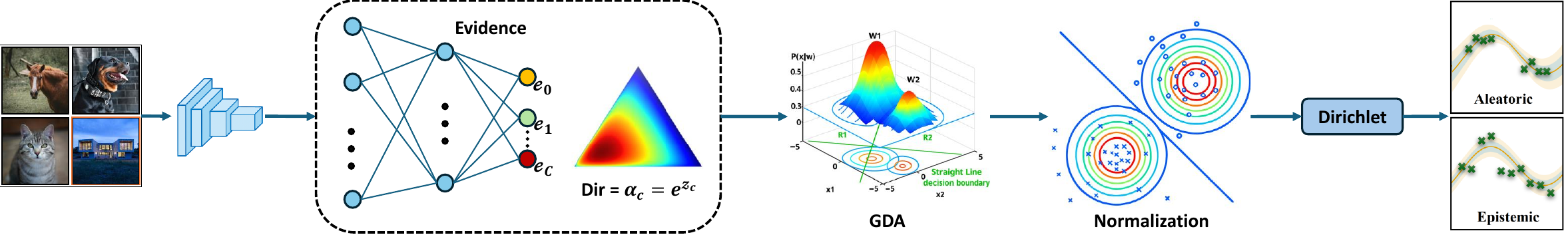}
    \caption{\textsc{DAEDL}}
    \label{fig:arch-daedl}
  \end{subfigure}\\[4pt]

  \begin{subfigure}[t]{\archW}
    \centering
    \safeincludegraphics[width=\linewidth]{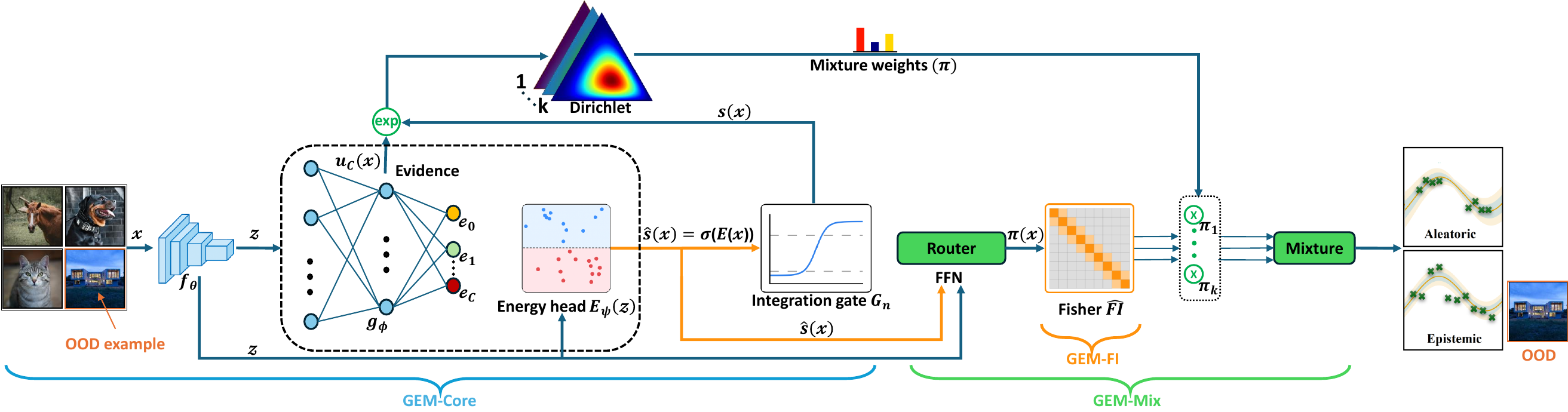}
    \caption{\gemfi{} (ours)}
    \label{fig:arch-gemfi}
  \end{subfigure}

\caption{
Architecture of the proposed method.
(a) \textsc{DAEDL}: a spectrally normalized backbone with a single evidential head that outputs Dirichlet evidence, augmented with an offline feature-space density model (\textsc{GDA}) whose normalized likelihood rescales evidential outputs before computing uncertainty.
(b) \gemfi{}: extends the same backbone with an energy head $E_\psi$ that maps features $z$ to a scalar energy $E(x)$ and a bounded class-wise gate $s(x)$ (\gemcore{}), and adds a router that produces mixture weights over multiple Dirichlet heads together with a \textsc{FI}-based regularizer.
}
  \label{fig:arch}
\end{figure*}

\paragraph{Notation.}
Let $x\in\mathcal{X}$ and $y\in\{1,\dots,C\}$ denote an input and its class label, where $C$ is the number of classes.
A spectrally normalized backbone $f_\theta:\mathcal{X}\!\to\!\mathbb{R}^d$ produces features
$z = f_\theta(x)$.

\paragraph{Evidential parameterization.}
An evidential head $g_\phi:\mathbb{R}^d\!\to\!\mathbb{R}^C$ maps these features to
logits:
\begin{equation}
u_{1:C}(x) = g_\phi(z).
\label{eq:u-definition}
\end{equation}
Dirichlet distributions are denoted by $\Dir(\alpha)$ for
$\alpha\in\mathbb{R}_{>0}^C$, with total concentration
$\alpha_0 = \sum_c \alpha_c$ and expectations
$\E_\alpha[\cdot]$ taken with respect to $\Dir(\alpha)$.
Throughout, we parameterize $\alpha$ directly from (clipped) logits and use a small
$\epsilon>0$ for numerical stability:
\begin{equation}
\begin{aligned}
\tilde{u}_c(x) &= \mathrm{clip}\big(u_c(x),-\tau,\tau\big),\\
\alpha_c(x) &= \exp\!\big(\tilde{u}_c(x)\big)+\epsilon,\qquad \epsilon=10^{-8}.
\end{aligned}
\label{eq:alpha_daedl_style}
\end{equation}
Unless stated otherwise, $u_c(x)$ and $\alpha_c(x)$ denote the \emph{single-head} logits and concentrations used by \EDL{}/\gemcore{}. For mixture models, we write $u_c^{(k)}(x)$, $\alpha_c^{(k)}(x)$, and $p_c^{(k)}(x)$ for the corresponding quantities of head $k$. We reserve $\hat{s}(x)$ for the intermediate scalar energy-derived gate signal, $s_c(x)$ for the final class-wise integration gates, and $\pi_k(x)$ for router-produced mixture weights.

\paragraph{DAEDL baseline.}
For reference, \DAEDL{} \citepalias{zhang2024daedl} keeps the same backbone
$f_\theta$ and evidential head $g_\phi$ as above, and therefore uses the
same logits $u_c(x)$.
It additionally fits an \emph{offline} feature-space density model
(e.g., class-conditional \GDA{}) on the features $z = f_\theta(x)$,
and uses its normalized likelihood to modulate these logits (Figure~\ref{fig:arch-daedl}).
The Dirichlet parameters and predictive mean in \DAEDL{} are:
\begin{equation}
\begin{aligned}
\alpha_c^{\text{DAEDL}}(x)
&= \exp\!\big(\lambda(x)\,u_c(x)\big),\\
p_c^{\text{DAEDL}}(x)
&= \frac{\alpha_c^{\text{DAEDL}}(x)}{\sum_j \alpha_j^{\text{DAEDL}}(x)}.
\end{aligned}
\label{eq:daedl-prob}
\end{equation}
The first line scales the logit $u_c(x)$ by the density-dependent factor
$\lambda(x)$ to form the Dirichlet concentration
$\alpha_c^{\text{DAEDL}}(x)$, and the second line normalizes these
concentrations to obtain the predictive probability
$p_c^{\text{DAEDL}}(x)$.
The density term $\lambda(x)$ is computed once from the offline surrogate
$q(z)$ and kept fixed during training; in contrast, our \gemcore{} uses a
learnable, in-model gate $s(x)$ that is trained jointly with the
backbone and evidential heads.

\subsection{\gemcore{}: Energy-to-Gate Evidential Learning}
\label{sec:gem-core}

\paragraph{Density Scaling.}
In addition to the learned gate, we employ a lightweight density scaler $\rho(z)$ to modulate evidential concentrations based on feature density. We estimate $\rho(z) = \sigma(\log p(z))^{\gamma}$, where $p(z)$ is the log-likelihood from a Gaussian Mixture Model (GMM) fit to ID training features, $\sigma$ is the sigmoid function, and $\gamma=1.2$ is a fixed exponent. This density score is used purely as a multiplicative scaler on the evidence: $\alpha_c(x) = \rho(z) \cdot \exp(\tilde{u}_c(x)) + \epsilon$. In mixture variants, the same shared feature-density score is applied per head before normalization, yielding $\alpha_c^{(k)}(x)=\rho(z)\cdot\exp(\clip(u_c^{(k)}(x),-\tau,\tau))+\epsilon$ unless otherwise noted. This supplementary component acts as a "hard" safety guardrail to suppress evidence in regions of extremely low density, while the learned gate $s(x)$ remains the primary, task-adaptive support signal responsible for fine-grained modulation.

\paragraph{Energy convention and gate direction.}
We define energy $E(x)$ such that \emph{higher energy corresponds to lower representation-level support} (anti-correlated with feature density). The intermediate scalar $\hat{s}(x) = \sigma(E(x)) \in (0,1)$ increases with energy by construction. However, the integration gate network $G_\eta$ takes $[z, \hat{s}(x)]$ as input and learns to output per-class gates $s(x) \in [s_{\min}, s_{\max}]^C$. Crucially, $G_\eta$ can learn either positive or negative correlation with $\hat{s}$; empirically, we observe an \emph{inverse-like mapping}: higher $\hat{s}$ (indicating lower support) leads to smaller final gates $s(x)$, thereby suppressing evidence for \OOD{} inputs. This correspondence is a \emph{learned outcome} of end-to-end training, not an architectural constraint.

\gemcore{} learns a feature-level energy $E_\psi:\mathbb{R}^d\!\to\!\mathbb{R}$ and maps it to a bounded (class-wise) gate $s(x)\in[s_{\min},s_{\max}]^C\subset(0,1)^C$ that directly modulates class probabilities via probability-space gating. $E_\psi$ is a lightweight \MLP{} on $z$. The scalar energy $E(x)$ is first squashed with a sigmoid to obtain an intermediate scalar gate $\hat{s}(x)=\sigma(E(x))\in(0,1)$. This scalar is then concatenated with $z$ and fed into a small ``integration gate'' network $G_\eta$ that outputs per-class gates, with:
\begin{narroweq}
\begin{equation}
\begin{aligned}
\tilde{u}_c(x) &= \clip\big(u_c(x),-\tau,\tau\big),\\
\alpha_c(x) &= \rho(z) \cdot \exp\!\big(\tilde{u}_c(x)\big)+\epsilon,\\
p_c(x) &= \frac{\alpha_c(x)}{\alpha_0(x)}.
\end{aligned}
\label{eq:app-gemcore-bounded-scaling}
\end{equation}
\end{narroweq}%

\paragraph{Probability-space gating (implementation).}
The final class-wise gate is denoted by $s(x)=(s_1(x),\dots,s_C(x))$ to distinguish it from the intermediate scalar signal $\hat{s}(x)$. This per-class gate is applied to the predictive distribution in probability space and then renormalized:
\begin{narroweq}
\begin{equation}
\hat{p}(x) \;=\; \frac{p(x)\odot s(x)}{\mathbf{1}^\top\!\big(p(x)\odot s(x)\big)}.
\label{eq:app-prob-gating}
\end{equation}
\end{narroweq}%

In this block, $p(x)$ denotes the predictive mean in probability space (either a single-head evidential predictive mean or the mixture predictive mean). The per-class gate $s(x)$ is applied multiplicatively to $p(x)$ and the result is renormalized to obtain $\hat p(x)$ in \eqref{eq:app-prob-gating}, matching the implementation (probability-level gating).
Training minimizes a standard evidential target-matching loss with a Kullback--Leibler (KL) prior to the uniform Dirichlet:
\begin{narroweq}
\begin{align}
\mathcal{L}_{\text{core}}
&=\E_{(x,y)}\!\Big[\big\|\,e_y-\hat{p}(x)\,\big\|_2^2
+\lambda_{\mathrm{KL}}\,
\KL\!\big[\Dir(\alpha(x))\,\|\,\Dir(\mathbf{1})\big]\Big].
\label{eq:app-core-loss}
\end{align}
\end{narroweq}%
Here, the core loss combines a squared error term that matches the gated predictive mean $\hat{p}(x)$ to the one-hot label $e_y$ with a KL regularizer that keeps the concentration vector $\alpha$ close to the non-informative prior $\Dir(\mathbf{1})$. Since $\hat{p}(x)$ depends on both the energy head $E_\psi$ (via $s(x) = G_\eta(E_\psi(z))$) and the gate network $G_\eta$, gradients flow back through both components during training, enabling end-to-end learning of the density-aware gating mechanism.

\paragraph{Inference (single-pass, no gradients).}
At inference time, the model performs a single forward pass through the frozen network: we compute features $z = f_\theta(x)$, energy $E(x)$, gates $s(x)$, and mixture weights $\pi(x)$ via the router--all without any gradient computation. The predictive mean is $\E_{\alpha}[\pi]$; proxies include $\alpha_0$ (epistemic), $\max_c\E_{\alpha}[\pi_c]$ (aleatoric), entropy/\MI{}, and an energy-derived score (reported as an auxiliary single-pass baseline for shift/OOD scoring; not used beyond the gating pipeline). Importantly, the FI-based modulation of mixture weights \eqref{eq:fi-modulation} and the FI regularizer \eqref{eq:app-fi-loss} are applied only during training; at inference, mixture weights are computed directly from the router output. In our implementation, the final $\tanh$ nonlinearity on the energy head output is \emph{optional} and disabled by default; we found that removing it (i.e., using an identity mapping) improves \OOD{} separation, particularly when combined with energy-based-model (EBM) negative sampling using Virtual Outlier Synthesis (VOS). When $\tanh$ is enabled, a mild ``desaturation'' can be applied at evaluation time by scaling the pre-activation by $0.5$ to avoid hard saturation.

\paragraph{Complexity.}
\gemcore{} adds only the energy head $E_\psi$ and the integration gate $G_\eta$; inference remains single-pass with no gradient computation required.

\subsection{\gemmix{}: Mixture of Beliefs (Single-Pass)}
\label{sec:gem-mix}
To capture multi-modal epistemic structure near complex decision boundaries without multi-pass ensembling, \gemmix{} extends \gemcore{} with $K$ evidential heads $\{g_{\phi^{(k)}}\}_{k=1}^K$ that share backbone features $z=f_\theta(x)$. Each head outputs class-wise logits $u^{(k)}(x)\in\mathbb{R}^C$, which are mapped to Dirichlet concentrations as:
\begin{narroweq}
\begin{equation}
\alpha^{(k)}(x)=\exp\!\big(\clip(u^{(k)}(x),-\tau,\tau)\big)+\varepsilon,
\qquad \varepsilon=10^{-8}.
\label{eq:app-gemmix-alpha}
\end{equation}
\end{narroweq}
For readability, we write $\alpha^{(k)}(x)$ for the full concentration vector of head $k$ and $\alpha_c^{(k)}(x)$ for its class-$c$ entry; likewise, $p^{(k)}(x)$ denotes the per-head predictive mean vector and $p_c^{(k)}(x)$ its class-$c$ component.
The predictive mean for head $k$ is
\begin{narroweq}
\begin{equation}
p^{(k)}_c(x) = \frac{\alpha^{(k)}_c(x)}{\sum_j \alpha^{(k)}_j(x)}.
\label{eq:app-gemmix-p}
\end{equation}
\end{narroweq}%

A learnable router $h_\omega$ takes the shared features along with the scalar energy gate and produces mixture weights:
\begin{narroweq}
\begin{equation}
\pi(x)=\mathrm{softmax}\big(h_\omega([z,\hat{s}(x)])\big)\in\Delta^{K-1}.
\label{eq:app-gemmix-weights}
\end{equation}
\end{narroweq}%
The mixture predictive mean (before per-class probability gating) is then
\begin{narroweq}
\begin{align}
p_{\mathrm{mix}}(y{=}c\mid x)
&= \sum_{k=1}^{K}\pi_k(x)\,p^{(k)}_c(x),
\label{eq:app-gemmix-pred}\\
\alpha_{0,\mathrm{mix}}(x)
&= \sum_{k=1}^{K}\pi_k(x)\,\alpha^{(k)}_0(x),
\label{eq:app-gemmix-total-evidence}
\end{align}
\end{narroweq}%
where $\alpha^{(k)}_0(x)=\sum_c \alpha^{(k)}_c(x)$. The final predictive distribution is obtained by applying the shared per-class gate in probability space and renormalizing as in \eqref{eq:app-prob-gating}:
$\hat p(x)=\mathrm{Normalize}\!\big(p_{\mathrm{mix}}(x)\odot s(x)\big)$.

We train \gemmix{} using a negative log-likelihood term on $\hat p$ together with per-head KL priors:
\begin{narroweq}
\begin{equation}
\begin{aligned}
\mathcal{L}_{\mathrm{mix}}
&= \E_{(x,y)}\!\Big[-\log \hat{p}_y(x) \\
&\quad + \lambda_{\mathrm{KL}}\sum_{k=1}^{K}\pi_k(x)\,
\KL\!\big[\Dir(\alpha^{(k)}(x))\,\|\,\Dir(\mathbf{1})\big]\Big].
\end{aligned}
\label{eq:app-mix-loss}
\end{equation}
\end{narroweq}

\subsection{\gemfi{}: \FI{}-Informed Regularization and Modulation}
\label{sec:gem-fi}
To further stabilize mixture behavior and discourage head collapse, \gemfi{} augments \gemmix{} with an \FI{}-informed regularizer and an FI-based modulation of the mixture weights.

\paragraph{FI proxy computation.}
\label{app:fi_proxy}
We compute a lightweight per-head proxy $\widehat{\mathrm{FI}}_k(x)$ using the squared gradient norm of the log-likelihood with respect to the logits. Here $\pi_k(x)$ always denotes the normalized router weight assigned to head $k$ (after any training-time FI modulation), whereas $\tilde{\pi}_k(x)$ denotes the raw pre-modulation router score used only internally in Eq.~\eqref{eq:fi-modulation}. We then penalize high-sensitivity allocations via
\begin{narroweq}
\begin{align}
\mathcal{L}_{\mathrm{FI}}
&= \E_{x}\!\left[\sum_{k=1}^K \pi_k(x)\,\widehat{\mathrm{FI}}_k(x)\right].
\label{eq:app-fi-loss}
\end{align}
\end{narroweq}%
This Fisher-informed regularizer averages per-head proxies under the mixture weights $\pi_k(x)$, so heads that are both frequently selected and highly sensitive incur a larger penalty.

We further add two auxiliary regularizers: an energy-based term that discourages excessively large positive energies, and an uncertainty term that shapes predictive entropy:
\begin{narroweq}
\begin{align}
\mathcal{L}_{\mathrm{EBM}}
&= \E_{x}\Big[\mathrm{softplus}\big(\clip(E_\psi(f_\theta(x)),-\tau,\tau)\big)\Big]
+ \mathcal{L}_{\mathrm{EBM}}^{\mathrm{neg}},
\label{eq:app-ebm-loss}\\
\mathcal{L}_{\mathrm{UNC}}
&= \beta_{\mathrm{id}}\,\E_{x}\big[\mathrm{H}\big(\hat{p}(x)\big)\big]
-\beta_{\mathrm{ood}}\,\E_{x_{\mathrm{ood}}}\big[\mathrm{H}\big(\hat{p}(x_{\mathrm{ood}})\big)\big],
\label{eq:app-unc-loss}
\end{align}
\end{narroweq}%
where $\mathrm{H}(\cdot)$ denotes the (Shannon) entropy of the predictive distribution. The uncertainty loss $\mathcal{L}_{\mathrm{UNC}}$ is contrastive: it encourages low entropy for ID samples (first term) and high entropy for OOD samples (second term, subtracted).

We use VOS only for GEM-FI: $\mathcal{L}_{\mathrm{EBM}}^{\mathrm{neg}}$ pushes synthetically generated negative samples toward high-energy regions via a margin-based softplus penalty $\mathrm{softplus}(m - E_{\mathrm{neg}})$ on VOS-synthesized negatives. We treat VOS as an auxiliary boundary-sharpening mechanism within the full GEM-FI pipeline rather than as the core architectural contribution. The same clipping threshold $\tau$ is reused in \eqref{eq:app-ebm-loss} to prevent energy values from reaching numerically unstable magnitudes. Baselines are trained following their standard protocols without VOS.
Putting these components together, the overall training objective for \gemfi{} is:
\begin{narroweq}
\begin{equation}
\mathcal{L}_{\mathrm{GEM}\text{-}\mathrm{FI}}
\;=\; \mathcal{L}_{\mathrm{mix}}
\;+\; \lambda_{\mathrm{FI}}\,\mathcal{L}_{\mathrm{FI}}
\;+\; \lambda_{\mathrm{EBM}}\,\mathcal{L}_{\mathrm{EBM}}
\;+\; \lambda_{\mathrm{UNC}}\,\mathcal{L}_{\mathrm{UNC}}.
\label{eq:app-gemfi-objective}
\end{equation}
\end{narroweq}%

Beyond this loss-level regularization, we also use the Fisher proxy to modulate mixture weights during training. Specifically, we compute a per-head proxy $\widehat{\mathrm{FI}}_k(x)$ as the squared $L_2$ norm of the gradient of the log-likelihood with respect to the logits. To ensure bounded and stable modulation, we normalize the proxies across heads to obtain relative sensitivity scores in $[0,1]$. Let $\tilde{\pi}(x)$ denote the raw softmax output of the router $h_\omega$, and define
\(
\bar{\mathrm{FI}}_k(x)=\widehat{\mathrm{FI}}_k(x)\big/\big(\sum_j \widehat{\mathrm{FI}}_j(x)+\epsilon\big).
\)
During training, we reweight the mixture scores as
\begin{narroweq}
\begin{equation}
\tilde{\pi}_k^{\text{mod}}(x)\;\propto\;
\tilde{\pi}_k(x)\,\exp\!\big(\lambda_{\mathrm{FI}}\,(1-\bar{\mathrm{FI}}_k(x))\big),
\label{eq:fi-modulation}
\end{equation}
\end{narroweq}
and renormalize to the simplex using a small smoothing constant for numerical stability:
\(
\pi_k(x)=\big(\tilde{\pi}_k^{\text{mod}}(x)+\epsilon'\big)\big/\sum_j\big(\tilde{\pi}_j^{\text{mod}}(x)+\epsilon'\big),
\)
where $\epsilon'$ is set to a small value in all experiments (e.g., $10^{-4}$).

This FI-aware modulation upweights heads with \emph{lower} Fisher sensitivity (i.e., more stable predictions) and is applied \emph{only during training}; at inference, mixture weights are computed directly from the router without FI modulation. Empirically, this design stabilizes mixture allocations and reduces head dominance on challenging \OOD{} examples. Intuitively, the regularizer $\sum_k \pi_k(x)\widehat{\mathrm{FI}}_k(x)$ discourages allocating high weight to locally sensitive heads, while \eqref{eq:fi-modulation} enforces the same preference directly at the mixture level during training.

For implementation details and pseudocode aligned with our training pipeline, see Appendix~\ref{app:impl}.

\section{Theoretical Insights}
\label{sec:theory}
We sketch why the proposed components---the bounded, learnable gate in \gemcore{} and the \FI{}-aware mixture in \gemmix{}/\gemfi{}---can smooth confidence, encourage distance-informed behavior, and stabilize mixture allocations.

\subsection{Confidence smoothing via bounded energy-to-gate mapping}
\label{sec:theory:smoothing}
Intuitively, if the backbone, energy head, and integration gate are smooth and the final gate is bounded away from $0$ and $1$, then the evidential outputs inherit this smoothness: nearby inputs cannot induce arbitrarily large changes in evidence or predictive confidence.

\begin{assumption}[Lipschitz components]
\label{asm:lipschitz}
Assume:
\begin{itemize}[leftmargin=*]
\item The backbone $f_\theta$ is $L_f$-Lipschitz: $\|f_\theta(x)-f_\theta(x')\|\le L_f\|x-x'\|$.
\item The classifier head $g_\phi$ is $L_g$-Lipschitz.
\item The energy head $E_\psi$ is $L_E$-Lipschitz.
\item The integration gate $G_\eta([z,\hat{s}])$ is $L_G$-Lipschitz in $(z,\hat{s})$ and outputs gates in $[s_{\min},s_{\max}]$ with $0<s_{\min}<s_{\max}<1$.
\item The mixture router $h_\omega$ and density scaler $\rho(z)$ are Lipschitz continuous.
\end{itemize}
\end{assumption}

In practice, these smoothness assumptions are supported by the use of spectral normalization throughout the main convolutional and linear mappings in the backbone and the auxiliary gating pathway. Since spectral normalization controls the operator norm of each layer, the growth of the composed mapping is correspondingly constrained, making \Cref{asm:lipschitz} a practically motivated approximation rather than a purely abstract idealization. We do not claim that spectral normalization alone proves all global smoothness properties of the full network, but it provides an explicit architectural mechanism that supports the intended bounded-growth behavior used in this analysis.

Since $\hat{s}(x)=\sigma(E_\psi(z))$ is a smooth bounded mapping and $\sigma$ is 1-Lipschitz, $\hat{s}$ is $L_E$-Lipschitz by composition. Combining this with $G_\eta$ yields an $L_s$-Lipschitz gate $s(x)$ for some finite $L_s$.

\begin{proposition}[Smoothness of probability-level gating]
Under \Cref{asm:lipschitz}, suppose $p_{\mathrm{mix}}(x)\in\Delta^{C-1}$ is locally Lipschitz in $x$ and the per-class gate satisfies
$s(x)\in[s_{\min},s_{\max}]^C$ with $0<s_{\min}\le s_{\max}\le 1$.
Then the probability-level gated prediction
\(
\hat p(x)=\mathrm{Normalize}\!\big(p_{\mathrm{mix}}(x)\odot s(x)\big)
\)
is locally Lipschitz. In particular, there exists $L_{\hat p}>0$ such that for sufficiently close $x,x'$,
\begin{equation}
\label{eq:lipschitz_bound}
\|\hat p(x)-\hat p(x')\|
\le
L_{\hat p}\,\|x-x'\|.
\end{equation}
\end{proposition}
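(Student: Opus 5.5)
We view the gated prediction as a composition $\hat p(x)=N\big(p_{\mathrm{mix}}(x),s(x)\big)$ with
\[
N(p,s)=\frac{p\odot s}{\mathbf{1}^\top(p\odot s)} .
\]
The plan has three steps. First, the map $x\mapsto(p_{\mathrm{mix}}(x),s(x))$ is locally Lipschitz. Second, $N$ is globally Lipschitz on the compact set $K=\Delta^{C-1}\times[s_{\min},s_{\max}]^C$. Third, a composition of Lipschitz maps is Lipschitz, with constant equal to the product of the constants.

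\textbf{Step 1.} Local Lipschitzness of $p_{\mathrm{mix}}$ is assumed. For the gate, we use the discussion following \Cref{asm:lipschitz}. There, $\hat s=\sigma\circ E_\psi\circ f_\theta$ is $\tfrac14 L_E L_f$-Lipschitz, which is certainly at most $L_E L_f$. The map $x\mapsto[z,\hat s(x)]$ is then Lipschitz with constant at most $L_f\sqrt{1+L_E^2}$. Composing with $G_\eta$ shows that $s$ is $L_s$-Lipschitz with $L_s\le L_G L_f\sqrt{1+L_E^2}$. Finally, the concatenated map satisfies
\[
\|(p_{\mathrm{mix}},s)(x)-(p_{\mathrm{mix}},s)(x')\|\le (L_p+L_s)\|x-x'\|
\]
for $x'$ in a neighbourhood of $x$, where $L_p$ is the local constant of $p_{\mathrm{mix}}$.

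\textbf{Step 2 (the main point).} The obstacle is controlling the normalizing denominator $D(p,s)=\sum_c p_c s_c$. On $K$ we have
\[
D\ge s_{\min}\sum_c p_c=s_{\min}>0 .
\]
This is exactly where the lower bound on the gate and membership of $p_{\mathrm{mix}}$ in the simplex are used. The bound keeps $N$ away from the singularity at $D=0$.

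With this in hand, we bound $N$ directly. Write $a=p\odot s$, $b=p'\odot s'$, $D_a=\mathbf{1}^\top a$ and $D_b=\mathbf{1}^\top b$. Then
\[
\Big\|\frac{a}{D_a}-\frac{b}{D_b}\Big\|\le \frac{\|a-b\|}{D_a}+\|b\|\,\frac{|D_a-D_b|}{D_aD_b}.
\]
We have $\|b\|\le s_{\max}$, $|D_a-D_b|\le\sqrt C\|a-b\|$, and $\|a-b\|\le s_{\max}\|p-p'\|+\|s-s'\|$, using $\|p\|_\infty\le1$. Combining these gives a constant
\[
L_N=\frac{1}{s_{\min}}\Big(1+\frac{\sqrt C\,s_{\max}}{s_{\min}}\Big)\max(s_{\max},1).
\]
As an alternative, one could bound the Jacobian of $N$ uniformly on a convex neighbourhood of $K$ where $D\ge s_{\min}/2$, and apply the mean value theorem. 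The direct estimate above avoids needing that convexity argument.

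\textbf{Step 3.} Composing the two steps gives $L_{\hat p}=L_N(L_p+L_s)$ on a neighbourhood of each $x$. This yields \eqref{eq:lipschitz_bound} for sufficiently close $x$ and $x'$.

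The bound is global whenever $p_{\mathrm{mix}}$ is globally Lipschitz. That holds, for instance, because the clipping in \eqref{eq:app-gemmix-alpha} bounds the head concentrations, and the softmax router and the spectrally normalized heads are Lipschitz; one can check this by the same quotient argument, using $\alpha_0^{(k)}\ge Ce^{-\tau}$. We only need the local statement, however.
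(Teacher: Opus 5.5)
Your proof is correct and follows the paper's own argument: view $\hat p$ as the normalization of the Lipschitz product $p_{\mathrm{mix}}\odot s$, and use $\mathbf{1}^\top(p_{\mathrm{mix}}\odot s)\ge s_{\min}$ to keep that normalization Lipschitz; your direct quotient estimate simply makes $L_{\hat p}$ explicit where the paper appeals to smoothness of $v\mapsto v/\mathbf{1}^\top v$. One caveat on your closing (unneeded) remark: once the density scaler $\rho(z)$ multiplies the head concentrations as in the implementation, the floor $\alpha_0^{(k)}\ge Ce^{-\tau}$ no longer holds, although the $\epsilon$ offset still bounds the denominators below, so a very large but finite global constant remains.
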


\begin{proof}[Sketch]
Both $p_{\mathrm{mix}}(x)$ and $s(x)$ are locally Lipschitz by assumption and construction, hence their elementwise product is locally Lipschitz. The normalization map $v\mapsto v/(\mathbf{1}^\top v)$ is smooth wherever $\mathbf{1}^\top v$ is bounded away from zero. Here, for $v(x)=p_{\mathrm{mix}}(x)\odot s(x)$ we have
$\mathbf{1}^\top v(x)=\sum_c p_{\mathrm{mix},c}(x)s_c(x)\ge s_{\min}\sum_c p_{\mathrm{mix},c}(x)=s_{\min}$,
so the denominator is uniformly positive in a neighborhood. Therefore, $\hat p(x)$ is locally Lipschitz as a composition of locally Lipschitz maps with a smooth normalization.
\end{proof}

\subsection{Distance-informed monotonicity (qualitative calibration)}
\label{sec:theory:monotone}
Beyond local smoothness, we would like confidence to decay as we move away from high-support regions in representation space. Our design couples evidence to a learned energy, which can serve as a lightweight control signal correlated with representation-level support. While we do not impose hard monotonicity constraints or assume an explicit density model, the following idealized picture clarifies the role of the gate. Additional support-conditioned diagnostics are provided in Appendix~\ref{app:support_diagnostics} (Figs.~\ref{fig:supp-uncert}--\ref{fig:supp-energy}).

\begin{assumption}[Energy--support alignment (empirical)]
\label{asm:alignment}
There exists a representation-level support surrogate $\rho(z)$ such that, on average, higher energy $E_\psi(z)$ is associated with lower support $\rho(z)$ (empirical anti-correlation). We treat $\rho(z)$ as a generic support indicator; the analysis requires only that it is monotonically related to feature support and does not rely on $\rho$ being a calibrated density model. In our implementation, we use a GMM-based estimator $\rho(z) = \sigma(\log p_{\text{GMM}}(z))^{\gamma}$, where $p_{\text{GMM}}$ is fit to ID training features.
We introduce an \emph{energy pre-gate} $\hat{s}_E(x)=\sigma(E_\psi(z))$, which is monotonically increasing in energy, as an intermediate summary of the energy signal. The integration network $G_\eta$ maps $[z,\hat{s}_E(x)]$ to per-class gates $s(x)\in[s_{\min},s_{\max}]^C$. We do \emph{not} enforce a hard monotonic relationship between $E_\psi$ and the final gates; instead, the model learns end-to-end to suppress evidence in lower-support regions. This behavior is a learned outcome of training, not an architectural constraint.
\end{assumption}

In a simplified single-head setting with logits $u(x)$ and gate $s(x)$, define the top-class margin $m(z)=u_y(z)-\max_{c\neq y}u_c(z)$.

\begin{proposition}[Monotone suppression away from the support (idealized)]
\label{prop:monotone_suppression}
Under \Cref{asm:alignment}, as we move away from the support, (i) the total evidence $\alpha_0(x)$ weakly decreases due to decay of the density scaler $\rho(z)$, and (ii) if the logit margin $m(z)$ does not increase fast enough to compensate for the shrinking gate $s(x)$, the top-class confidence $p_y(x)$ also weakly decreases.
\end{proposition}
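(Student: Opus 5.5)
The plan is to make the idealized setting explicit and then handle parts (i) and (ii) separately. We parametrize "moving away from the support" by a path $t\mapsto z(t)$ in feature space along which the support surrogate $\rho(z(t))$ is non-increasing; by \Cref{asm:alignment} the energy $E_\psi(z(t))$ is, on average, non-decreasing, and so is the pre-gate $\hat{s}_E$. In the single-head setting we use the density-scaled concentrations of \eqref{eq:app-gemcore-bounded-scaling}, $\alpha_c=\rho(z)\exp(\tilde u_c)+\epsilon$. We also use the gated prediction $\hat p=\mathrm{Normalize}(p\odot s)$ from \eqref{eq:app-prob-gating}. Since the gate direction is a learned outcome, we state it as an idealization: along the path, the gate on the top class $s_y$ is non-increasing relative to the competing gates $s_c$.

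For (i), we write $\alpha_0(x)=\rho(z)\sum_c\exp(\tilde u_c(x))+C\epsilon$. The clip keeps $\sum_c\exp(\tilde u_c)$ inside $[Ce^{-\tau},Ce^{\tau}]$, so $\alpha_0$ is controlled by $\rho$, up to a bounded logit factor. The claim is then weak: if the clipped logit mass does not grow along the path, monotonicity of $\rho$ gives a non-increasing $\alpha_0$ directly. In the saturated regime, where all logits are clipped, this holds exactly. Otherwise we obtain the two-sided sandwich $\rho(z)Ce^{-\tau}\le\alpha_0-C\epsilon\le\rho(z)Ce^{\tau}$, so $\alpha_0\to C\epsilon$ as $\rho\to0$. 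We would present that sandwich as the precise content of "weakly decreases."

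For (ii), we use the fact that $\rho$ cancels in $p_c=\alpha_c/\alpha_0$, up to $\epsilon$. Writing $\hat p_y$ in odds form gives
\[
\frac{1-\hat p_y}{\hat p_y}=\sum_{c\neq y}\frac{s_c}{s_y}\,e^{\tilde u_c-\tilde u_y}.
\]
Each term is bounded below by $\frac{s_c}{s_y}e^{-\tilde m}$, where $\tilde m$ is the clipped margin. So $\hat p_y$ is non-increasing exactly when the increase in the gate ratios $s_c/s_y$ dominates any increase in $e^{\tilde m}$. For a single competitor this reads $\log(s_c/s_y)-\tilde m$ non-decreasing, and the multi-class case follows term by term. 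This is precisely the hypothesis that "the margin does not increase fast enough to compensate for the shrinking gate," so (ii) reduces to a one-line monotonicity of a logistic-type function.

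The main obstacle is conceptual rather than computational. As stated, the proposition depends on the gate shrinking in a class-discriminating way: a uniform scalar gate cancels under renormalization and cannot change $\hat p_y$. We therefore have to formalize the shrinking gate as a decrease of $s_y$ relative to the $s_c$. We also have to make clear that this, together with the energy--support alignment, is an assumption about the learned model and not something derived from it. We expect to state both explicitly as hypotheses of the idealization, and to note the $\epsilon$ corrections as negligible.
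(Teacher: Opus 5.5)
Your part (i) is essentially the paper's argument. The sketch likewise writes $\alpha_0=\rho(z)\sum_c e^{\tilde u_c}+C\epsilon$ and lets $\rho\to 0$ to get $\alpha_0\to C\epsilon$. You sharpen it in two ways:
\begin{itemize}
\item The paper makes this conditional on the logit terms not growing ``exponentially faster'' than $\rho$ decays. You note that the clip already confines $\sum_c e^{\tilde u_c}$ to $[Ce^{-\tau},Ce^{\tau}]$, so the limit holds unconditionally.
\item You state openly that actual monotonicity still needs a non-growing clipped logit mass, since the sandwich has slack $e^{2\tau}$. The paper's sketch moves from the limit to ``decreases'' without saying this.
\end{itemize}
For part (ii) the paper's sketch gives no argument at all, so your odds identity $\frac{1-\hat p_y}{\hat p_y}=\sum_{c\neq y}\frac{s_c}{s_y}e^{\tilde u_c-\tilde u_y}$ is genuinely additional. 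Its main payoff is your observation that, under the probability-space gate \eqref{eq:app-prob-gating}, a uniformly shrinking $s(x)$ cancels on renormalization. Hence (ii) has content only if ``shrinking gate'' means $s_y$ shrinking relative to the $s_c$. That is the right formalization for this architecture, and it is a hidden hypothesis the paper's statement does not make explicit.

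Three details need fixing.
\begin{itemize}
\item Your inequality is reversed. Since $\tilde m=\tilde u_y-\max_{c\neq y}\tilde u_c$, each term satisfies $\frac{s_c}{s_y}e^{\tilde u_c-\tilde u_y}\le\frac{s_c}{s_y}e^{-\tilde m}$, with equality only for the runner-up $c^\ast$. The odds therefore lie between $\frac{s_{c^\ast}}{s_y}e^{-\tilde m}$ and $\sum_{c\neq y}\frac{s_c}{s_y}e^{-\tilde m}$.
\item Consequently, ``exactly when'' and ``precisely the hypothesis'' are correct only for $C=2$. For $C>2$, a condition on the top-class margin $m$ alone only confines the odds to that window. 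Your term-by-term argument needs the compensation condition for every competitor: $\log(s_c/s_y)+\tilde u_c-\tilde u_y$ non-decreasing for each $c\neq y$. State it that way.
\item $\epsilon$ is not negligible in exactly the regime described by (i). Once $\rho e^{\tilde u_c}$ is comparable to $\epsilon$, $\rho$ no longer cancels in $p_c$, and $p$ tends to the uniform distribution. This does not break (ii): the odds terms become $\frac{s_c}{s_y}\cdot\frac{\rho e^{\tilde u_c}+\epsilon}{\rho e^{\tilde u_y}+\epsilon}$, and the second factor is non-increasing in $\rho$ whenever $\tilde u_c\le\tilde u_y$. Decay of $\rho$ therefore only pushes $\hat p_y$ further down. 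It is, however, a second suppression channel that you should state rather than discard.
\end{itemize}
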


\begin{proof}[Sketch]
The total evidence is $\alpha_0(x) = \sum_c (\rho(z)\exp(\tilde{u}_c) + \epsilon) \approx \rho(z)\sum_c \exp(\tilde{u}_c)$. Under \Cref{asm:alignment}, as we move away from the support, the density proxy $\rho(z)$ decays toward zero. If the logit terms $\exp(\tilde{u}_c)$ do not grow exponentially faster than $\rho(z)$ decays, then $\rho(z)\exp(\tilde{u}_c)$ vanishes, leading to $\alpha_0(x) \to C\epsilon$ (minimal evidence). Thus, total evidence decreases in low-support regions.
\end{proof}

We emphasize that \Cref{prop:monotone_suppression} provides only a conditional guarantee under the stated assumptions; the empirical energy--support alignment should therefore be interpreted as a consistent empirical regularity rather than an architectural invariant.

Full details of the experimental setup are provided in \Cref{app:exp_setup}.

\section{Experiments}
\label{sec:experiments}

We evaluate the \gem{} family on classification, calibration, \OOD{} detection, and robustness under distribution shift. Our experiments are designed to answer the following questions:
\begin{itemize}
  \item[\emph{Q1.}] How do \gem{} models compare to \EDL{} and \DAEDL{} in terms of \OOD{} detection?
  \item[\emph{Q2.}] Do \gemmix{} and \gemfi{} preserve or improve \ID{} accuracy and confidence calibration?
  \item[\emph{Q3.}] What is the contribution of the energy gate, mixture-of-beliefs, and \FI{} regularization?
\end{itemize}

We first evaluate \OOD{} detection performance (Q1) and report \AUPR{} scores for aleatoric and epistemic uncertainty across common ID$\rightarrow$OOD shifts (Table~\ref{tab:table2}). Next, we assess \ID{} accuracy, confidence calibration, and misclassification detection (Q2) on CIFAR-10 (Table~\ref{tab:table3}). Finally, to verify the necessity of each design choice, we provide ablations over gating, mixture, and \FI{} stabilization (Q3; Table~\ref{tab:table5}).

Unless otherwise stated, all results are averaged over five random seeds (0, 1, 2, 3, 42) and reported as mean~$\pm$~standard deviation.

\subsection{\OOD{} Detection}
\label{sec:exp:ood}

To address Q1, we evaluate \OOD{} detection on four benchmark pairs:
MNIST$\rightarrow$KMNIST,
MNIST$\rightarrow$FMNIST,
CIFAR-10$\rightarrow$SVHN, and
CIFAR-10$\rightarrow$CIFAR-100.
These cover digit-domain grayscale shifts and natural-image shifts, from cross-domain (CIFAR-10$\rightarrow$SVHN) to fine-grained, near-\OOD{} settings (CIFAR-10$\rightarrow$CIFAR-100).

Table~\ref{tab:table2} summarizes \OOD{} detection \AUPR{} under several scoring rules, including a simple aleatoric score (\MaxP{}) and epistemic-leaning scores for evidential models.

Across digit-domain shifts, all \gem{} variants achieve near-ceiling \AUPR{} and slightly improve over \DAEDL{} and earlier baselines, with \gemmix{} and \gemfi{} yielding the strongest epistemic scores. On natural-image benchmarks, \gemfi{} attains the best \AUPR{} on CIFAR-10$\rightarrow$SVHN, surpassing both \DAEDL{} and Re-EDL.

CIFAR-10$\rightarrow$CIFAR-100 remains the most challenging setting: \gemfi{} achieves 90.30\% aleatoric \AUPR{}, outperforming \DAEDL{} (88.16\%) and Re-EDL (87.57\%). This near-\OOD{} shift shares low-level statistics and semantic structure with \ID{}, so density cues can remain high and uncertainty separation is harder than in far-\OOD{} shifts (e.g., SVHN), which have distinct textures and color distributions. Thus, \gem{} performs well in both far-\OOD{} detection and challenging near-\OOD{} scenarios.

Additional results, including \AUROC{}, are reported in Appendix~\ref{app:ood_auroc}. We also report \AUPR{} based on total evidence $\alpha_0$ and mixture-aware proxies such as \MI{}, with energy and predictive entropy as reference metrics. Results for CIFAR-10$\rightarrow$TinyImageNet are provided in Appendix~\ref{app:ood_dataset_shift} (Table~\ref{tab:ood_aupr_auroc}).

\begin{table*}[t]
\caption{
    {\AUPR{}} scores of \textsc{OOD} detection based on aleatoric and epistemic uncertainty.
$A \rightarrow B$ denotes that $A$ is used as the \textsc{ID} dataset and $B$ as the \textsc{OOD} dataset.
}
\label{tab:table2}
\centering
\compacttablesetup
\begin{tabular}{l l cccccccc}
\toprule
& & \multicolumn{2}{c}{MNIST $\rightarrow$ KMNIST} &
  \multicolumn{2}{c}{MNIST $\rightarrow$ FMNIST} &
  \multicolumn{2}{c}{CIFAR-10 $\rightarrow$ SVHN} &
  \multicolumn{2}{c}{CIFAR-10 $\rightarrow$ CIFAR-100} \\
\cmidrule(lr){3-4}\cmidrule(lr){5-6}\cmidrule(lr){7-8}\cmidrule(lr){9-10}
Method & Venue & Alea.$\uparrow$ & Epis.$\uparrow$ & Alea.$\uparrow$ & Epis.$\uparrow$ & Alea.$\uparrow$ & Epis.$\uparrow$ & Alea.$\uparrow$ & Epis.$\uparrow$ \\
\midrule

DROPOUT 
& ICML16
& 94.00 $\pm$ 0.10 & -- & 96.56 $\pm$ 0.20 & -- & 51.39 $\pm$ 0.10 & -- & 45.57 $\pm$ 1.00 & -- \\

KL-PN 
& NeurIPS18
& 92.97 $\pm$ 1.20 & 93.39 $\pm$ 1.00
& 98.14 $\pm$ 0.80 & 98.16 $\pm$ 0.00
& 43.96 $\pm$ 1.90 & 43.23 $\pm$ 2.30
& 61.41 $\pm$ 2.80 & 61.53 $\pm$ 3.40 \\

EDL 
& NeurIPS18
& 97.02 $\pm$ 0.80 & 96.31 $\pm$ 2.00
& 98.10 $\pm$ 0.40 & 97.84 $\pm$ 0.40
& 78.87 $\pm$ 3.50 & 79.32 $\pm$ 1.70
& 84.30 $\pm$ 0.70 & 84.80 $\pm$ 1.00 \\

RKL-PN 
& NeurIPS19
& 60.76 $\pm$ 2.90 & 53.76 $\pm$ 3.40 
& 78.45 $\pm$ 3.10 & 72.18 $\pm$ 3.60
& 53.61 $\pm$ 1.10 & 49.37 $\pm$ 0.80
& 55.42 $\pm$ 2.60 & 54.74 $\pm$ 2.80 \\

POSTNET 
& NeurIPS20
& 95.75 $\pm$ 0.20 & 94.59 $\pm$ 0.30
& 97.72 $\pm$ 0.20 & 97.24 $\pm$ 0.20
& 80.21 $\pm$ 0.20 & 77.71 $\pm$ 0.40
& 81.96 $\pm$ 0.80 & 82.06 $\pm$ 0.80 \\

\textit{I}-EDL 
& ICML23
& 98.34 $\pm$ 0.20 & 98.33 $\pm$ 0.20
& 98.86 $\pm$ 0.30 & 98.86 $\pm$ 0.30
& 86.32 $\pm$ 2.40 & 85.92 $\pm$ 2.30
& 85.55 $\pm$ 0.70 & 84.84 $\pm$ 0.60 \\

DAEDL 
& ICML24
& 99.90 $\pm$ 0.00 & 99.92 $\pm$ 0.00
& 99.83 $\pm$ 0.00 & 99.87 $\pm$ 0.00
& 85.50 $\pm$ 1.40 & 85.54 $\pm$ 1.40
& {88.16 $\pm$ 0.10} & {88.19 $\pm$ 0.10} \\

R-EDL 
& ICLR24
& -- & 98.69 $\pm$ 0.20
& -- & 99.29 $\pm$ 0.12
& 85.00 $\pm$ 1.22 & 85.00 $\pm$ 1.22
& {87.72 $\pm$ 0.31} & 87.73 $\pm$ 0.31 \\

CEDL+
& ESWA25
& {99.88 $\pm$ 0.07} & 99.89 $\pm$ 0.07
& {98.17 $\pm$ 0.01} & {98.10 $\pm$ 0.01}
& {89.30 $\pm$ 0.34} & {89.16 $\pm$ 0.65}
& 79.57 $\pm$ 0.57 & 77.16 $\pm$ 0.52 \\

LTS 
& MVA25
& 98.17 $\pm$ 0.85 & 99.94 $\pm$ 0.03
& 99.65 $\pm$ 0.12 & 99.80 $\pm$ 0.12
& 78.63 $\pm$ 0.96 & 80.64 $\pm$ 0.88
& 71.23 $\pm$ 0.79 & 85.33 $\pm$ 0.65 \\

Re-EDL 
& TPAMI25
& -- & 99.03 $\pm$ 0.28
& -- & 99.65 $\pm$ 0.09
& 87.84 $\pm$ 0.96 & 89.89 $\pm$ 1.39
& 87.57 $\pm$ 0.23 & \secondbest{88.30 $\pm$ 0.16} \\

\midrule

\oursrow\textsc{GEM-CORE}
&  
& 99.93 $\pm$ 0.01 & 99.90 $\pm$ 0.03
& {99.99 $\pm$ 0.00} & 99.97 $\pm$ 0.01
& \secondbest{89.87 $\pm$ 0.33} & {87.80 $\pm$ 0.15}
& \secondbest{89.35 $\pm$ 0.23} & {84.00 $\pm$ 0.40} \\

\oursrow\gemmix
&  
& \secondbest{99.94 $\pm$ 0.01} & \secondbest{99.94 $\pm$ 0.02}
& \secondbest{99.99 $\pm$ 0.00} & \secondbest{99.96 $\pm$ 0.04}
& 88.80 $\pm$ 0.24 & \secondbest{90.60 $\pm$ 0.23}
& 84.98 $\pm$ 0.10 & 84.46 $\pm$ 0.20 \\

\oursrow\textsc{GEM-FI}
&  
& \textbf{99.95 $\pm$ 0.00} & \textbf{99.96 $\pm$ 0.01}
& \textbf{99.99 $\pm$ 0.00} & \textbf{99.99 $\pm$ 0.00}
& \textbf{91.27 $\pm$ 0.29} & \textbf{92.59 $\pm$ 0.31}
& \textbf{90.30 $\pm$ 0.06} & \textbf{90.20 $\pm$ 0.06} \\
\bottomrule
\end{tabular}
\end{table*}

\paragraph{Precision--Recall and ROC curves.}
\Cref{fig:prroc} compares PR and ROC curves for \OOD{} detection (CIFAR-10 as \ID{}, SVHN as \OOD{}). In the PR view, GEM variants achieve markedly higher precision at moderate-to-high recall than evidential baselines, indicating fewer false alarms at higher coverage: EDL reaches AUPR 78.87, while GEM-CORE and GEM-MIX improve to 93.87 and 93.72, respectively (GEM-FI: 92.59). The ROC curves show a similar trend, with GEM dominating in the low-FPR regime: AUROC increases from 81.06 (EDL) and 89.24 (DAEDL) to 93.65 (GEM-CORE/GEM-FI) and 93.08 (GEM-MIX). These gains are consistent with the separation induced by learned gating (and mixture heads), enabling high-precision detection while preserving single-pass inference compared to multi-pass ensembling. Extended versions of \Cref{fig:prroc} are provided in Appendix~\ref{app:pr-curves} and \ref{app:roc-curves}.

\begin{figure}[t]
\centering
\begin{subfigure}[t]{0.48\linewidth}
  \centering
  \clipbox{0pt 0pt {.5\width} 0pt}{\includegraphics[width=2\linewidth]{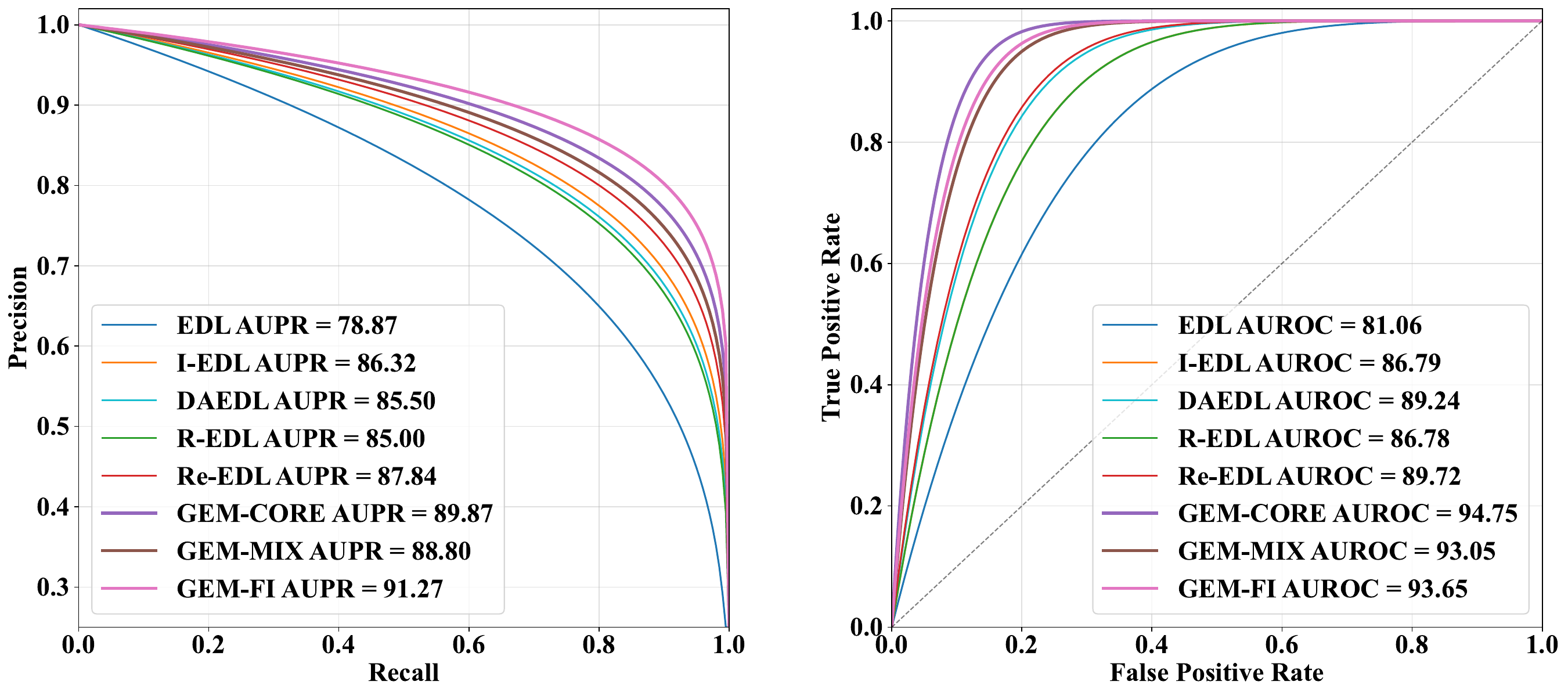}}
  \caption{PR curve}
\end{subfigure}\hfill
\begin{subfigure}[t]{0.48\linewidth}
  \centering
  \clipbox{{.5\width} 0pt 0pt 0pt}{\includegraphics[width=2\linewidth]{pr_roc.pdf}}
  \caption{ROC curve}
\end{subfigure}

\caption{PR and ROC curves for \textsc{OOD} detection on CIFAR-10 (ID) vs.\ SVHN (OOD).}
\label{fig:prroc}
\end{figure}

As an additional stress test, we evaluate \gem{} under common distribution-shift and corruption benchmarks (\Cref{sec:exp:shift}).

\subsection{Image Classification and Confidence Calibration}
\label{sec:exp:classification}
To address Q2, we report \ID{} test accuracy, misclassification-detection \AUPR{}, and the Brier score on CIFAR-10. Table~\ref{tab:table3} summarizes these metrics for posterior-network and evidential baselines, along with our GEM-based models. Among prior methods, \DAEDL{} provides the strongest accuracy--calibration trade-off. Compared to \DAEDL{}, \gemfi{} improves test accuracy from $91.11\%$ to $93.75\%$, reduces the Brier score from $14.27$ to $6.81$, and increases misclassification-detection \AUPR{} from $99.08$ to $99.93$, indicating more accurate predictions with better-calibrated confidence.

For reference, a well-calibrated softmax ResNet-18 on CIFAR-10 (without evidential training) typically yields $\text{Brier}\times 100 \approx 15$--$20$. Our \gemcore{} attains an unusually low $\text{Brier}\times 100 \approx 1.27$ because probability-space gating produces extremely sharp predictions (near-deterministic when correct), which substantially lowers the squared-error term in the Brier score. In contrast, the mixture variants (\gemmix{}, \gemfi{}) are less sharp due to mixture averaging, resulting in Brier values closer to typical ranges; this behavior reflects the gating mechanism rather than a calculation issue. Additional comparisons against classical \TS{} are provided in Appendix~\ref{app:temp_scaling_comparison}.

\begin{table}[t]
\caption{Image classification and confidence calibration on \textsc{CIFAR}-10.}

\label{tab:table3}
\centering
\compacttablesetup
\begin{tabular}{lcccc}
\toprule
 Method & Test Acc.$\uparrow$ & \AUPR{}$\uparrow$ & Brier ($\times 100$)$\downarrow$ \\
\midrule

Dropout 
& 82.84 $\pm$ 0.10 
& 97.15 $\pm$ 0.00 
& 27.15 $\pm$ 0.20 \\

KL-PN 
& 27.46 $\pm$ 1.70 
& 50.61 $\pm$ 4.00 
& 87.28 $\pm$ 1.00 \\

RKL-PN 
& 64.76 $\pm$ 0.30 
& 86.11 $\pm$ 0.40 
& 54.73 $\pm$ 0.40 \\

PostNet 
& 84.85 $\pm$ 0.00 
& 97.76 $\pm$ 0.20 
& 22.84 $\pm$ 0.00 \\

EDL 
& 83.55 $\pm$ 0.60 
& 97.86 $\pm$ 0.30 
& 23.38 $\pm$ 0.20 \\

\textit{I}-EDL 
& 89.20 $\pm$ 0.30 
& 98.72 $\pm$ 0.10 
& 35.20 $\pm$ 0.80 \\

{DAEDL} 
& {91.11 $\pm$ 0.20} 
& {99.08 $\pm$ 0.00} 
& {14.27 $\pm$ 0.20} \\

{CEDL+} 
& {93.07 $\pm$ 0.06} 
& {98.82 $\pm$ 0.01} 
& {15.02 $\pm$ 0.03} \\

{LTS} 
& {93.13 $\pm$ 0.10} 
& {98.87 $\pm$ 0.01} 
& {14.97 $\pm$ 0.03} \\

{R-EDL} 
& {90.09 $\pm$ 0.31} 
& {98.98 $\pm$ 0.05} 
& {18.15 $\pm$ 0.50} \\

{RE-EDL} 
& {90.13 $\pm$ 0.21} 
& {98.81 $\pm$ 0.01} 
& {14.95 $\pm$ 0.47} \\

\midrule

\oursrow\textsc{GEM-CORE}
& \secondbest{93.34 $\pm$ 0.10} 
& {99.87 $\pm$ 0.01} 
& \textbf{1.27 $\pm$ 0.02} \\

\oursrow\gemmix
& {93.27 $\pm$ 0.31} 
& \secondbest{99.93 $\pm$ 0.02} 
& {6.97 $\pm$ 0.03} \\

\oursrow\textsc{GEM-FI}
& \textbf{93.75 $\pm$ 0.36} 
& \textbf{99.94 $\pm$ 0.01} 
& \secondbest{6.81 $\pm$ 0.01} \\
\bottomrule
\end{tabular}
\end{table}

\subsection{Ablation Study}
\label{sec:exp:ablation}

\begin{table*}[t]
\caption{
Ablation on \textsc{CIFAR}-10. $\checkmark$ indicates an enabled component.
}
\label{tab:table5}
\centering
\scriptsize
\renewcommand{\arraystretch}{1.1}

\resizebox{\textwidth}{!}{%
\begin{tabular}{ccccccc cccc cc}
\toprule
\multicolumn{7}{c}{} &
\multicolumn{4}{c}{CIFAR-10 $\rightarrow$ SVHN} &
\multicolumn{2}{c}{CIFAR-10 $\rightarrow$ CIFAR-100} \\
\cmidrule(lr){8-11} \cmidrule(lr){12-13}

SN & CORE & MIX & FI-Reg & FI-Mod & EBM & UNC &
Test Acc.$\uparrow$ & \AUPR{}$\uparrow$ & Alea.$\uparrow$ & Epis.$\uparrow$ &
Alea.$\uparrow$ & Epis.$\uparrow$ \\
\midrule

\xmark & \xmark & \xmark & \xmark & \xmark & \xmark & \xmark
& {83.55 $\pm$ 0.60} & {97.86 $\pm$ 0.20} & {78.87 $\pm$ 3.50} & {79.12 $\pm$ 3.70}
& {84.30 $\pm$ 0.70} & {84.18 $\pm$ 0.70} \\

\checkmark & \xmark & \xmark & \xmark & \xmark & \xmark & \xmark
& {91.00 $\pm$ 0.40} & {99.20 $\pm$ 0.10} & {85.50 $\pm$ 1.50} & {85.20 $\pm$ 1.60}
& {87.00 $\pm$ 0.50} & {86.50 $\pm$ 0.55} \\

\checkmark & \checkmark & \xmark & \xmark & \xmark & \xmark & \xmark
& {93.34 $\pm$ 0.10} & {{99.87 $\pm$ 0.01}} & {89.87 $\pm$ 0.33} & {87.80 $\pm$ 0.15}
& {\secondbest{89.35 $\pm$ 0.23}} & {84.00 $\pm$ 0.40} \\

\checkmark & \checkmark & \checkmark & \xmark & \xmark & \xmark & \xmark
& {93.27 $\pm$ 0.31} & {\secondbest{99.93 $\pm$ 0.02}} & {88.80 $\pm$ 0.24} & {90.60 $\pm$ 0.23}
& {84.98 $\pm$ 0.10} & {84.46 $\pm$ 0.20} \\

\checkmark & \checkmark & \checkmark & \checkmark & \xmark & \xmark & \xmark
& {93.40 $\pm$ 0.15} & {89.68 $\pm$ 0.30} & {\secondbest{93.09 $\pm$ 0.40}} & {87.78 $\pm$ 0.50}
& {85.01 $\pm$ 0.35} & {80.14 $\pm$ 0.45} \\

\checkmark & \checkmark & \checkmark & \xmark & \checkmark & \xmark & \xmark
& {93.50 $\pm$ 0.12} & {87.35 $\pm$ 0.25} & {90.60 $\pm$ 0.35} & {75.38 $\pm$ 0.55}
& {84.25 $\pm$ 0.40} & {71.77 $\pm$ 0.50} \\

\checkmark & \checkmark & \checkmark & \checkmark & \checkmark & \xmark & \xmark
& {93.60 $\pm$ 0.10} & {84.42 $\pm$ 0.20} & {90.50 $\pm$ 0.30} & {75.01 $\pm$ 0.45}
& {84.11 $\pm$ 0.30} & {72.03 $\pm$ 0.40} \\

\checkmark & \checkmark & \checkmark & \checkmark & \checkmark & \checkmark & \xmark
& {\secondbest{93.70 $\pm$ 0.08}} & {91.16 $\pm$ 0.15} & {\textbf{93.95 $\pm$ 0.35}} & {\textbf{94.93 $\pm$ 0.40}}
& {86.26 $\pm$ 0.25} & {\secondbest{87.37 $\pm$ 0.35}} \\

\checkmark & \checkmark & \checkmark & \checkmark & \checkmark & \checkmark & \checkmark
& {\textbf{93.75 $\pm$ 0.36}} & {\textbf{99.93 $\pm$ 0.01}} & {91.27 $\pm$ 0.29} & {\secondbest{92.59 $\pm$ 0.31}}
& {\textbf{90.30 $\pm$ 0.08}} & {\textbf{90.20 $\pm$ 0.07}} \\

\xmark & \checkmark & \checkmark & \checkmark & \checkmark & \checkmark & \checkmark
& {92.10 $\pm$ 0.25} & {85.92 $\pm$ 0.20} & {92.23 $\pm$ 0.40} & {78.64 $\pm$ 0.50}
& {83.80 $\pm$ 0.35} & {72.44 $\pm$ 0.45} \\

\bottomrule
\end{tabular}%
}
\end{table*}

Finally, Q3 examines the contribution of each \gemfi{} component on CIFAR-10 and its standard \OOD{} pairs. Table~\ref{tab:table5} reports an ablation over seven switches:
(i) spectral normalization (SN);
(ii) the energy-gated evidential module (CORE);
(iii) the mixture of evidential heads (MIX, i.e., \gemmix{});
(iv) \FI{} regularization (FI-Reg);
(v) Fisher-based modulation of mixture weights (FI-Mod);
(vi) the energy-based module (EBM);
and (vii) uncertainty decomposition (UNC). SN alone improves over the baseline, highlighting its stabilizing role for training and calibration. CORE on top of SN yields a further gain, indicating complementary benefits beyond SN. MIX further improves \OOD{} detection, supporting single-pass mixtures for multi-modal epistemic uncertainty. Alone, FI-Reg or FI-Mod can reduce \OOD{} \AUPR{}, suggesting Fisher shaping is most effective with the energy-based mechanism and uncertainty decomposition. This pattern clarifies the intended FI--EBM synergy in our design: the FI-based regularization and modulation terms are not meant to operate as standalone mechanisms, but rather to refine routing once the energy-based components have already shaped a meaningful support signal. Without EBM/UNC, the learned energy landscape can remain comparatively weak or noisy, in which case FI-based routing may amplify unstable head assignments instead of improving epistemic separation. Once the energy-based mechanism provides a clearer notion of support, however, FI modulation becomes substantially more effective at preventing head collapse and stabilizing mixture allocation. With EBM and UNC enabled, the full \gemfi{} configuration achieves the strongest overall performance, including improved aleatoric/epistemic separation on CIFAR-10$\rightarrow$SVHN (91.27/92.59) and CIFAR-10$\rightarrow$CIFAR-100 (90.30/90.20). Removing SN from the full model degrades both accuracy and uncertainty quality, emphasizing its stabilizing role during~training. Additional controlled comparisons covering fair VOS exposure, replacing VOS with simple feature-space noise, removing the supplementary density scaler $\rho(z)$, and head-diversity diagnostics are reported in Appendix~\ref{app:fair_vos}, Appendix~\ref{app:vos_rho_ablation}, and Appendix~\ref{app:head_diversity}.

\paragraph{Qualitative evidence geometry.}
Figure~\ref{fig:dirichlet-comp} compares how different evidential formulations can yield distinct uncertainty structures even when they produce identical class predictions. GEM-CORE concentrates evidence sharply around a single mode, resulting in high confidence but limited representation of epistemic alternatives. In contrast, \gemmix{} distributes evidence across multiple mixture components, enabling a multi-modal representation of uncertainty. Finally, GEM-FI regularizes mixture allocations using Fisher information, balancing concentration and diversity to yield smoother Dirichlet geometries and more stable uncertainty estimates. Figure~\ref{fig:tsne-fi} visualizes the feature embeddings learned by \gemfi{}. Compact clustering of \textsc{ID} classes and clear separation from \textsc{OOD} data highlight the effect of Fisher-informed regularization in shaping the latent space. Additional parameter-sensitivity analyses are provided in Appendix~\ref{app:sensitivity_analysis}.

\begin{figure}[t]
  \centering

  \begin{minipage}{0.70\linewidth}
    \centering
    \safeincludegraphics[width=\linewidth]{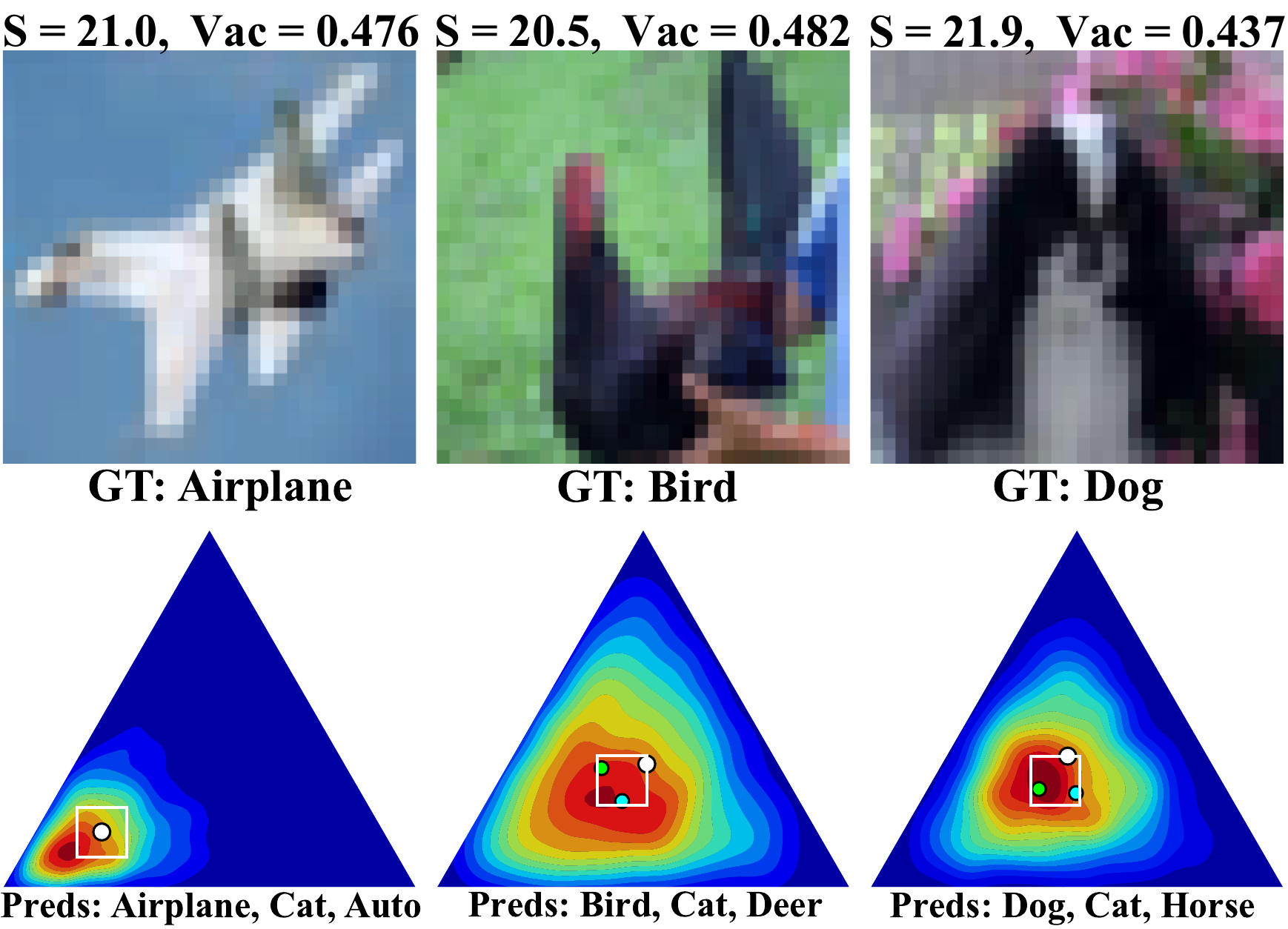}

    \vspace{0.3em}

    \noindent\makebox[\linewidth]{%
      \begin{minipage}[t]{0.333\linewidth}\centering
        {\scriptsize GEM-CORE}
      \end{minipage}%
      \begin{minipage}[t]{0.333\linewidth}\centering
        {\scriptsize \gemmix}
      \end{minipage}%
      \begin{minipage}[t]{0.333\linewidth}\centering
        {\scriptsize GEM-FI}
      \end{minipage}%
    }
  \end{minipage}

  \caption{Comparison of three \textsc{GEM} variants on \textsc{CIFAR}-10 test images.
The input is shown at the top of each column, and the induced Dirichlet distribution is visualized on the probability simplex below. Annotations $S$ (total evidence) and $V_{\mathrm{ac}}$ (vacuity) summarize the resulting uncertainty geometry.}

  \label{fig:dirichlet-comp}
\end{figure}

\begin{figure}[t]
  \centering

  \begin{subfigure}[b]{0.31\columnwidth}
    \centering
    \safeincludegraphics[width=\linewidth]{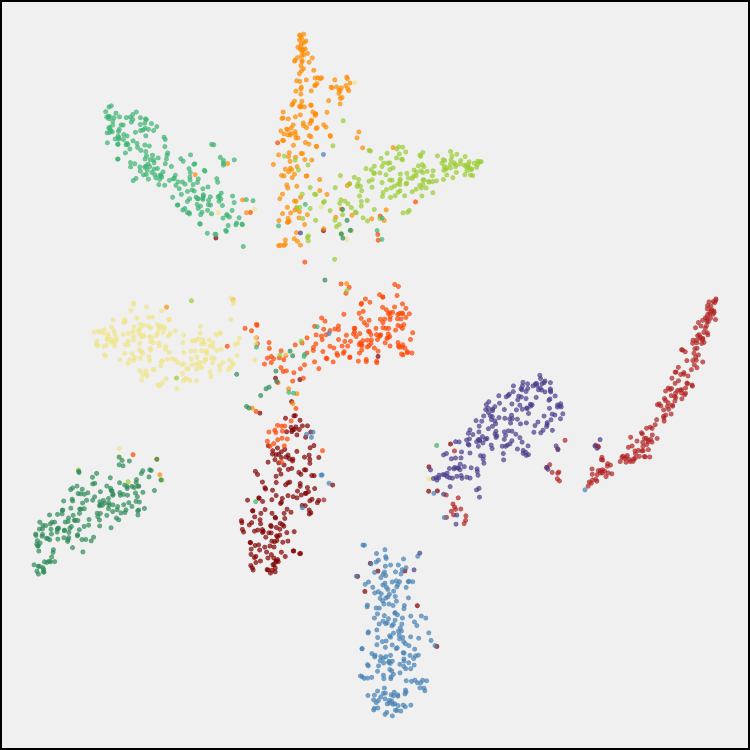}
    \caption{Before Norm.}
    \label{fig:tsne-before}
  \end{subfigure}%
  \hfill
  \begin{subfigure}[b]{0.31\columnwidth}
    \centering
    \safeincludegraphics[width=\linewidth]{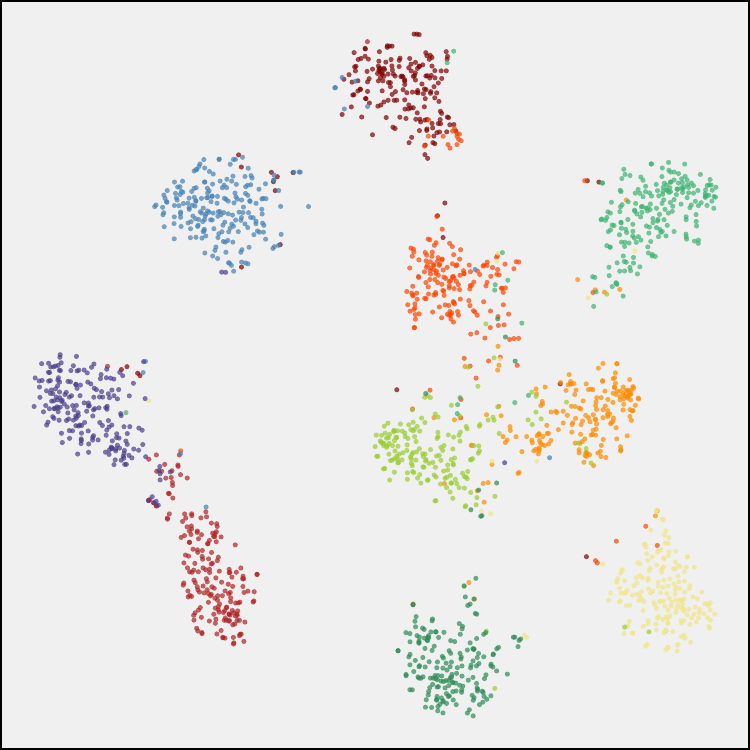}
    \caption{After Norm.}
    \label{fig:tsne-after}
  \end{subfigure}%
  \hfill
  \begin{subfigure}[b]{0.31\columnwidth}
    \centering
    \safeincludegraphics[width=\linewidth]{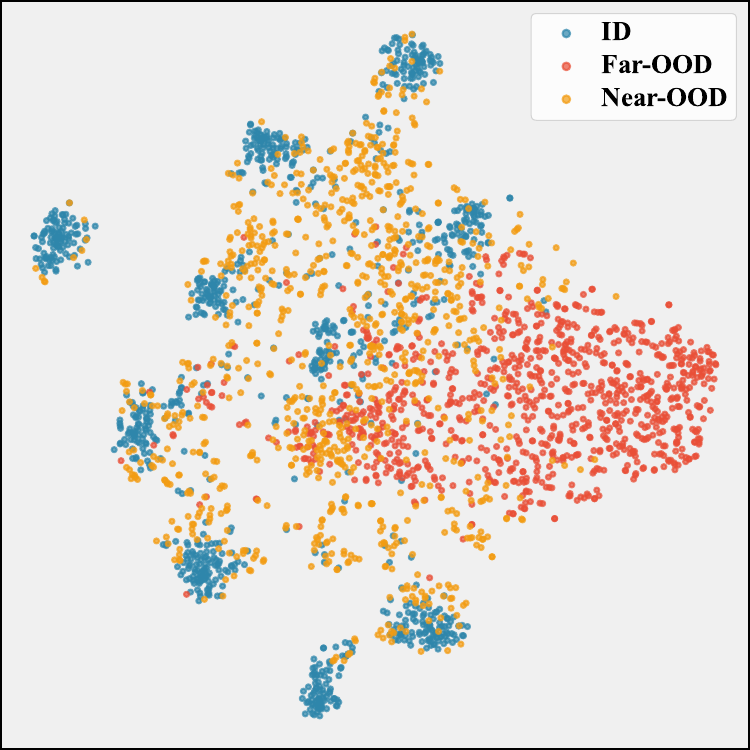}
    \caption{ID vs OOD}
    \label{fig:tsne-id-ood}
  \end{subfigure}

  \caption{t-SNE visualization of feature embeddings for \gemfi{}.}
  \label{fig:tsne-fi}
\end{figure}

\section{Conclusion}
\label{sec:conclusion}
We presented \gem{}, a family of single-pass, distance-informed evidential models. \gemcore{} learns an in-model energy-to-gate mapping that directly modulates class probabilities via probability-space gating; \gemmix{} extends this design with a lightweight belief mixture to capture multi-modal epistemic structure; and \gemfi{} stabilizes mixture allocations via a \FI{}-informed regularizer and \FI{}-based modulation of mixture weights. Across MNIST/CIFAR-10, common \OOD{} pairs, and corruption suites, \gem{}-based models consistently improve calibration and strengthen \OOD{} separation relative to a \DAEDL{}-style baseline, while preserving single-pass inference.

\paragraph{Limitations and future work.}
\gem{} uses learned energy as an internal control signal for evidence gating, not as a calibrated estimator of representation-level support. Accordingly, alignment between energy and any support proxy (e.g., $k$NN distance) is empirical and may vary across regimes, architectures, and datasets; we provide no monotonicity guarantee. This limits interpreting energy as a direct support measure and suggests that stronger guarantees may require explicit support modeling or additional regularization. Near-\OOD{} shifts with high semantic/visual overlap are intrinsically harder and often yield weaker uncertainty separability, leaving less headroom for any method. Finally, mixture routing introduces hyperparameter sensitivity and modest compute overhead, motivating more robust routing and tighter calibration guarantees (see Appendix~\ref{app:fail_compute}). Our empirical evidence is currently concentrated on standard MNIST/CIFAR-style benchmarks and associated corruption/OOD protocols, so our claims are intentionally scoped to those settings rather than to ImageNet-scale evaluation.

\section*{Impact Statement}
This paper studies uncertainty estimation for deep classification via single-pass evidential models.
Improved calibration and more reliable \OOD{} detection can positively impact
safety-critical deployments by helping systems abstain or defer when inputs are unsupported by the
training data, thereby reducing overconfident failures. Potential risks include inappropriate
over-reliance on uncertainty scores as a substitute for domain-specific validation, as well as
misuse in high-stakes settings (e.g., surveillance or automated decision-making) where errors or
dataset biases can cause harm. To mitigate these risks, we recommend reporting calibration and
OOD metrics under multiple shifts, auditing performance across relevant subpopulations, and
communicating uncertainty as one component in a broader human-in-the-loop decision process.

\IfFileExists{icml2026.bst}{\bibliographystyle{icml2026}}{\bibliographystyle{plainnat}}
\begingroup
  \sloppy
  \bibliography{references}
\endgroup

\abbrmarkusedlist{BNN,MCD,EDL,DAEDL,GDA,ID,OOD,FI,GEMMIX,ECE,NLL}

\FloatBarrier

\appendix

\abbrmarkusedlist{BNN,MCD,EDL,DAEDL,GDA,ID,OOD,FI,ECE,NLL,AUPR,AUROC,MI,TS,CNN,MLP,MaxP,gem,GEMMIX}
\section{Code and Reproducibility}

\noindent Code and reproduction instructions are available at:
\url{https://github.com/Marcorazhan/GEM-FI}.

\section{Notation Summary}
\label{app:notation}

Table~\ref{tab:notation-summary} collects the main symbols used in Sections~2.1--2.3 and fixes the distinction between single-head, per-head, gating, and routing quantities.

\begin{table*}[t]
\centering
\caption{Notation summary for the main method sections. Single-head quantities are used by \EDL{}/\gemcore{}; superscript $(k)$ denotes the corresponding quantity for mixture head $k$ in \gemmix{}/\gemfi{}.}
\label{tab:notation-summary}
\small
\setlength{\tabcolsep}{6pt}
\renewcommand{\arraystretch}{1.12}
\begin{tabular}{ll}
\toprule
Symbol & Meaning \\
\midrule
$x, y, C$ & input, class label, and number of classes \\
$f_\theta(x)=z \in \mathbb{R}^d$ & shared backbone and resulting feature embedding \\
$u_c(x),\; \alpha_c(x),\; p_c(x)$ & single-head logit, Dirichlet concentration, and predictive mean for class $c$ \\
$u_c^{(k)}(x),\; \alpha_c^{(k)}(x),\; p_c^{(k)}(x)$ & per-head counterparts for mixture component $k$ \\
$\alpha_0(x),\; \alpha_0^{(k)}(x)$ & total concentration (single-head / head $k$) \\
$E_\psi(z),\; E(x)$ & learned energy head and resulting scalar energy \\
$\hat{s}(x)$ & intermediate scalar gate signal obtained from energy \\
$s_c(x)$ & final class-wise integration gate for class $c$ \\
$\rho(z)$ & auxiliary feature-density scaling term applied to concentrations \\
$\pi_k(x)$ & normalized router weight for head $k$ \\
$\tilde{\pi}_k(x)$ & raw pre-modulation router score before FI-based reweighting \\
$p_{\mathrm{mix}}(x)$ & mixture predictive mean before class-wise probability gating \\
$\hat{p}(x)$ & final gated and renormalized predictive distribution \\
$\widehat{\mathrm{FI}}_k(x)$ & Fisher-sensitivity proxy for head $k$ \\
\bottomrule
\end{tabular}
\end{table*}

\section{Related Work}
\label{app:related_work}

\subsection{Single-Pass Evidential Models and Density-Aware Extensions}
\EDL{} provides single-pass predictive uncertainty by parameterizing a Dirichlet distribution and interpreting its concentration as evidence \citep{sensoy2018evidential}. Large-scale evaluations caution that modern networks can remain miscalibrated under distribution shift \citep{ovadia2019can}. To encode data support more explicitly, Prior Networks \citep{malinin2019prior} shape Dirichlet targets with priors, and Posterior Networks \citep{charpentier2020posterior} parameterize target Dirichlet distributions. Density-aware variants rescale evidential outputs using feature-space likelihoods; \DAEDL{} employs an offline Gaussian surrogate such as \GDA{} \citep{murphy2012ml,bishop2006pattern}, which improves calibration under shift but leaves the density term decoupled from end-to-end learning. Mixture-style evidential models \citep{ryu2024mod} capture ambiguity via multiple Dirichlet components with learned mixing. Beyond architecture, Dirichlet calibration \citep{kim2024dirichlet} for modern networks has been revisited, semi-supervised signals \citep{cheng2024priornetssl} have been used to improve shift-aware confidence, and gated evidential formulations \citep{zhang2024gateedl} report calibration gains by explicitly modulating evidential outputs. \FI{}-informed evidential training has also been explored \citep{deng2023iedl}.

Table~\ref{tab:related_work_comparison} summarizes how \gemfi{} differs from closely related single-pass evidential and density-aware methods along key architectural and algorithmic dimensions. 
\gemfi{} integrates in-model support gating, multi-modal epistemic mixtures, and Fisher-inspired stabilization for routing in a unified single-pass evidential framework.

\begin{table}[t]
\centering
\caption{Comparison of \gemfi{} with closely related single-pass evidential and density-aware methods.}
\label{tab:related_work_comparison}
\small
\setlength{\tabcolsep}{4pt}
\renewcommand{\arraystretch}{1.15}

\begin{adjustbox}{max width=\columnwidth,center}
\begin{tabular}{lccccc}
\toprule
Method &
\shortstack[c]{End-to-end\\density} &
\shortstack[c]{Single-\\pass} &
\shortstack[c]{Multi-modal\\epistemic} &
\shortstack[c]{In-model\\gating} &
\shortstack[c]{\FI{}\\for routing} \\
\midrule
\DAEDL{}~\citepalias{zhang2024daedl}     & \xmark & \cmark & \xmark & \xmark & \xmark \\
Ryu et al.~\citep{ryu2024mod}              & \cmark & \cmark & \cmark & \xmark & \xmark \\
Deng et al.~\citep{deng2023iedl}           & \cmark & \cmark & \xmark & \xmark & \cmark \\
Zhang et al.~\citep{zhang2024gateedl}      & \cmark & \cmark & \xmark & \cmark & \xmark \\
\textbf{\gemfi{}}                          & \cmark & \cmark & \cmark & \cmark & \cmark \\
\bottomrule
\end{tabular}
\end{adjustbox}
\end{table}

\subsection{\OOD{} Baselines and Post hoc Energy Methods}
Non-energy baselines remain standard references for \OOD{} detection: \MaxP{} (\MaxP{}) \citep{hendrycks2017baseline}, ODIN \citep{liang2018odin} with input perturbations and \TS{}, and Mahalanobis scoring \citep{lee2018simple} in feature space. Energy-based views \citep{grathwohl2020energy,liu2020energy} reinterpret discriminative classifiers as implicit energy models and have reported improved separability in some settings between \ID{} and \OOD{} examples than softmax confidence. Follow-ups analyze theoretical conditions for energy-based separability \citep{morteza2022provable}, explore architectural variants such as masked energy models \citep{he2023masked}, and study calibration and representation-shift effects for energy scores \citep{zhou2024energycal,huang2024dissecting}. Unified, post-hoc calibration frameworks under distribution shift have also been proposed \citep{sehwag2024ssd,romero2024calibration}. For broader overviews of \OOD{} detection, see the survey by \citet{yang2024goodsurvey}.



\section{Implementation-Aligned Pseudocode and Notes}
\label{app:impl}

\paragraph{Implementation vs.\ theory.}
Our implementation follows the \gemfi{} design in Figure~\ref{fig:arch-gemfi} but makes two choices that we state explicitly to avoid ambiguity.
First, the learned integration gate is applied \emph{after} mixture aggregation, i.e., we form a mixture predictive mean and then apply a per-class gate in probability space, followed by renormalization.
Second, each evidential head parameterizes Dirichlet concentrations directly as $\alpha_k=\exp(\mathrm{clip}(u_k))+\varepsilon$ (no explicit ``$+1$'' offset), matching the code path used for all \gemfi{} results.
The Fisher-inspired quantity used by \gemfi{} is a tractable sensitivity proxy computed from per-sample gradients of the component log-probability with respect to the component logits; it is used both to (i) modulate mixture weights during the forward pass and (ii) regularize training via an additional loss term.
Finally, the implementation multiplies each component concentration by a per-sample feature-density score before forming expectations, which sharpens or suppresses evidence depending on feature support.

\paragraph{Virtual Outlier Synthesis (VOS).}
For GEM-FI, we employ VOS to generate synthetic OOD samples near the decision boundary. We sample $\epsilon \sim \mathcal{N}(0, 1)$ and generate virtual outliers $v_k$ by sampling from the class-conditional Gaussian estimates in the feature space. We train with a VOS regularization weight of 0.1, a warmup of 10 epochs, and synthesize outcomes to enforce low evidence on these virtual points. This auxiliary loss is used only in the full GEM-FI configuration as a boundary-sharpening mechanism to improve OOD separability; it is not the primary source of the core gating or mixture behavior.

\paragraph{Energy signal and robust calibration.}
We compute a learned energy head $E_{\psi}(z)$ and, for reference, a density-based GMM energy
$E_{\mathrm{gmm}}(z)=-\log\sum_k \exp(\log p(z\mid k))$.
For evaluation-time scaling we select the energy source with the larger robust dynamic range (1--99\% quantile span), which is typically the learned energy head in our runs.
For \gemfi{} with VOS-EBM enabled, we disable the final $\tanh$ on the energy head; when VOS is not used, enabling $\tanh$ can help prevent sigmoid-gate saturation.
When an energy-to-confidence scalar is needed (e.g., for reporting an energy-based shift score), we use
$s=\mathrm{clip}\!\left(1-\frac{E-E_{\min}}{E_{\max}-E_{\min}},0,1\right)$ with $(E_{\min},E_{\max})$ taken from 1--99\% quantiles; if the range is numerically tight, we fall back to a logits-based energy $-\log\sum_c\exp(u_c)$.

\paragraph{Implementation alignment with DAEDL.}
While the canonical EDL formulation often uses $\alpha=e+\mathbf{1}$ to encode an explicit Dirichlet(1) base concentration,
our implementation follows DAEDL and parameterizes $\alpha$ directly via exponentiated (clipped) logits.
Concretely, we use $\alpha=\exp(\tilde{u})+\epsilon$ with $\epsilon=10^{-8}$.
This ensures $\alpha>0$ and stable training while matching the DAEDL-style evidential parameterization.
Accordingly, all uncertainty quantities that depend on $\alpha$ (e.g., $\alpha_0=\sum_c\alpha_c$ and vacuity-like measures)
are computed using \eqref{eq:alpha_daedl_style} without adding an extra $+1$.

\begin{algorithm}[t]
\caption{\gemfi{} forward pass and training losses (implementation-aligned).}
\label{alg:gemfi}
\begin{algorithmic}[1]
\Require minibatch $(x,y)$; backbone $f_\theta$; features $z=f_\theta(x)$; $K$ Dirichlet heads $h_k$; mixture router $g_\phi$; energy network $e_\psi$; integration gate $q_\omega$; temperature $T$; density scaler $d(\cdot)$; Fisher-modulation strength $\lambda_{\mathrm{FI}}$; KL strength $\lambda_{\mathrm{KL}}$.
\State $z \gets f_\theta(x)$
\State $E \gets e_\psi(z)$; $s \gets \sigma(E)$ \Comment{scalar gate signal}
\For{$k=1,\dots,K$}
  \State $u_k \gets h_k(z)/T$
  \State $\alpha_k \gets \exp(\mathrm{clip}(u_k,-10,10)) + \varepsilon$
\EndFor
\State $\tilde{\pi} \gets g_\phi([z;s])$ \Comment{$K$-way softmax}
\If{training and Fisher modulation enabled and gradients enabled}
  \If{$y$ is not provided}
    \State $\hat y \gets \arg\max_c \frac{1}{K}\sum_{k=1}^K u_{k,c}$; $y \gets \hat y$
  \EndIf
  \For{$k=1,\dots,K$}
    \State $FI_k \gets \sum_{c}\big(\nabla_{u_{k,c}}\log p_k(y\mid x)\big)^2$ \Comment{logit-sensitivity proxy via autograd}
  \EndFor
  \State $\bar{FI} \gets \mathrm{Normalize}(FI)$ across components
  \State $\pi \gets \mathrm{Normalize}\big(\tilde{\pi}\odot \exp(\lambda_{\mathrm{FI}}\,(1 - \bar{FI}))\big)$
\Else \Comment{Inference: no gradient computation, no Fisher modulation}
  \State $\pi \gets \tilde{\pi}$ \Comment{mixture weights directly from router}
\EndIf
\State $\rho \gets d(z)$ \Comment{per-sample density score}
\For{$k=1,\dots,K$} \State $\bar{\alpha}_k \gets \rho \cdot \alpha_k + \varepsilon$ \EndFor
\State $p_{\mathrm{mix}} \gets \sum_{k=1}^K \pi_k \cdot \mathbb{E}[\mathrm{Dir}(\bar{\alpha}_k)]$ \Comment{$\mathbb{E}[\mathrm{Dir}(\alpha)]=\alpha/\alpha_0$}
\State $g \gets q_\omega(s,z)$ \Comment{per-class gates}
\State $\hat p \gets \mathrm{Normalize}(p_{\mathrm{mix}}\odot g)$
\State \Return $\hat p$ (and optional diagnostics: $E,g,\pi,\{\bar{\alpha}_k\},FI,\alpha_0$)
\end{algorithmic}
\end{algorithm}

\paragraph{Training objective (implementation-aligned).}
Given $\hat p$ from Algorithm~\ref{alg:gemfi}, the predictive loss is $\mathcal{L}_{\mathrm{pred}}=-\log \hat p_y$.
We regularize each component with a Dirichlet prior via a mixture-weighted KL term
$\mathcal{L}_{\mathrm{KL}}=\sum_{k=1}^K \mathbb{E}\big[\pi_k \,\mathrm{KL}(\mathrm{Dir}(\bar{\alpha}_k)\,\|\,\mathrm{Dir}(\mathbf{1}))\big]$.
When Fisher modulation is enabled, we add $\mathcal{L}_{\mathrm{FI}}=\mathbb{E}\big[\sum_{k=1}^K \pi_k\,FI_k\big]$ and an additional expected-trace penalty $\beta\,\mathbb{E}[FI]$ as implemented.
The total loss is $\mathcal{L}=\mathcal{L}_{\mathrm{pred}}+\lambda_{\mathrm{KL}}\mathcal{L}_{\mathrm{KL}}+\lambda_{\mathrm{FI}}\mathcal{L}_{\mathrm{FI}}+\beta\,\mathbb{E}[FI]$.

\section{Experimental Setup}
\label{app:exp_setup}

\paragraph{Datasets.}
We evaluate \ID{} classification on MNIST~\citep{lecun1998mnist} and CIFAR-10~\citep{krizhevsky2009learning}.
MNIST contains 60{,}000 training and 10{,}000 test grayscale images of size $28\times28$,
and CIFAR-10 consists of 50{,}000 training and 10{,}000 test RGB images of size $32\times32$.
For \OOD{} evaluation, we use FashionMNIST~\citep{xiao2017fashionmnist} and KMNIST~\citep{clanuwat2018kmnist} as \OOD{} datasets for MNIST,
and SVHN~\citep{netzer2011svhn} and CIFAR-100~\citep{krizhevsky2009learning} as \OOD{} datasets for CIFAR-10.

To assess robustness under distributional shift, we additionally evaluate on MNIST-C~\citep{mu2019mnistc} and CIFAR-10-C~\citep{hendrycks2019benchmark}.
MNIST-C consists of 15 corruption types with a fixed (tuned) severity for each corruption,
while CIFAR-10-C applies 19 corruption types to the test set across 5 severity levels.

\paragraph{Backbones and models.}
For MNIST we use a small \CNN{}; for CIFAR-10 we use a ResNet-18 backbone.
Spectral normalization is applied to all convolutional and linear layers.
Our main variants are:
\begin{itemize}[leftmargin=*]
\item \gemcore{}: a single evidential head with learned energy $E_\psi(z)$ and bounded gate $s(x)$ that modulates Dirichlet evidence (Sec.~\ref{sec:gem-core}).
\item \gemmix{}: a mixture of $K$ evidential heads with learned mixture weights $\pi(x)$ on shared features (Sec.~\ref{sec:gem-mix}).
\item \gemfi{}: the full model with \FI{}-informed regularization to stabilize mixture allocations (Sec.~\ref{sec:gem-fi}).
\end{itemize}
We compare against DROPOUT, \MaxP{} \citep{hendrycks2017baseline}, KL-PN and \EDL{} \citep{sensoy2018evidential}, ODIN \citep{liang2018odin}, Mahalanobis scoring \citep{lee2018simple}, RKL-PN and POSTNET \citep{charpentier2020posterior}, $\mathcal{I}$-\EDL{} \citep{deng2023iedl}, density-aware \DAEDL{} \citepalias{zhang2024daedl}, R-\EDL{} \citep{chen2024redl}, the recent CEDL+ and LTS models, and finally Re-\EDL{} \citep{chen2025reedl}.

\paragraph{DAEDL vs. \gemfi{} Dirichlet parameterization.}
\begin{table}[!t]
\caption{
Comparison of the per-component Dirichlet concentration $\alpha_c^{(k)}$ and final predictive mean 
$\hat{p}_c$ between the \textsc{DAEDL} baseline and the proposed \gemfi{} method. 
Here $u_c$ and $u_c^{(k)}$ denote the logits of a single head and mixture component $k$, 
respectively, and $w_k$ are the learned mixing weights.
}
\label{tab:table1}
\centering
\compacttablesetup
\begin{tabular}{lcc}
\toprule
 & {DAEDL} & GEM-FI \\
\midrule
$\alpha_c^{(k)}$ 
& $\exp(\lambda(x) u_c)$
& $\displaystyle \exp\big(\mathrm{clip}(u_c^{(k)},-\tau,\tau)\big) + \varepsilon$ \\
\midrule
Predictive mean $\hat{p}_c$
& $\displaystyle 
\frac{\exp(\lambda(x) u_c)}
{\sum_{c'=1}^{C} \exp(\lambda(x) u_{c'})}$
& $\displaystyle 
\sum_{k=1}^{K} w_k \cdot \frac{\alpha_c^{(k)}}{\sum_{c'} \alpha_{c'}^{(k)}}$ \\
\midrule
FI regularization
& Not used 
& $\lambda_{\text{FI}} \, \mathcal{L}_{\text{FI}}$ \\
\bottomrule
\end{tabular}
\end{table}

Table~\ref{tab:table1} summarizes how the DAEDL baseline and our \gemfi{} model instantiate the Dirichlet concentration $\alpha_c^{(k)}$ and the corresponding final predictive mean $\hat{p}_c$ during training. 
DAEDL uses a single-head softmax with $\alpha_c = \exp(\lambda(x) u_c)$ so that the predictive mean coincides with the standard softmax predictor. (Note: This matches our implementation form where density scaling is applied directly to the log-potential before exponentiation).
In contrast, \gemfi{} maintains $K$ separate Dirichlet heads, each with concentration $\alpha_c^{(k)} = \exp(\mathrm{clip}(u_c^{(k)},-\tau,\tau)) + \varepsilon$.
The final predictive mean is computed as a \emph{weighted average of per-component expectations}: $\hat{p}_c = \sum_k w_k \cdot \frac{\alpha_c^{(k)}}{\alpha_0^{(k)}}$.
This mixture-of-expectations form (rather than a sum of weighted concentrations) matches our implementation.
The last row highlights the additional \FI{} regularization term $\lambda_{\text{FI}}\,\mathcal{L}_{\text{FI}}$ specific to \gemfi{}, which modulates mixture allocations without changing the predictive mean.

\paragraph{Training protocol.}
All models are trained with AdamW, cosine learning-rate decay, and gradient clipping with norm $1.0$.
Batch size and other key hyperparameters are reported in \Cref{tab:hparams}.
The evidential objective consists of a regression-to-target term plus a KL penalty to the uniform Dirichlet, with optional mixture and \FI{} terms for \gemmix{} and \gemfi{}.
For the gated models, we clamp the gate to $(s_{\min}, s_{\max}) = (0.1, 0.9)$; when the optional $\tanh$ nonlinearity is enabled on the energy head, eval-time desaturation (scaling by $0.5$) can be applied.
The gated evidential core uses an energy head $E_\psi$ (\MLP{} on $z$; $\tanh$ disabled by default) followed by a scalar sigmoid $\hat{s}(x)$ and a small integration gate $G_\eta$ that takes $[z,\hat{s}(x)]$ to produce per-class gates $s_c(x)\in[0.1,0.9]$, which then scale the predictive distribution in probability space and are renormalized (\ref{eq:app-prob-gating}).
For the mixture variant, $K$ heads are used with mixture weights $\pi=\mathrm{softmax}(h_\omega([z,\hat{s}(x)]))$; this is enabled via \texttt{--use\_mob} and sized by \texttt{--num\_components}$=K$.
All mixture calibration and \MI{} metrics use the mixture predictive.
\FI{} regularization is enabled via \texttt{--use\_fi\_regularization}; the weight \texttt{--fi\_lambda} sets $\lambda_{\mathrm{FI}}$ for the loss term and also controls the strength of \FI{}-based modulation of mixture weights.
The \FI{} proxy is bounded and computed per head, and an optional small penalty is added on the mean \FI{} across heads.
Specifically, the per-head FI proxy is computed as $\widehat{\mathrm{FI}}_k(x) = \|\nabla_{u_k}\log p_k(y\mid x)\|_2^2$, and the normalized version used for modulation is:
{\setlength{\abovedisplayskip}{3pt}
\setlength{\belowdisplayskip}{3pt}
\setlength{\abovedisplayshortskip}{2pt}
\setlength{\belowdisplayshortskip}{2pt}
\begin{equation}
\bar{\mathrm{FI}}_k(x)
=
\frac{\widehat{\mathrm{FI}}_k(x)}
{\sum_j \widehat{\mathrm{FI}}_j(x) + \epsilon},
\end{equation}
}
where $u_k=h_k(z)/T$ are the component logits (Algorithm~\ref{alg:gemfi}), and $p_k(y\mid x)$ is the per-head predictive mean. In the absence of ground truth labels during training (e.g., semi-supervised settings or specific loss evaluations), we use the model's predicted pseudo-label $\hat{y} = \arg\max \sum_k u_k$ to compute the gradient proxy.

Similarly, the uncertainty loss $\mathcal{L}_{\mathrm{UNC}}$ (\ref{eq:app-unc-loss}) incorporates a contrastive \OOD{} term that encourages low entropy for in-distribution samples and high entropy for \OOD{} samples.

\begin{table}[!t]
\centering
\caption{Hyperparameters. $\lambda_{\text{FI}}$ used only for \gemfi{}. $^*$Dropout in the backbone/classifier head; GEM's internal components (energy network, integration gate) use lower dropout (0.01--0.03) for stability.}
\label{tab:hparams}
\begin{tabular}{@{}lcc@{}}
\toprule
Parameter & MNIST & CIFAR-10 \\
\midrule
Learning rate & $5 \times 10^{-4}$ & $10^{-3}$ \\
Batch size & 64 & 128 \\
Epochs & 50 & 100 \\
$\lambda_{\text{KL}}$ & $10^{-3}$ & $10^{-4}$ \\
$\lambda_{\text{FI}}$ & 0.3 & 0.1 \\
$K$ (heads) & 3 & 3 \\
Dropout$^*$ & 0.05 & 0.1 \\
$\beta_{\text{id}}$ & 0.1 & 0.1 \\
$\beta_{\text{ood}}$ & 0.1 & 0.1 \\
Weight decay & $10^{-4}$ & $10^{-4}$ \\
Scheduler & Cosine & Cosine \\
Gate bounds & (0.1, 0.9) & (0.1, 0.9) \\
Logit clip $\tau$ & 10 & 10 \\
\bottomrule
\end{tabular}
\end{table}

\paragraph{Metrics.}
On \ID{} test sets we report accuracy, \NLL{},
Brier score (defined as $\frac{1}{N}\sum_{i=1}^{N}\sum_{c=1}^{C}(\hat{p}_{ic} - y_{ic})^2$, reported as $\times 100$ for readability; note that our implementation uses the standard multi-class Brier score without an additional $1/C$ normalization factor), and \ECE{} with 15 equal-width bins.
For \OOD{} and distribution-shift detection we report \AUROC{} and \AUPR{} (positive = \OOD{} or corrupted), using several scores: maximum predictive probability, Dirichlet total evidence $\alpha_0$, predictive entropy, \MI{} (for mixtures), and an energy-based score derived from the learned energy.

\section{Additional Experiments}
\label{app:additional_experiments}

\subsection{\OOD{} detection performance (\AUROC{})}
\label{app:ood_auroc}

Table~\ref{tab:table2b} reports \AUROC{} results for standard \ID{}$\rightarrow$\OOD{} benchmarks, using both aleatoric and epistemic uncertainty scores.
Here, $A \rightarrow B$ denotes that $A$ is the \textsc{\ID{}} dataset and $B$ the \textsc{\OOD{}} dataset.
\AUROC{} is threshold-independent, and higher values indicate better \textsc{\OOD{}} detection performance. Across MNIST-based benchmarks, most recent evidential methods (including ours) saturate near-perfect \AUROC{}, so differences are marginal.
On CIFAR-10$\rightarrow$SVHN (far-\OOD{}), our methods substantially improve separation: \textsc{GEM-CORE} reaches 94.75/94.36 (alea./epis.), and \gemfi{} further boosts epistemic \AUROC{} to 95.09.
On CIFAR-10$\rightarrow$CIFAR-100 (near-\OOD{}), \gemfi{} provides the clearest gains, improving epistemic \AUROC{} from 83.63 (\textsc{GEM-CORE}) to 89.06, suggesting that the feature-informed mixture uncertainty better captures semantic overlap cases where aleatoric cues alone can be insufficient.
In contrast, \gemmix{} can underperform on the CIFAR benchmarks (e.g., lower epistemic \AUROC{} on SVHN), which is consistent with mixture assignments becoming less reliable without the feature-informed coupling used by \gemfi{}.

\begin{table*}[!t]
\caption{\AUROC{} for \textsc{\OOD{}} detection using aleatoric and epistemic uncertainty.}
\label{tab:table2b}
\centering
\compacttablesetup
\setlength{\tabcolsep}{3.5pt}
\renewcommand{\arraystretch}{1.05}
\begin{adjustbox}{max width=\textwidth}
\begin{tabular}{llcccccccc}
\toprule
& & \multicolumn{2}{c}{MNIST $\rightarrow$ KMNIST} &
  \multicolumn{2}{c}{MNIST $\rightarrow$ FMNIST} &
  \multicolumn{2}{c}{CIFAR-10 $\rightarrow$ SVHN} &
  \multicolumn{2}{c}{CIFAR-10 $\rightarrow$ CIFAR-100} \\
\cmidrule(lr){3-4}\cmidrule(lr){5-6}\cmidrule(lr){7-8}\cmidrule(lr){9-10}
Method & Venue &
Alea.$\uparrow$ & Epis.$\uparrow$ &
Alea.$\uparrow$ & Epis.$\uparrow$ &
Alea.$\uparrow$ & Epis.$\uparrow$ &
Alea.$\uparrow$ & Epis.$\uparrow$ \\
\midrule
DROPOUT  & ICML16    & 93.50 $\pm$ 0.10 & -- & 96.10 $\pm$ 0.20 & -- & 50.82 $\pm$ 0.10 & -- & 44.90 $\pm$ 1.00 & -- \\
KL-PN    & NeurIPS18 & 92.50 $\pm$ 1.20 & 92.90 $\pm$ 1.00   & 97.80 $\pm$ 0.80 & 97.85 $\pm$ 0.00   & 43.50 $\pm$ 1.90 & 42.80 $\pm$ 2.30   & 60.85 $\pm$ 2.80 & 61.00 $\pm$ 3.40 \\
\EDL{}      & NeurIPS18 & 96.55 $\pm$ 0.80 & 95.80 $\pm$ 2.00   & 97.75 $\pm$ 0.40 & 97.50 $\pm$ 0.40   & 81.06 $\pm$ 3.50 & 81.50 $\pm$ 1.70   & 80.63 $\pm$ 0.70 & 80.90 $\pm$ 1.00 \\
RKL-PN   & NeurIPS19 & 60.20 $\pm$ 2.90 & 53.20 $\pm$ 3.40   & 77.90 $\pm$ 3.10 & 71.70 $\pm$ 3.60   & 53.10 $\pm$ 1.10 & 48.90 $\pm$ 0.80   & 54.90 $\pm$ 2.60 & 54.20 $\pm$ 2.80 \\
POSTNET  & NeurIPS20 & 95.25 $\pm$ 0.20 & 94.10 $\pm$ 0.30   & 97.40 $\pm$ 0.20 & 96.90 $\pm$ 0.20   & 79.75 $\pm$ 0.20 & 77.20 $\pm$ 0.40   & 81.50 $\pm$ 0.80 & 81.60 $\pm$ 0.80 \\
\textit{I}-\EDL{} & ICML23 & 97.90 $\pm$ 0.20 & 97.85 $\pm$ 0.20 & 98.50 $\pm$ 0.30 & 98.50 $\pm$ 0.30   & 86.79 $\pm$ 2.40 & 86.40 $\pm$ 2.30   & 82.15 $\pm$ 0.70 & 81.90 $\pm$ 0.60 \\
\DAEDL{}    & ICML24    & 99.85 $\pm$ 0.00 & 99.88 $\pm$ 0.00   & 99.80 $\pm$ 0.00 & 99.83 $\pm$ 0.00   & 89.24 $\pm$ 1.40 & 89.30 $\pm$ 1.40   & 86.04 $\pm$ 0.10 & 86.10 $\pm$ 0.10 \\
R-\EDL{}    & ICLR24    & -- & 98.20 $\pm$ 0.20   & -- & 99.00 $\pm$ 0.12   & 86.78 $\pm$ 1.22 & 86.78 $\pm$ 1.22   & 85.80 $\pm$ 0.31 & 85.85 $\pm$ 0.31 \\
CEDL+    & ESWA25    & 99.80 $\pm$ 0.07 & 99.82 $\pm$ 0.07   & 98.70 $\pm$ 0.01 & 98.68 $\pm$ 0.01   & 92.50 $\pm$ 0.34 & 92.55 $\pm$ 0.65   & 78.90 $\pm$ 0.57 & 76.50 $\pm$ 0.52 \\
LTS      & MVA25     & 97.70 $\pm$ 0.85 & 99.90 $\pm$ 0.03   & 99.50 $\pm$ 0.12 & 99.70 $\pm$ 0.12   & 78.10 $\pm$ 0.96 & 92.00 $\pm$ 0.88   & 70.70 $\pm$ 0.79 & 84.80 $\pm$ 0.65 \\
Re-\EDL{}   & TPAMI25   & -- & 98.70 $\pm$ 0.28   & -- & 99.55 $\pm$ 0.09   & 89.72 $\pm$ 0.81 & 92.19 $\pm$ 1.13   & 86.67 $\pm$ 0.14 & \secondbest{86.65 $\pm$ 0.14} \\
\midrule
\oursrow\textsc{GEM-CORE} 
& 
& {99.93 $\pm$ 0.01} 
& {99.62 $\pm$ 0.42} 
& {99.99 $\pm$ 0.00} 
& {99.77 $\pm$ 0.30} 
& \textbf{94.75 $\pm$ 0.24} 
& \secondbest{94.36 $\pm$ 1.02} 
& \secondbest{87.30 $\pm$ 0.12} 
& {83.63 $\pm$ 0.12} \\
\oursrow\gemmix           & 
& \secondbest{99.93 $\pm$ 0.02} 
& \secondbest{99.93 $\pm$ 0.02} 
& \secondbest{99.99 $\pm$ 0.01} 
& \secondbest{99.98 $\pm$ 0.02} 
& {93.05 $\pm$ 0.88} 
& {81.29 $\pm$ 1.06} 
& {83.97 $\pm$ 0.80} 
& {74.60 $\pm$ 0.97} \\
\oursrow\textsc{\gemfi{}}   
& 
& \textbf{99.94 $\pm$ 0.01} 
& \textbf{99.95 $\pm$ 0.01} 
& \textbf{99.99 $\pm$ 0.01} 
& \textbf{99.99 $\pm$ 0.01} 
& \secondbest{93.65 $\pm$ 0.55} 
& \textbf{95.09 $\pm$ 0.55} 
& \textbf{88.06 $\pm$ 0.06} 
& \textbf{89.06 $\pm$ 0.06} \\
\bottomrule
\end{tabular}
\end{adjustbox}
\end{table*}

\subsection{Precision-Recall Curves}
\label{app:pr-curves}

Figure~\ref{fig:pr-curves} shows Precision--Recall curves for \textsc{\OOD{}} detection (\textsc{SVHN} vs.\ \textsc{CIFAR}-10) using different uncertainty scores. \gemfi{} achieves the highest \AUPR{} across all metrics, indicating a better ranking that preserves precision as recall increases. In particular, mutual information (MI) provides the cleanest separation (93.06\%), suggesting that mixture-component disagreement is highly informative for far-\OOD{} detection. By contrast, max probability degrades quickly as the threshold is relaxed, consistent with occasional overconfident \OOD{} predictions, while evidence-based and entropy scores are more stable but less selective than MI at higher recall.

\begin{figure}[!t]
\centering
\begin{subfigure}[b]{0.48\columnwidth}
  \centering
  \safeincludegraphics[width=\linewidth]{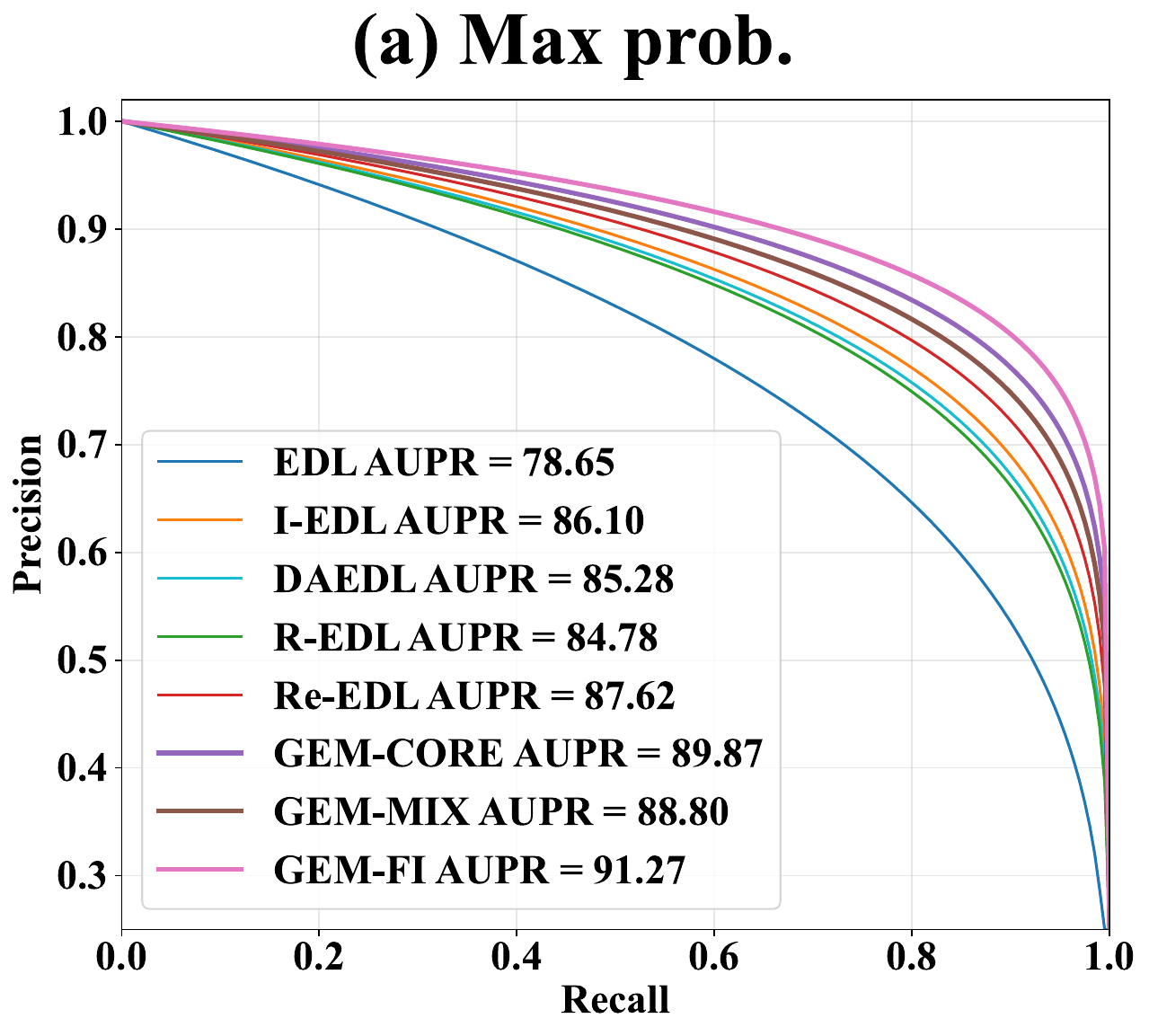}
  \caption{Max prob.}
\end{subfigure}\hfill
\begin{subfigure}[b]{0.48\columnwidth}
  \centering
  \safeincludegraphics[width=\linewidth]{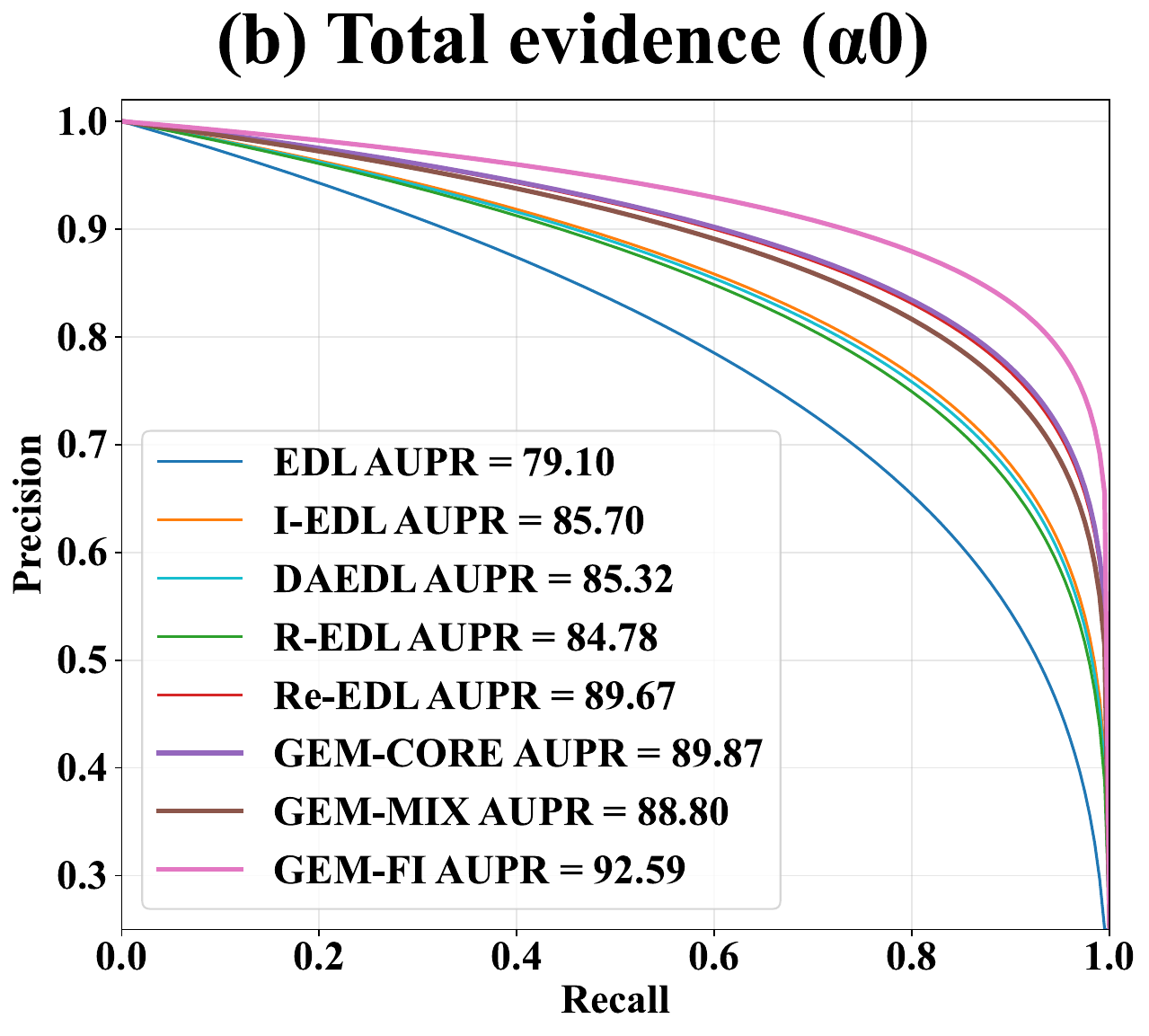}
  \caption{Total evidence ($\alpha_0$)}
\end{subfigure}

\par\smallskip

\begin{subfigure}[b]{0.48\columnwidth}
  \centering
  \safeincludegraphics[width=\linewidth]{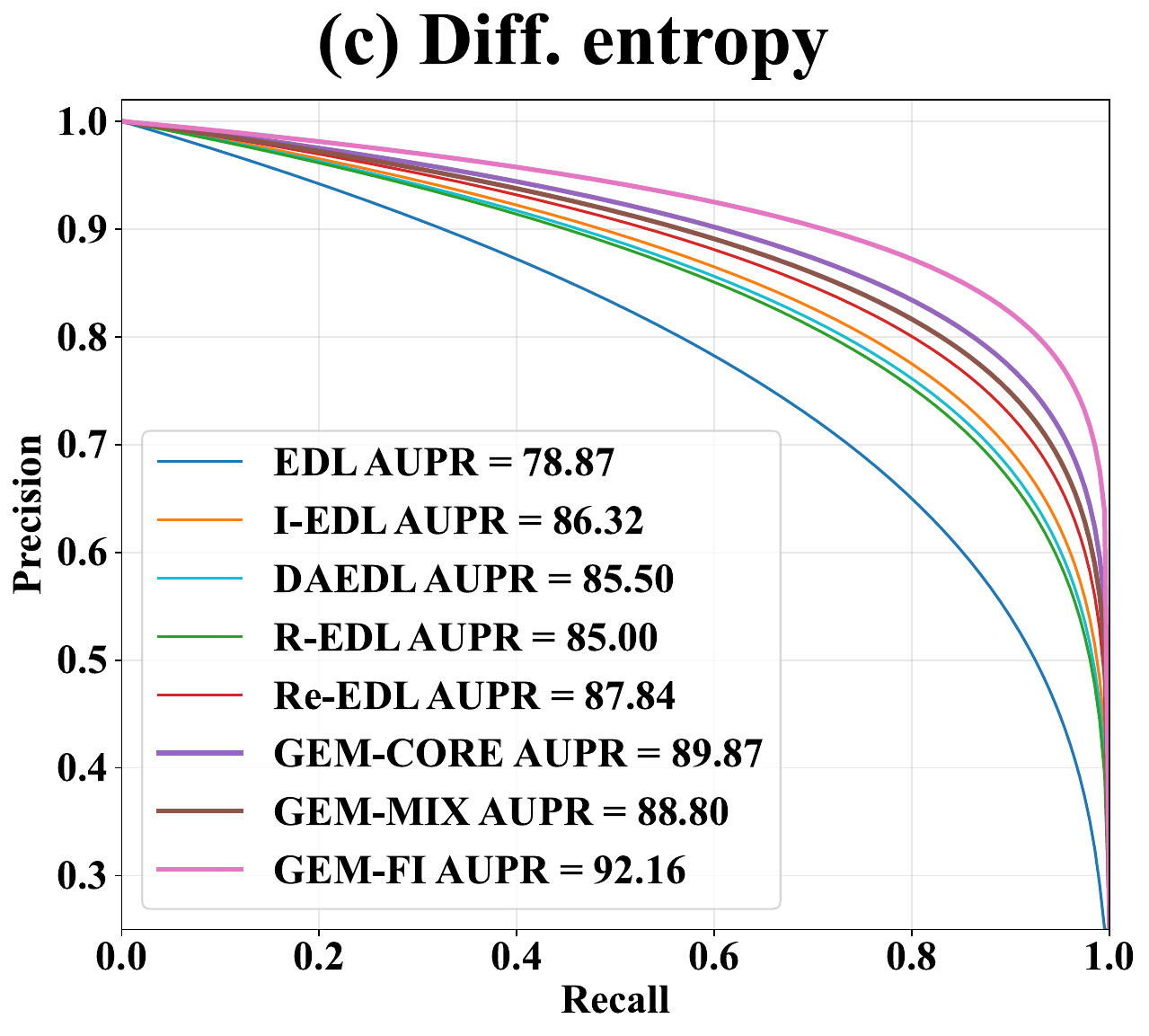}
  \caption{Diff.\ entropy}
\end{subfigure}\hfill
\begin{subfigure}[b]{0.48\columnwidth}
  \centering
  \safeincludegraphics[width=\linewidth]{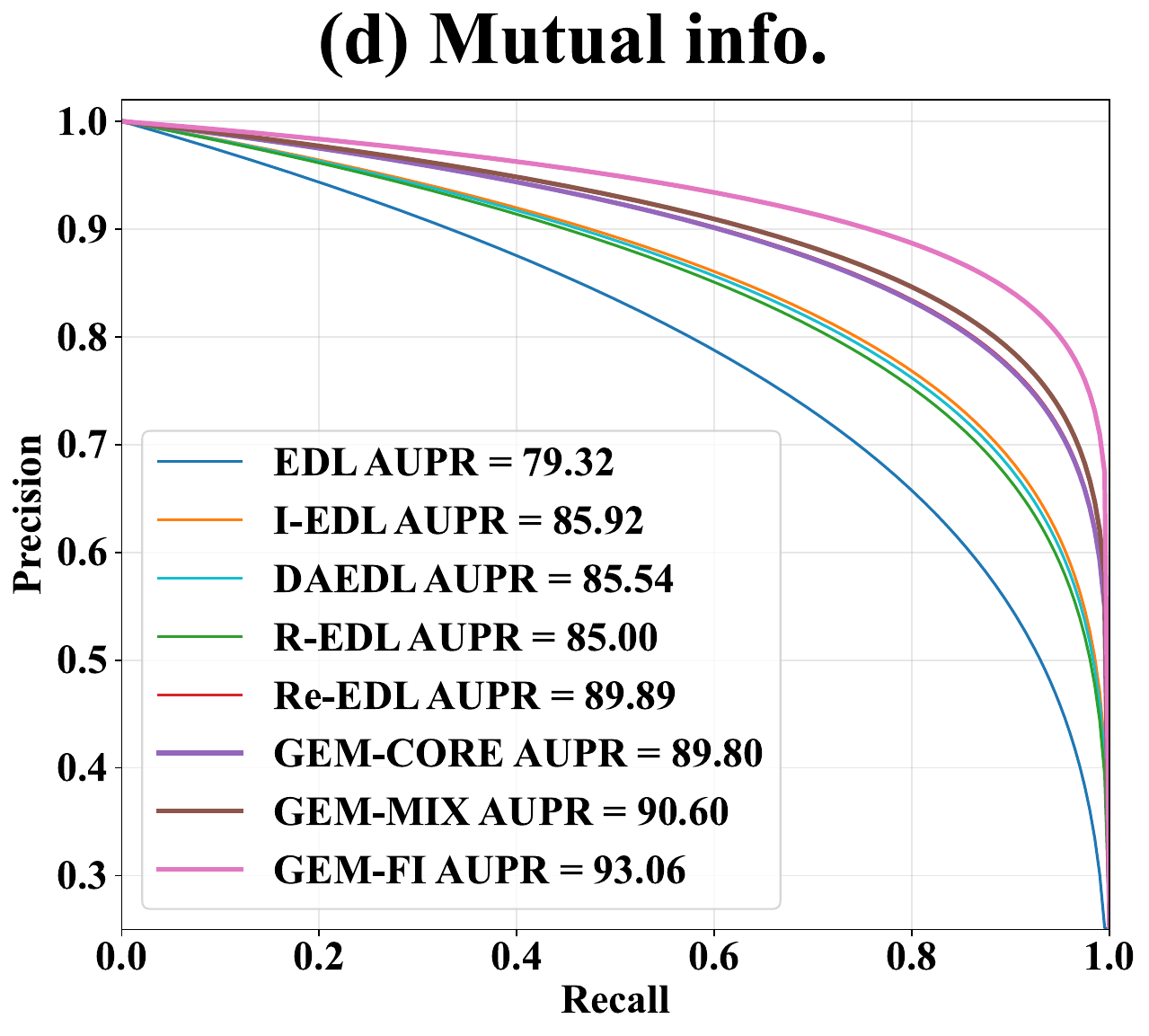}
  \caption{Mutual info.}
\end{subfigure}
\caption{Precision--Recall curves for \textsc{\OOD{}} detection with different uncertainty scores.}
\label{fig:pr-curves}
\end{figure}

\subsection{ROC Curves}
\label{app:roc-curves}

Figure~\ref{fig:roc-curves} shows ROC curves for \textsc{\OOD{}} detection (\textsc{SVHN} vs.\ \textsc{CIFAR}-10) using the same uncertainty scores. \gemfi{} obtains the highest \AUROC{} across metrics, reflecting stronger separability between \ID{} and \OOD{} samples over all thresholds. Among scores, predictive entropy yields the best separation (95.41\%), indicating that overall predictive dispersion is a strong cue under far-\OOD{} shifts. MI remains competitive by leveraging component disagreement, whereas max probability and total evidence tend to provide weaker early separation at low false positive rates.
\begin{figure}[!t]
\centering
\begin{subfigure}[b]{0.48\columnwidth}
  \centering
  \safeincludegraphics[width=\linewidth]{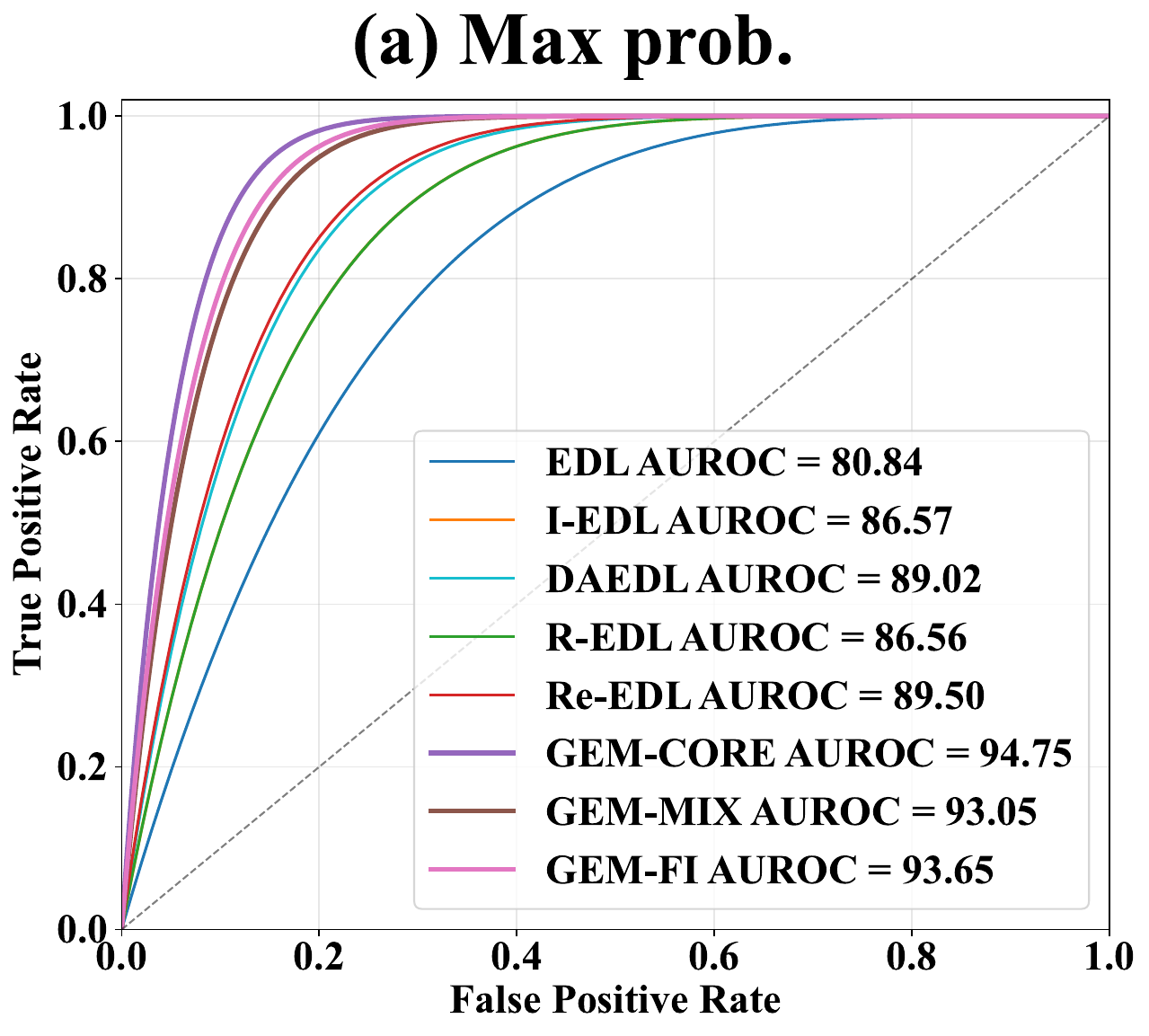}
  \caption{Max prob.}
\end{subfigure}\hfill
\begin{subfigure}[b]{0.48\columnwidth}
  \centering
  \safeincludegraphics[width=\linewidth]{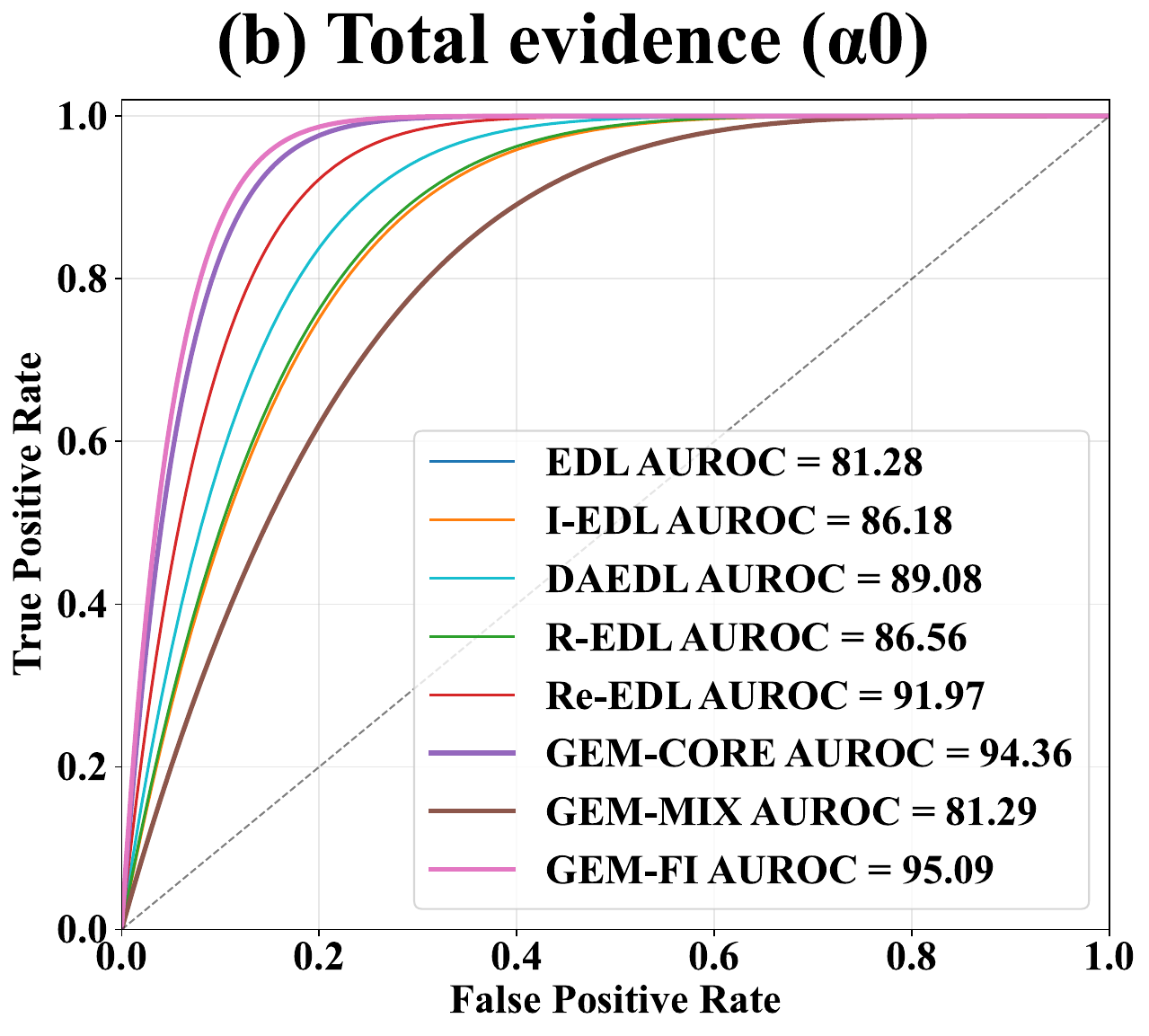}
  \caption{Total evidence ($\alpha_0$)}
\end{subfigure}

\par\smallskip

\begin{subfigure}[b]{0.48\columnwidth}
  \centering
  \safeincludegraphics[width=\linewidth]{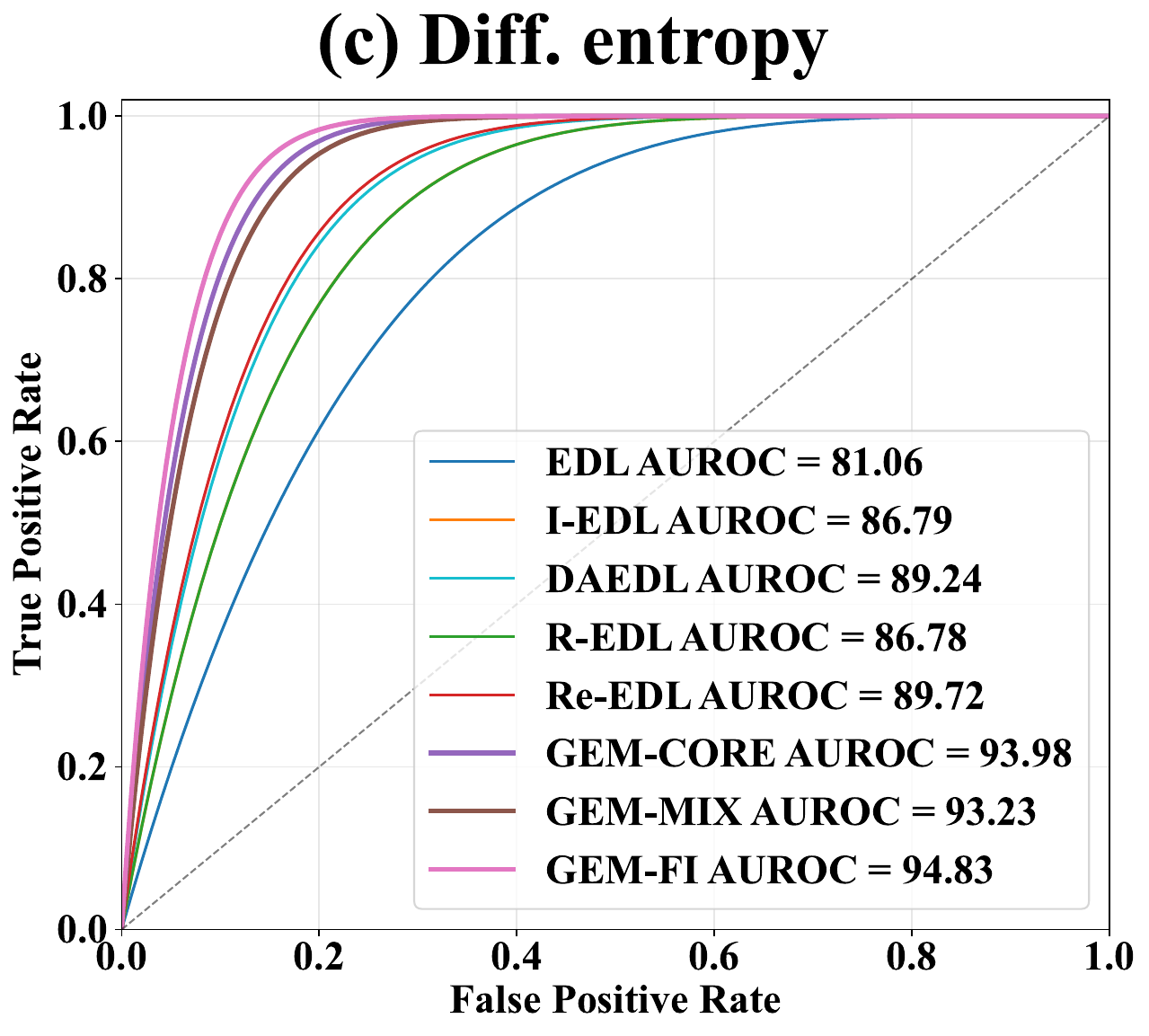}
  \caption{Diff.\ entropy}
\end{subfigure}\hfill
\begin{subfigure}[b]{0.48\columnwidth}
  \centering
  \safeincludegraphics[width=\linewidth]{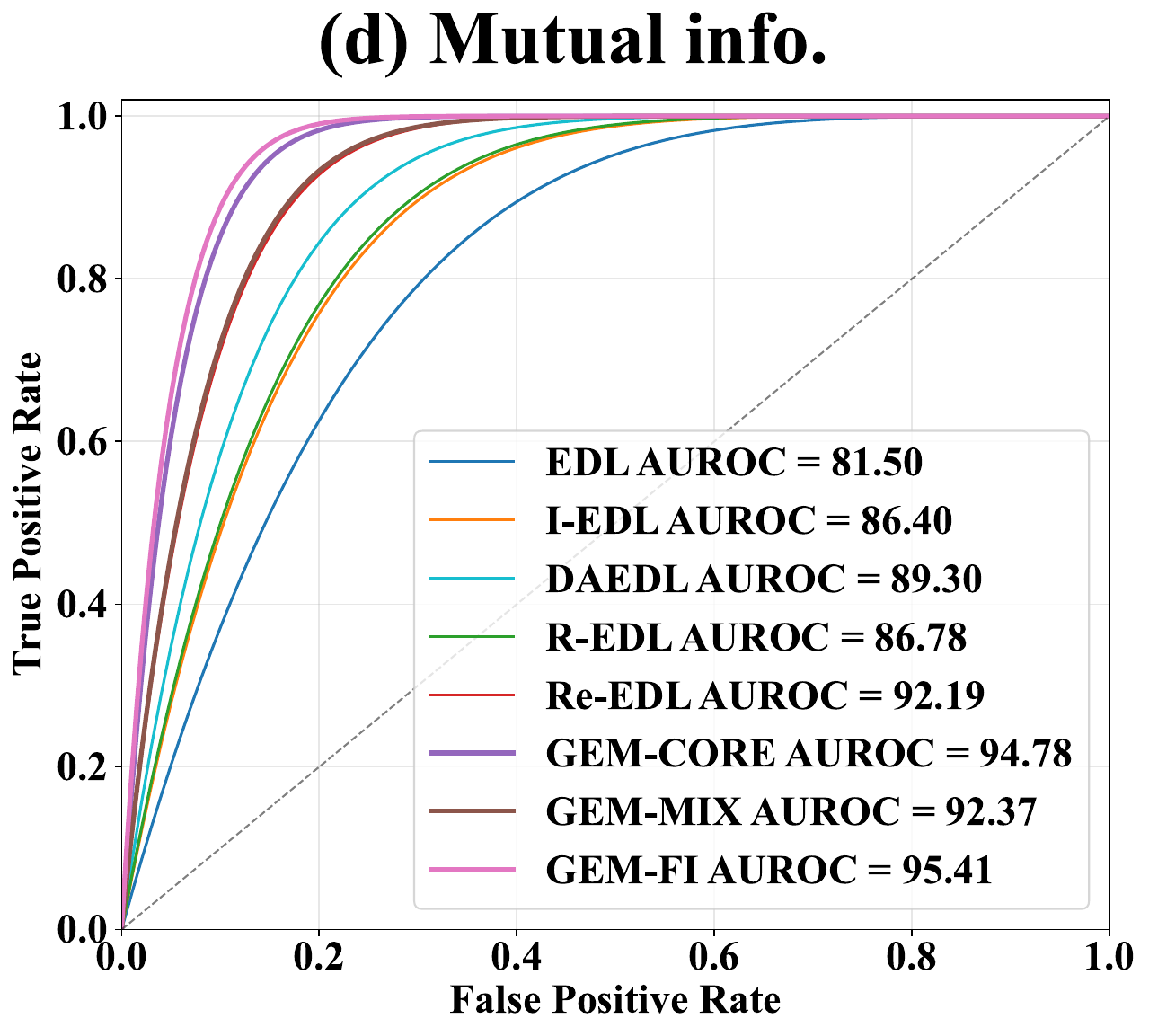}
  \caption{Mutual info.}
\end{subfigure}
\caption{ROC curves for \textsc{\OOD{}} detection with different uncertainty scores.}
\label{fig:roc-curves}
\end{figure}

\subsection{Distribution Shift/Corruptions}
\label{sec:exp:shift}

Although \gem{} is not specifically designed for corruption robustness, we include a standard corruption stress test for completeness by treating common corruptions as a distribution shift on MNIST-and CIFAR-10-C.
We evaluate whether the uncertainty scores can separate clean from corrupted inputs.

Table~\ref{tab:shift_detection_corruptions} reports \AUPR{} for detecting corruption-induced shifts using aleatoric uncertainty.
For CIFAR-10-C we average over 19 corruption types at each severity level, and we also include MNIST$\rightarrow$MNIST-C.
As severity increases, detection generally becomes easier (higher \AUPR{}), and the comparison highlights how different evidential variants respond to gradual, in-domain corruptions.


\begin{table*}[!t]
\caption{
\AUPR{} scores for distribution-shift detection based on aleatoric uncertainty.
For \textsc{CIFAR}-10-C, $C \in {1,2,3,4,5}$\\ denotes the corruption severity.
}
\label{tab:shift_detection_corruptions}
\centering
\compacttablesetup
\begin{tabular}{lcccccc}
\toprule
 & \multicolumn{1}{c}{MNIST $\rightarrow$ MNIST-C} 
 & \multicolumn{5}{c}{CIFAR-10 $\rightarrow$ CIFAR-10-C} \\
\cmidrule(lr){2-2} \cmidrule(lr){3-7}
 & \AUPR{}$\uparrow$ & $C=1$ & $C=2$ & $C=3$ & $C=4$ & $C=5$ \\
\midrule

\MaxP{} 
& 78.54 $\pm$ 0.30 
& 56.39 $\pm$ 0.70 
& 61.88 $\pm$ 1.10 
& 65.86 $\pm$ 1.30 
& 69.91 $\pm$ 1.50 
& 75.01 $\pm$ 1.80 \\

\EDL{} 
& 82.75 $\pm$ 0.80 
& 54.76 $\pm$ 0.30 
& 59.01 $\pm$ 0.40 
& 62.46 $\pm$ 0.50 
& 65.87 $\pm$ 0.60 
& 70.21 $\pm$ 0.80 \\

\textit{I}-\EDL{} 
& 86.06 $\pm$ 0.50 
& 56.33 $\pm$ 0.20 
& 61.52 $\pm$ 0.50 
& 65.44 $\pm$ 0.50 
& 69.45 $\pm$ 0.50 
& 74.56 $\pm$ 0.50 \\

{\DAEDL{}} 
& \textbf{92.43 $\pm$ 0.30} 
& \textbf{57.89 $\pm$ 0.30} 
& \textbf{63.23 $\pm$ 0.40} 
& \textbf{67.53 $\pm$ 0.40} 
& \textbf{72.21 $\pm$ 0.40} 
& \textbf{77.74 $\pm$ 0.40} \\

\midrule

\oursrow\textsc{GEM-CORE}
& {87.14 $\pm$ 0.00} 
& 54.87 $\pm$ 0.30 
& 58.09 $\pm$ 0.35 
& 60.61 $\pm$ 0.40 
& 63.46 $\pm$ 0.50 
& 68.10 $\pm$ 0.60 \\
\oursrow\gemmix
& 86.42 $\pm$ 0.10 
& 54.45 $\pm$ 0.25 
& 57.54 $\pm$ 0.35 
& 59.84 $\pm$ 0.40 
& 62.48 $\pm$ 0.45 
& 66.98 $\pm$ 0.55 \\
\oursrow\textsc{\gemfi{}}
& \secondbest{90.36 $\pm$ 0.10}
& {55.88 $\pm$ 0.30} 
& {59.39 $\pm$ 0.40} 
& {61.93 $\pm$ 0.45} 
& {65.00 $\pm$ 0.50} 
& {69.97 $\pm$ 0.60} \\
\bottomrule
\end{tabular}
\end{table*}

\paragraph{Discussion.}
Corruption benchmarks such as \textsc{CIFAR}-10-C induce in-domain, low-level perturbations while preserving class semantics, making shift detection qualitatively different from semantic \OOD{} settings.
In this regime, aleatoric uncertainty (used for the \AUPR{} evaluation in Table~\ref{tab:shift_detection_corruptions}) need not increase reliably at mild severities: the model can still extract sufficient class evidence and maintain high-confidence predictions even when inputs are corrupted.
Since our approach primarily targets epistemic support estimation and \ID{}/\OOD{} separation--i.e., suppressing evidence when representation support is low and stabilizing mixture allocations--sensitivity to gradual within-support corruptions is not explicitly optimized.
Accordingly, detection typically improves as corruption severity grows, because larger perturbations more substantially degrade feature support and yield a clearer separation between clean and corrupted samples.

\subsection{Comparison with \TS{}}
\label{app:temp_scaling_comparison}
We compare \textsc{GEM} models against \TS{} and representative \EDL{}-based baselines on \textsc{CIFAR}-10 in the closed-set setting (Table~\ref{tab:table13-cifar10}). 
All methods are evaluated using their raw, uncalibrated outputs (i.e., before any post-hoc calibration), except for the \TS{} baseline, which fits a temperature parameter on a held-out validation set.
We report: (i)~\ECE{} (15 bins) and Brier score ($\times 100$) for confidence calibration; (ii)~\AUPR{} for misclassification detection; and (iii)~mean \AUROC{} averaged over \textsc{SVHN} and \textsc{CIFAR}-100 for \OOD{} detection.
Overall, \gemcore{} and \gemfi{} achieve competitive (often state-of-the-art) calibration without any post-hoc tuning, while \gemfi{} further improves misclassification and \OOD{} detection, indicating that energy-gated evidential learning can yield well-calibrated confidence in a single pass.

\begin{table}[!t]
\centering
\caption{
\TS{} comparison on \textsc{CIFAR}-10 (closed-set; pre-calibration unless noted).
}
\label{tab:table13-cifar10}
\compacttablesetup
\begin{adjustbox}{max width=\columnwidth}
\begin{tabular}{lcccc}
\toprule
\multirow{2}{*}{Method} &
\multicolumn{2}{c}{Confidence calibration} &
Misclass.\ detect. &
\OOD{} detect. \\
\cmidrule(lr){2-3}\cmidrule(lr){4-4}\cmidrule(lr){5-5}
& \ECE{} (15 bins)$\downarrow$ & Brier ($\times 100$)$\downarrow$ & \AUPR{}$\uparrow$ & Mean \AUROC{}$\uparrow$ \\
\midrule
\TS{} (post-hoc) & \textbf{1.06$\pm$0.10} & 18.44$\pm$0.49 & 98.89$\pm$0.05 & 82.07$\pm$2.23 \\
\EDL{} & 11.56$\pm$0.93 & 27.34$\pm$0.71 & 98.74$\pm$0.07 & 82.32$\pm$0.98 \\
$\mathcal{I}$-\EDL{} & 44.35$\pm$1.27 & 59.73$\pm$1.31 & 98.71$\pm$0.11 & 82.01$\pm$1.47 \\
\DAEDL{} & 7.22$\pm$1.18 & 14.27$\pm$0.20 & 99.08$\pm$0.00 & 88.19$\pm$0.10 \\
R-\EDL{} & 3.47$\pm$0.31 & 18.15$\pm$0.50 & 98.98$\pm$0.05 & 83.73$\pm$1.07 \\
Re-\EDL{} & 5.72$\pm$0.32 & 14.95$\pm$0.47 & 98.81$\pm$0.05 & 85.46$\pm$1.41 \\
\midrule
\oursrow\gemcore{} 
& \secondbest{1.94$\pm$0.11} 
& \textbf{1.27$\pm$0.02} 
& 99.22$\pm$0.01 
& \secondbest{88.72$\pm$0.21} \\
\oursrow\gemmix{}  
& 2.80$\pm$0.45 
& {6.97$\pm$0.03} 
& \secondbest{99.43$\pm$0.02} 
& {88.23$\pm$0.66} \\
\oursrow\gemfi{}   
& 2.42$\pm$0.04 
& \secondbest{6.81$\pm$0.01} 
& \textbf{99.94$\pm$0.01} 
& \textbf{89.30$\pm$0.10} \\
\bottomrule
\end{tabular}
\end{adjustbox}
\end{table}

\subsection{Effect of Post-hoc \TS{} on \textsc{GEM}}
\label{app:ts}
To assess whether \textsc{GEM} models benefit from post-hoc calibration, we apply \TS{} following the standard protocol: we fit a scalar temperature $T$ on a held-out validation set by minimizing the negative log-likelihood, then evaluate calibration metrics before and after scaling (Table~\ref{tab:ts_calibration}). 
We emphasize that \TS{} optimizes \NLL{} and does not necessarily improve \ECE{}.

\gemcore{} learns $T \approx 1.18$, indicating mild overconfidence; applying \TS{} reduces \ECE{} from 1.94\% to 0.76\% and slightly improves both \NLL{} and Brier.
\gemmix{} learns $T \approx 1.01$, consistent with near-calibrated outputs; applying \TS{} leaves \NLL{} and Brier essentially unchanged, and \ECE{} may slightly increase (2.80\% $\rightarrow$ 3.04\%).
Finally, \gemfi{} achieves strong intrinsic calibration (\ECE{} $\approx$ 2.5\%) without post-hoc tuning. With $T \approx 0.95 < 1$, \TS{} slightly sharpens predicted probabilities; while this is appropriate for \NLL{} minimization, it can mildly worsen \ECE{} when calibration is already strong under \ECE{} (2.42\% $\rightarrow$ 2.70\%), with Brier remaining similar.

\begin{table}[!t]
\centering
\caption{
Post-hoc \TS{} on \textsc{CIFAR}-10 for \textsc{GEM} models.
{Brier:} multiclass Brier computed from the final post-gating probabilities, averaged over classes, reported as $\times 100$.
}
\label{tab:ts_calibration}
\compacttablesetup
\begin{adjustbox}{max width=\columnwidth}
\begin{tabular}{lccccccc}
\toprule
Model &
\multicolumn{2}{c}{\ECE{} (\%)$\downarrow$} &
\multicolumn{2}{c}{\NLL{}$\downarrow$} &
\multicolumn{2}{c}{Brier ($\times 100$)$\downarrow$} &
$T$ \\
\cmidrule(lr){2-3} \cmidrule(lr){4-5} \cmidrule(lr){6-7}
& Before & After & Before & After & Before & After & \\
\midrule
\gemcore{} & \textbf{1.94} & \textbf{0.76} & 0.2603 & 0.2553 & \textbf{1.27} & \textbf{1.23} & 1.18 \\
\gemmix{}  & 2.80 & 3.04 & 0.2700 & 0.2697 & 6.97 & 6.96 & 1.01 \\
\gemfi{}   & 2.42 & 2.70 & \textbf{0.2133} & 0.2189 & 6.81 & 6.63 & 0.95 \\
\bottomrule
\end{tabular}
\end{adjustbox}
\end{table}

\subsection{Reliability Diagrams and Calibration Sanity Checks}
\label{app:reliability}

Reliability diagrams assess calibration by plotting empirical accuracy against predicted confidence in fixed-width bins~\citep{guo2017calibration}.
A well-calibrated model lies close to the diagonal, while deviations indicate over- or under-confidence.

Figure~\ref{fig:reliability_id} reports \textsc{ID} reliability diagrams on \textsc{CIFAR}-10 (test set) for \gemcore{}, \gemmix{}, and \gemfi{} using the final post-gating probabilities (before any post-hoc calibration).
This qualitative view complements Table~\ref{tab:ts_calibration} and confirms that all three variants exhibit low \ECE{} on \textsc{ID} data.
To further sanity-check calibration, Figure~\ref{fig:conf_correct_wrong} compares the mean max-confidence on correct vs.\ incorrect predictions. Across all variants, incorrect predictions receive substantially lower confidence than correct predictions, alleviating concerns that unusually low Brier scores could arise from pathological overconfidence.

\begin{figure}[!t]
\centering
\safeincludegraphics[width=\columnwidth]{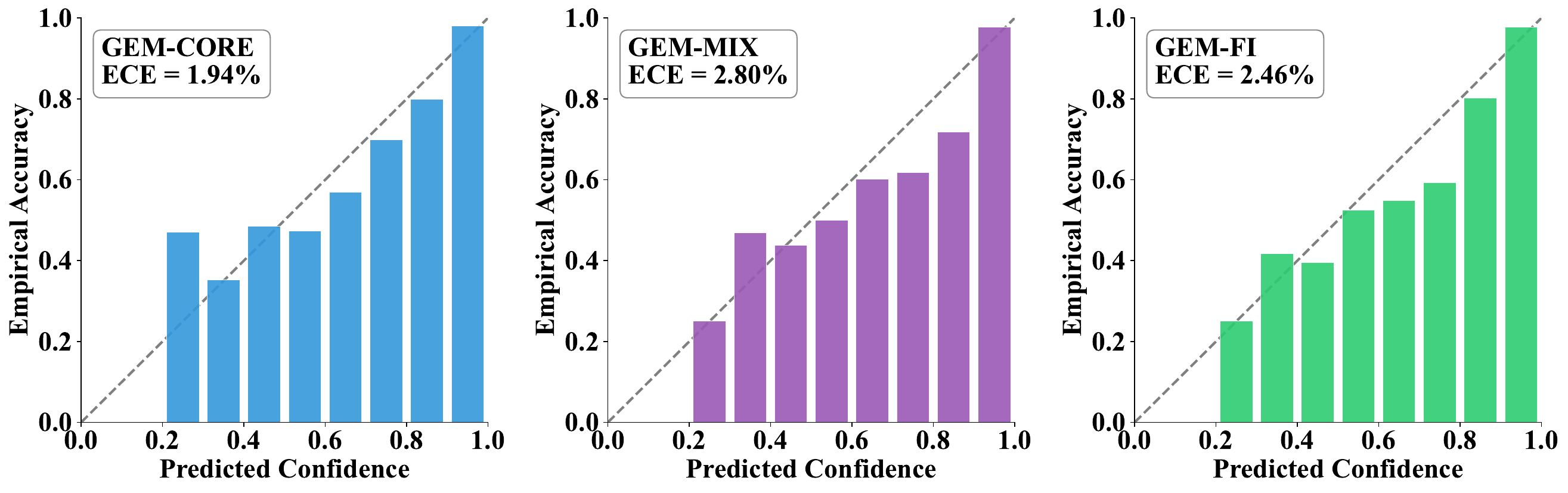}
\caption{{Reliability diagrams on \textsc{CIFAR}-10 test (\textsc{ID}).} Empirical accuracy vs.\ predicted confidence (15 equal-width bins) for \gemcore{}, \gemmix{}, and \gemfi{} (pre-\TS{}). Reported \ECE{} values match Table~\ref{tab:ts_calibration}.}
\label{fig:reliability_id}
\end{figure}

\begin{figure}[!t]
\centering
\safeincludegraphics[width=\columnwidth]{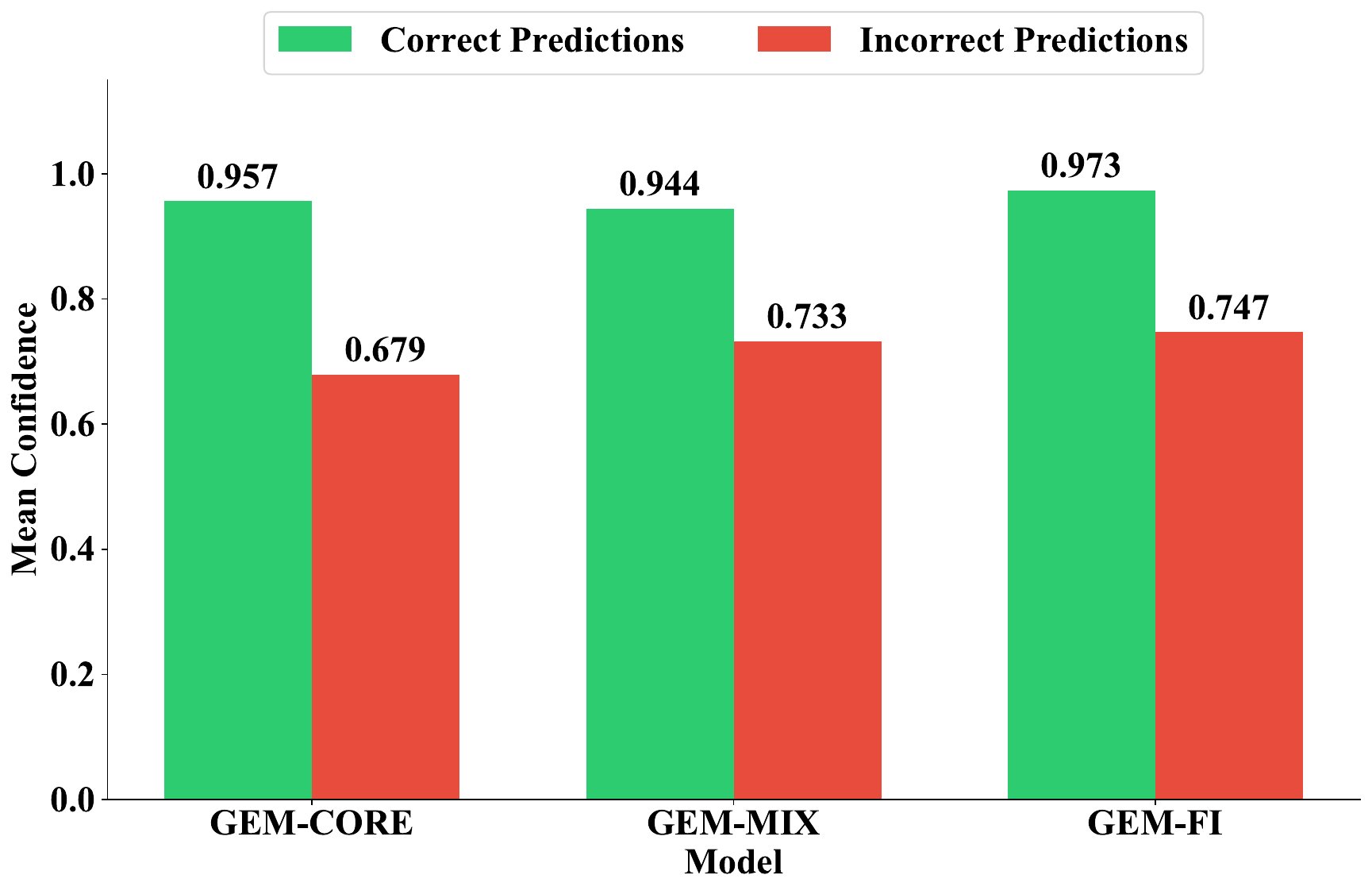}
\caption{{Confidence on correct vs.\ incorrect predictions (\textsc{CIFAR}-10 test, \textsc{ID}).} Mean max-confidence for correct and incorrect predictions for \gemcore{}, \gemmix{}, and \gemfi{}.}
\label{fig:conf_correct_wrong}
\end{figure}

\paragraph{Discussion.}
Across models, reliability curves remain close to the diagonal on \textsc{ID} data, consistent with the low \ECE{} values in Table~\ref{tab:ts_calibration}.
Since \TS{} optimizes \NLL{} rather than \ECE{}, applying \TS{} may slightly improve or slightly worsen \ECE{} depending on the model (Table~\ref{tab:ts_calibration}).
Overall, these diagnostics support that \textsc{GEM} achieves strong \textsc{ID} calibration intrinsically, while maintaining substantially lower confidence on incorrect predictions.



\subsection{Uncertainty Distributions (\ID{} vs \OOD{})}
\label{app:unc-histograms}
Figure~\ref{fig:hist-ood} visualizes how different uncertainty scores separate \ID (blue) from \OOD (red) samples.
We include both digit-domain and natural-image shifts; across pairs, \MI{} provides the clearest \ID{}/\OOD{} separation for \gemfi{}, supporting the use of mixture-aware epistemic uncertainty.

\begin{figure*}[!t]
\centering
\begin{subfigure}[b]{0.24\textwidth}
  \centering
  \safeincludegraphics[width=\linewidth]{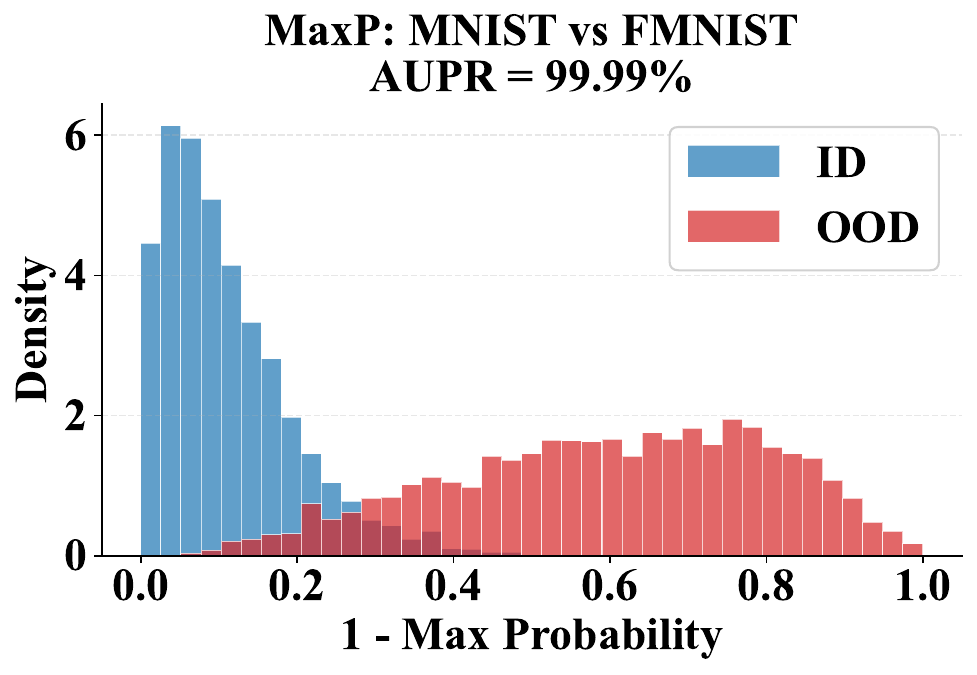}
  \caption{\MaxP{} (\textsc{FMNIST})}
\end{subfigure}
\hfill
\begin{subfigure}[b]{0.24\textwidth}
  \centering
  \safeincludegraphics[width=\linewidth]{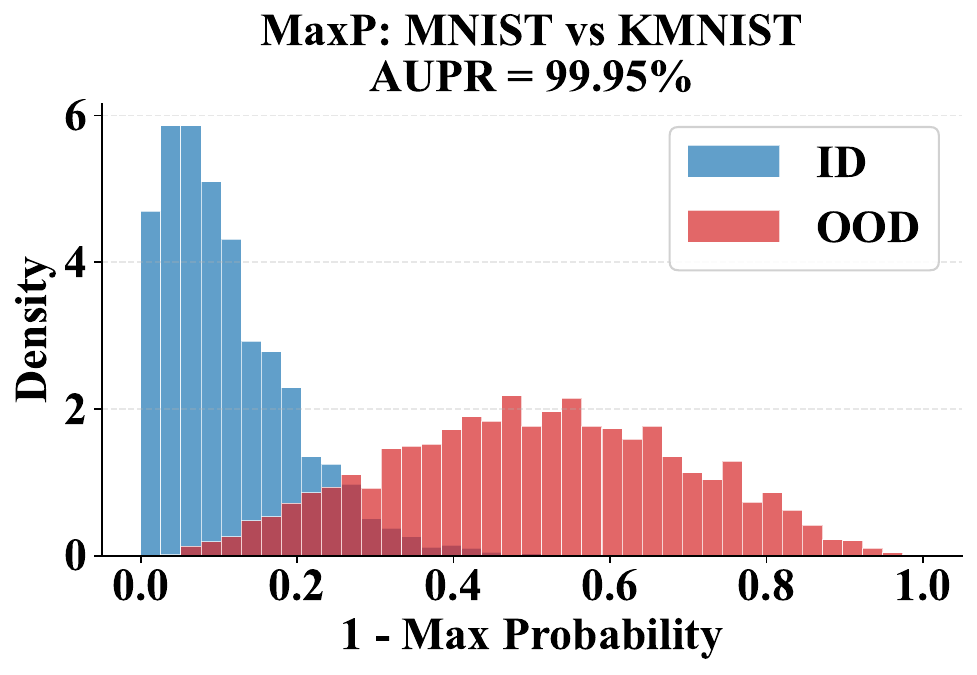}
  \caption{\MaxP{} (\textsc{KMNIST})}
\end{subfigure}
\hfill
\begin{subfigure}[b]{0.24\textwidth}
  \centering
  \safeincludegraphics[width=\linewidth]{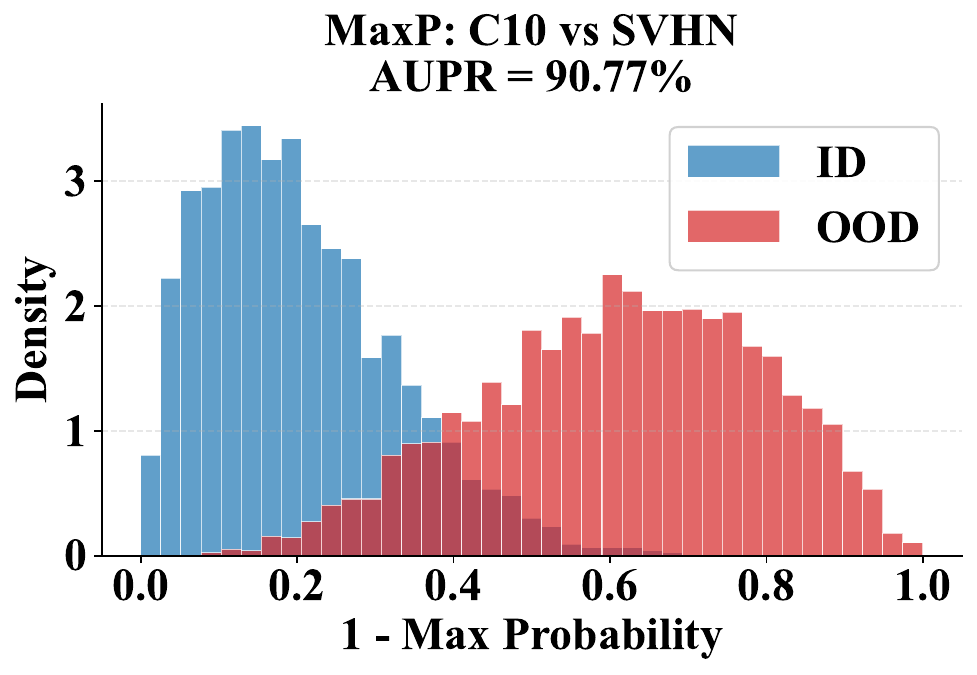}
  \caption{\MaxP{} (\textsc{SVHN})} 
\end{subfigure}
\hfill
\begin{subfigure}[b]{0.24\textwidth}
  \centering
  \safeincludegraphics[width=\linewidth]{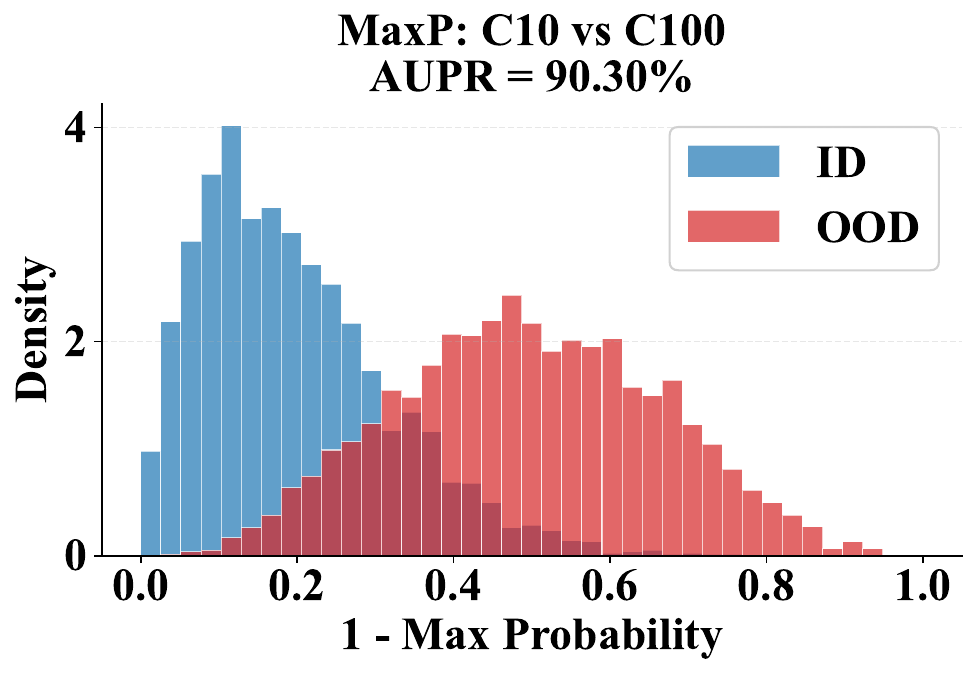}
  \caption{\MaxP{} (\textsc{CIFAR}-100)}
\end{subfigure}

\begin{subfigure}[b]{0.24\textwidth}
  \centering
  \safeincludegraphics[width=\linewidth]{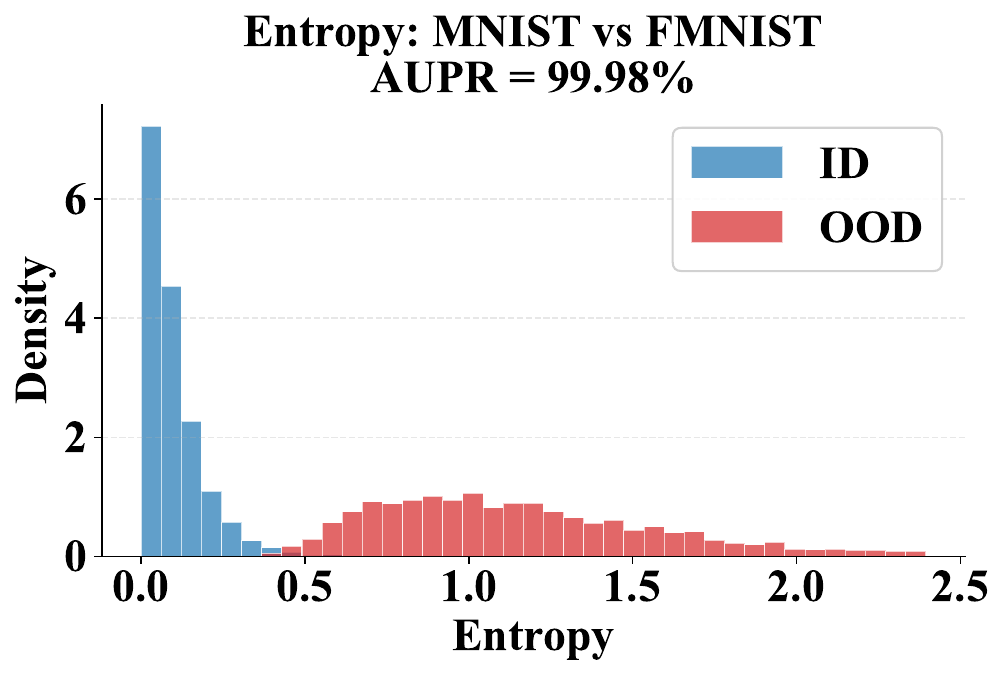}
  \caption{Entropy (\textsc{FMNIST})} 
\end{subfigure}
\hfill
\begin{subfigure}[b]{0.24\textwidth}
  \centering
  \safeincludegraphics[width=\linewidth]{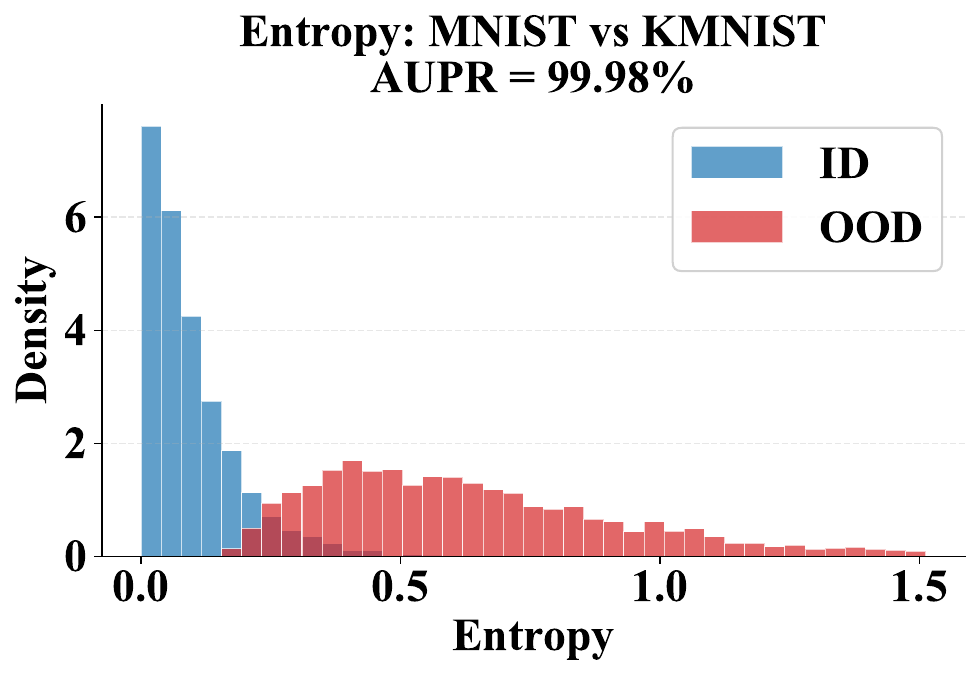}
  \caption{Entropy (\textsc{KMNIST})} 
\end{subfigure}
\hfill
\begin{subfigure}[b]{0.24\textwidth}
  \centering
  \safeincludegraphics[width=\linewidth]{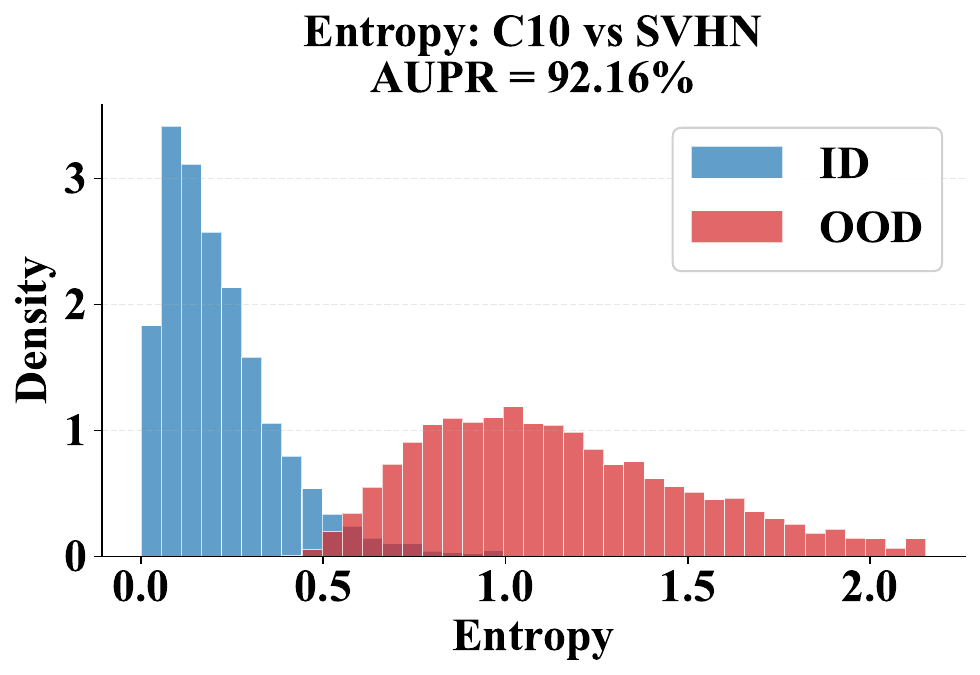}
  \caption{Entropy (\textsc{SVHN})} 
\end{subfigure}
\hfill
\begin{subfigure}[b]{0.24\textwidth}
  \centering
  \safeincludegraphics[width=\linewidth]{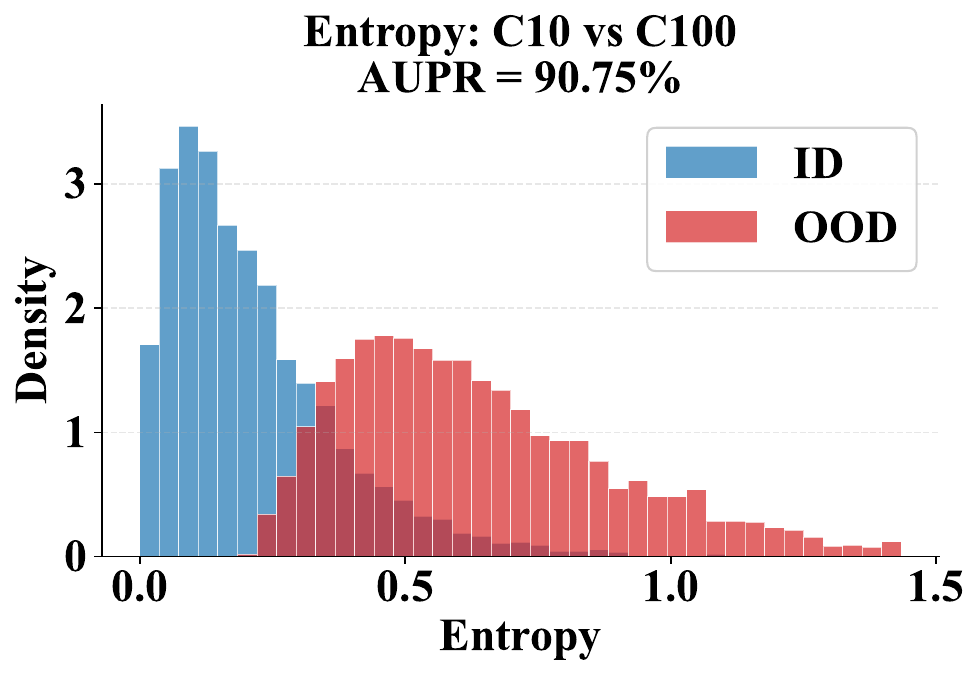}
  \caption{Entropy (\textsc{CIFAR}-100)}
\end{subfigure}

\begin{subfigure}[b]{0.24\textwidth}
  \centering
  \safeincludegraphics[width=\linewidth]{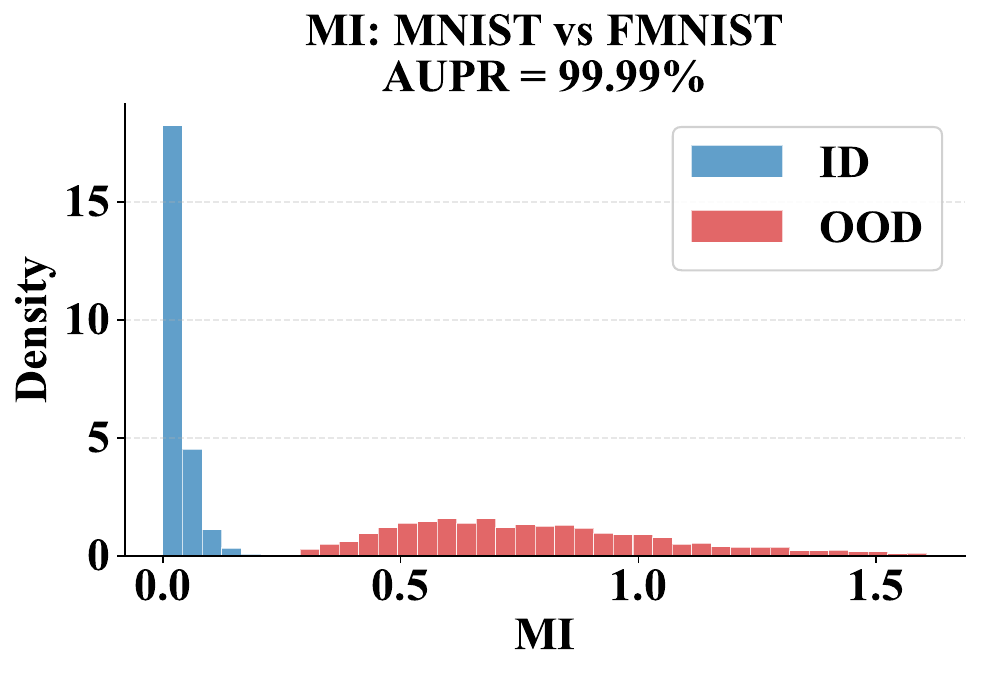}
  \caption{\textsc{\MI{}} (\textsc{FMNIST})}
\end{subfigure}
\hfill
\begin{subfigure}[b]{0.24\textwidth}
  \centering
  \safeincludegraphics[width=\linewidth]{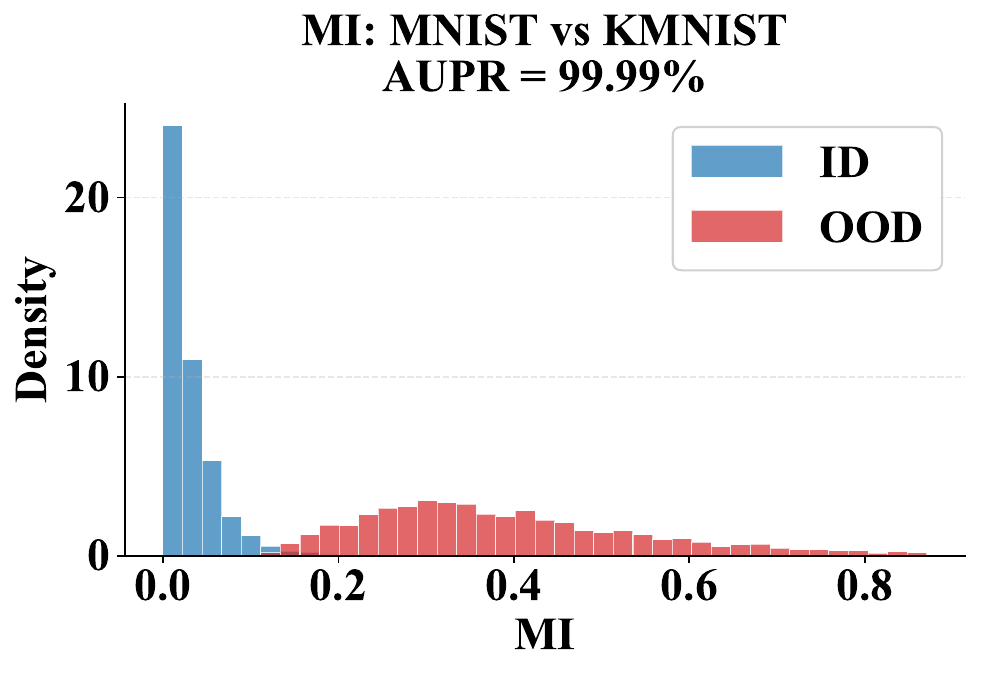}
  \caption{\textsc{\MI{}} (\textsc{KMNIST})}
\end{subfigure}
\hfill
\begin{subfigure}[b]{0.24\textwidth}
  \centering
  \safeincludegraphics[width=\linewidth]{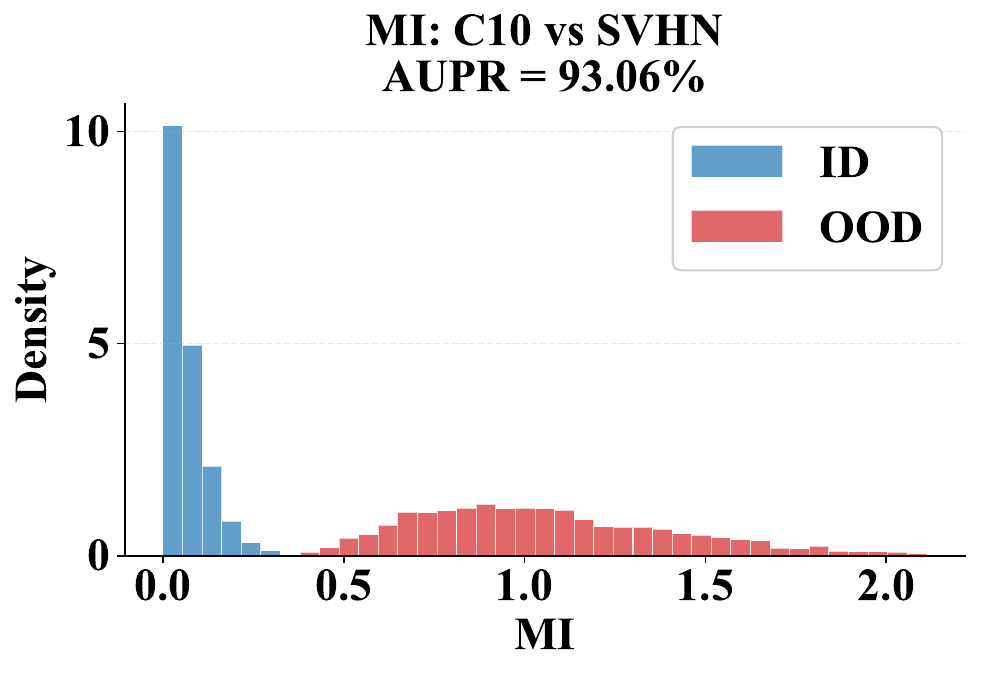}
  \caption{\textsc{\MI{}} (\textsc{SVHN})}
\end{subfigure}
\hfill
\begin{subfigure}[b]{0.24\textwidth}
  \centering
  \safeincludegraphics[width=\linewidth]{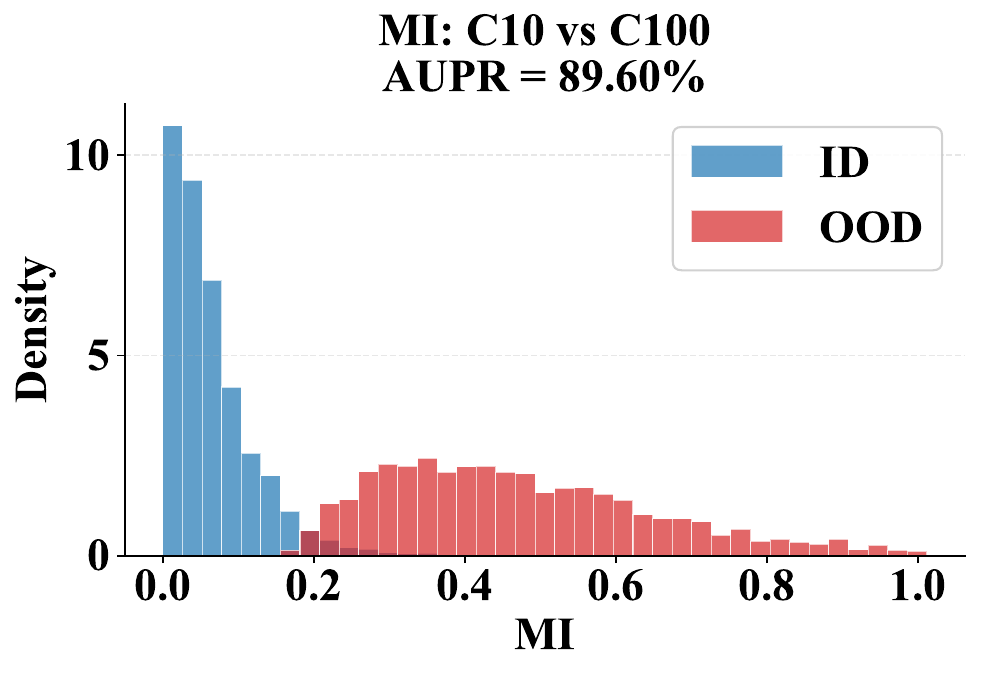}
  \caption{\textsc{\MI{}} (\textsc{CIFAR}-100)}
\end{subfigure}

\caption{
Uncertainty distributions measured by (top) maximum probability, (middle) entropy, and (bottom) \MI{}
for \gemfi{}. Blue = \textsc{\ID{}} samples; Red = \textsc{\OOD{}} samples.
\textsc{\MI{}} achieves the best separation, especially for far-\textsc{\OOD{}} pairs.
}
\label{fig:hist-ood}
\end{figure*}


\subsection{Entropy--\MI{} Analysis}
\label{app:entropy-mi}
Figure~\ref{fig:entropy-mi-scatter} plots aleatoric uncertainty (entropy) against epistemic uncertainty (\MI{}) for \gemfi{}.
\ID{} samples concentrate in the low-entropy/low-\MI{} region, while \OOD{} samples typically shift toward higher \MI{} (head disagreement), highlighting the complementarity of the two components.

For mixture models, we decompose predictive uncertainty into aleatoric (entropy) and epistemic (\MI{}) components:
\begin{equation}
\MI{}(x) = H(\hat{p}(x)) - \sum_{k=1}^K \pi_k(x)\, H\!\big(p^{(k)}(x)\big).
\end{equation}
High \MI{} indicates between-head disagreement, which is particularly useful for \OOD{} detection.

\begin{figure*}[htbp]
\centering
\begin{subfigure}[b]{0.24\textwidth}
  \centering
  \safeincludegraphics[width=\linewidth]{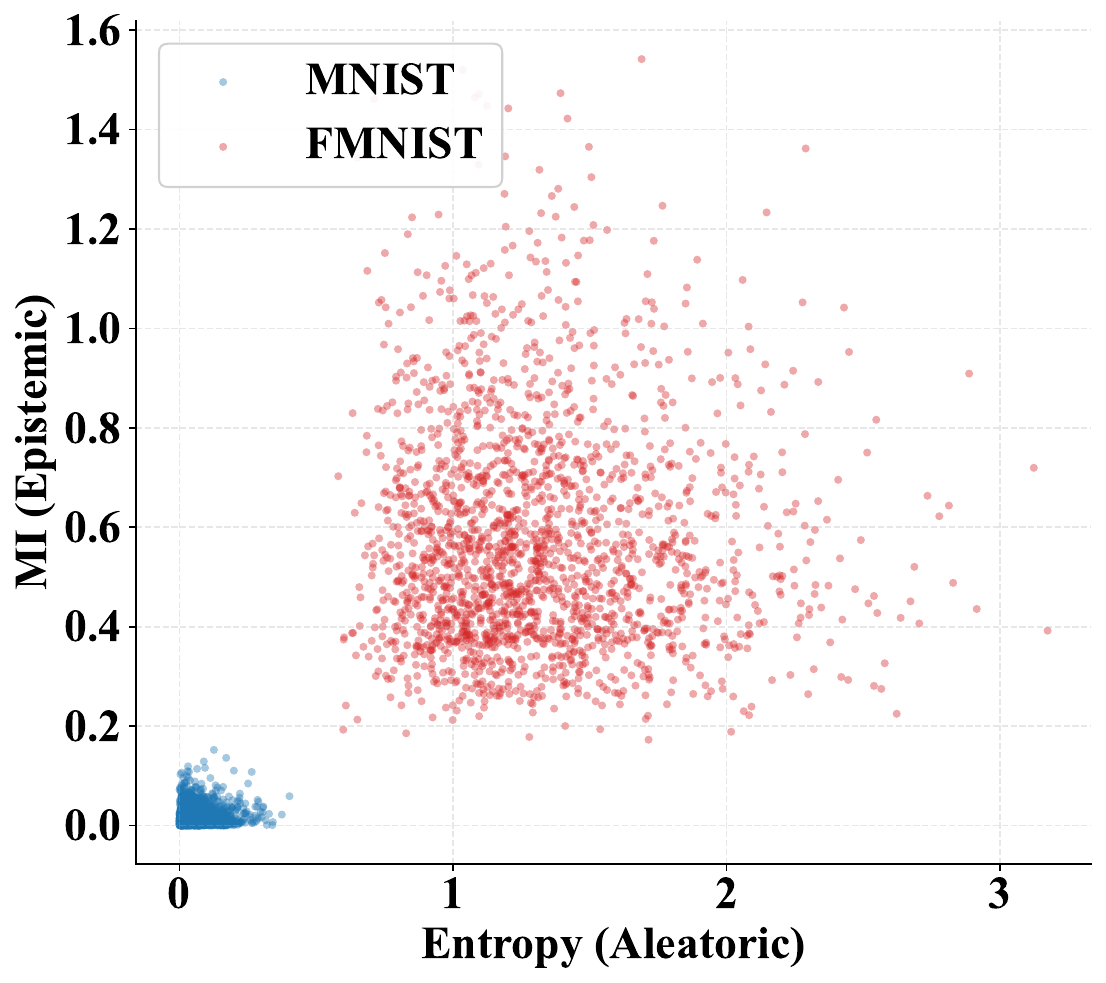}
  \caption{\textsc{MNIST} vs \textsc{FMNIST}}
\end{subfigure}
\hfill
\begin{subfigure}[b]{0.24\textwidth}
  \centering
  \safeincludegraphics[width=\linewidth]{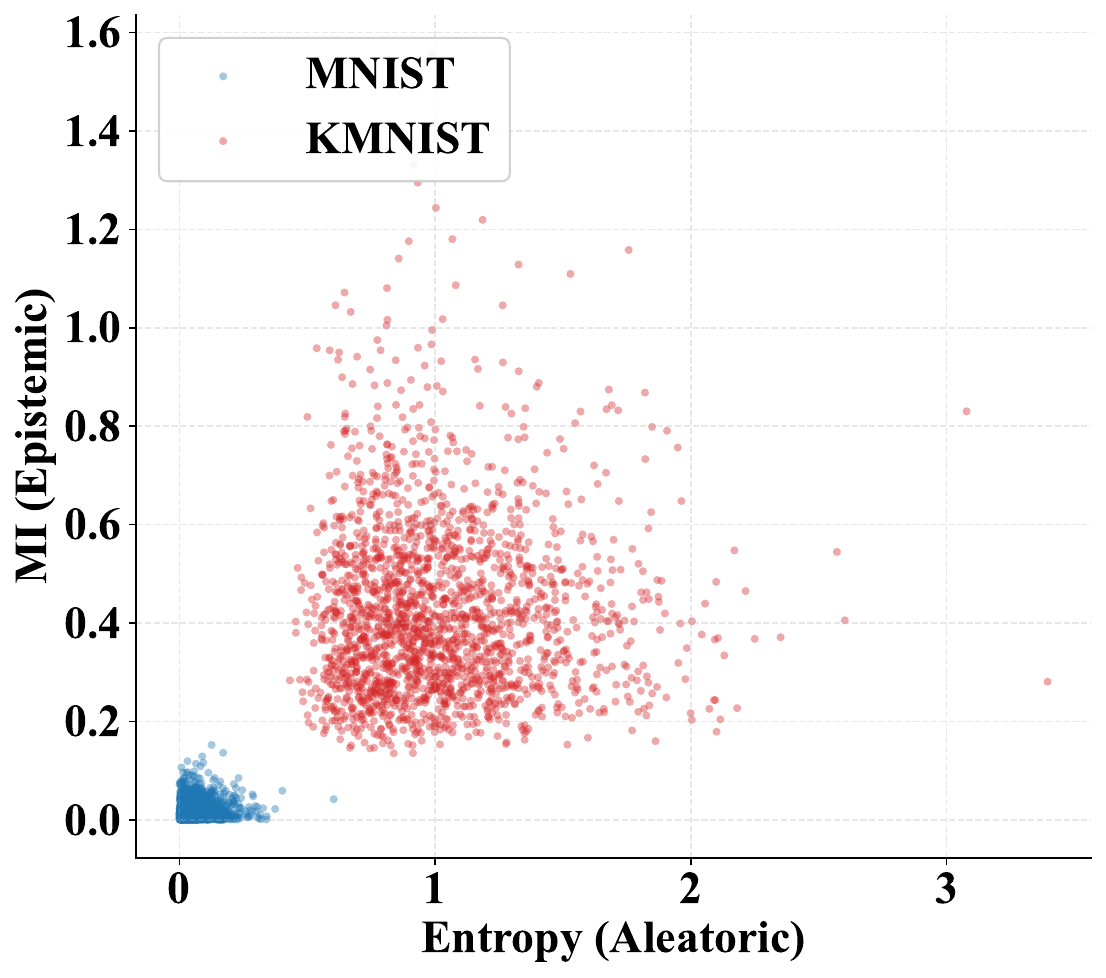}
  \caption{\textsc{MNIST} vs \textsc{KMNIST}}
\end{subfigure}
\hfill
\begin{subfigure}[b]{0.24\textwidth}
  \centering
  \safeincludegraphics[width=\linewidth]{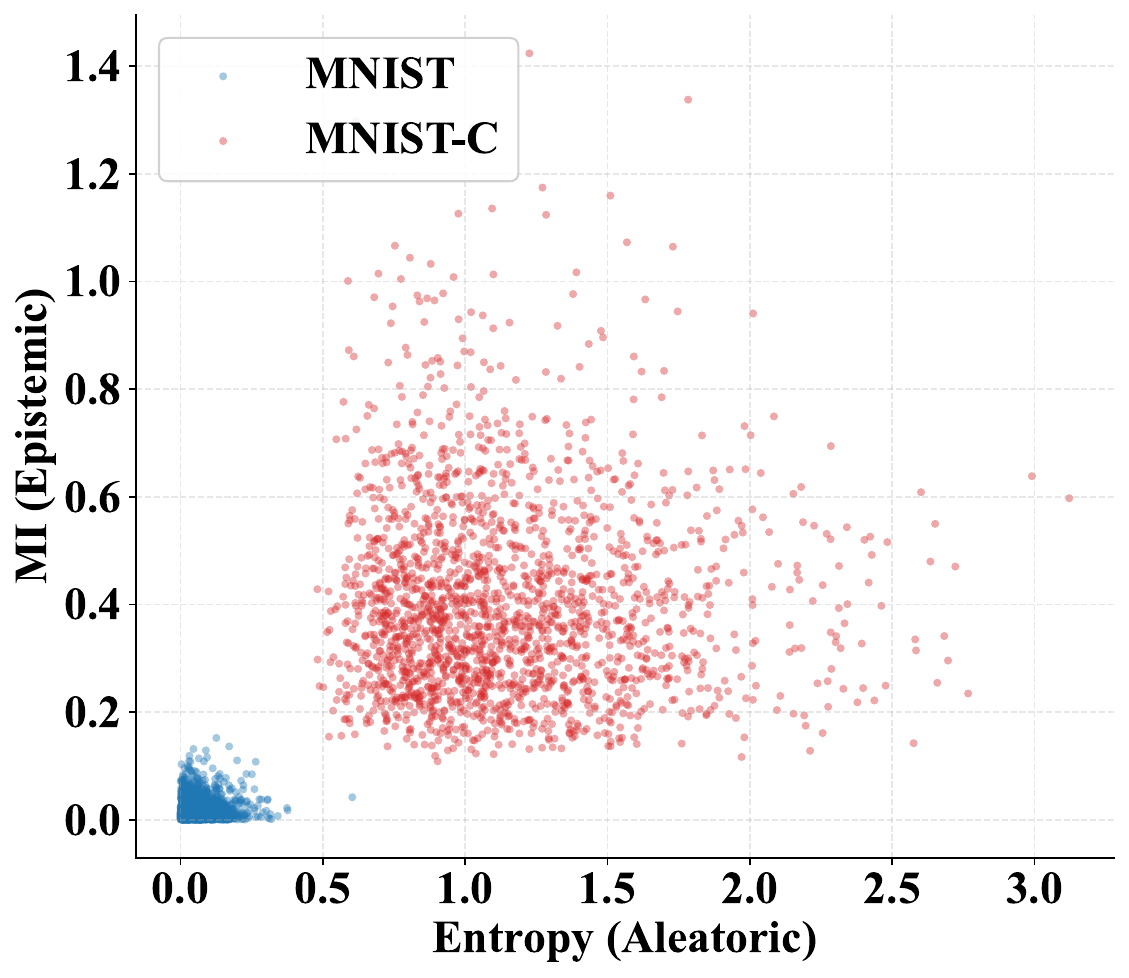}
  \caption{\textsc{MNIST} vs \textsc{MNIST}-C}
\end{subfigure}
\hfill
\begin{subfigure}[b]{0.24\textwidth}
  \centering
  \safeincludegraphics[width=\linewidth]{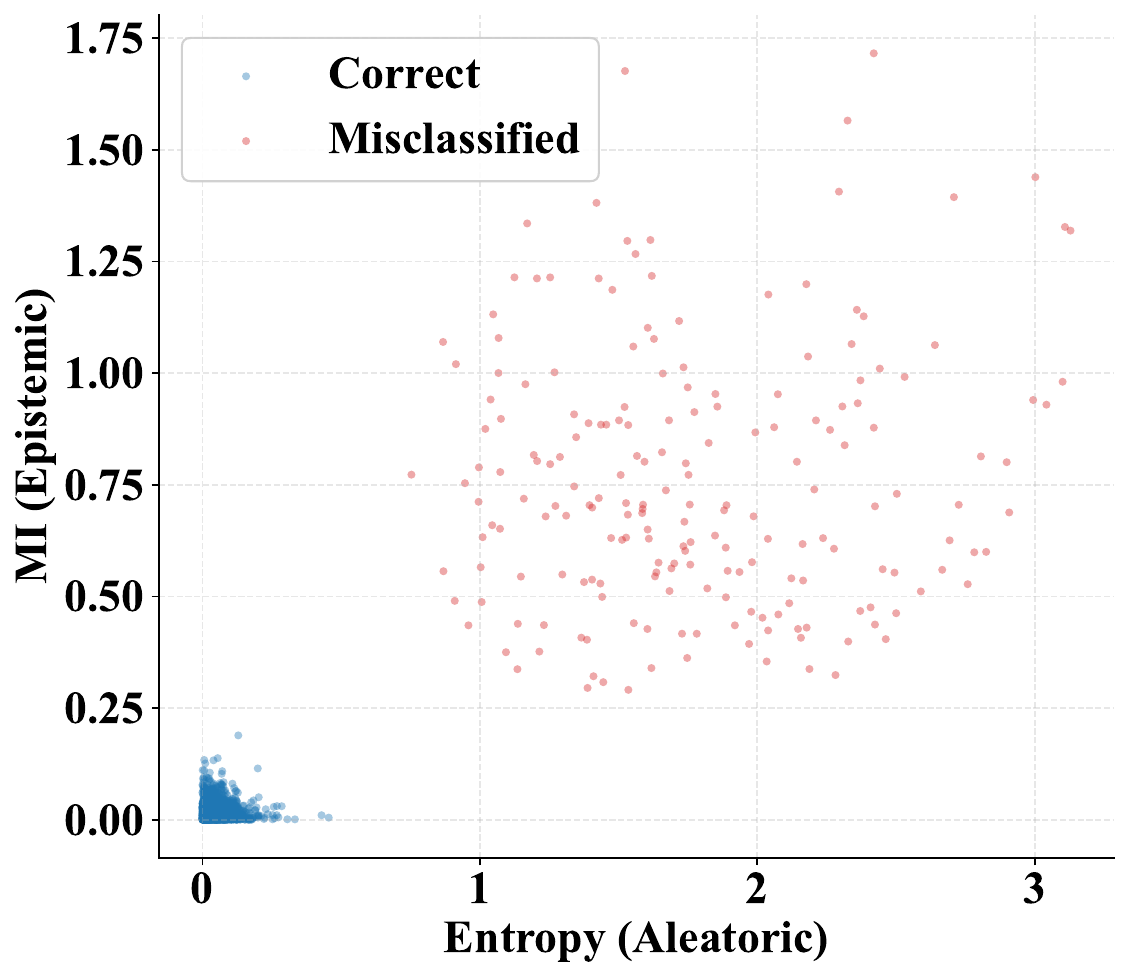}
  \caption{Correct~vs~Misclass}
\end{subfigure}

\begin{subfigure}[b]{0.24\textwidth}
  \centering
  \safeincludegraphics[width=\linewidth]{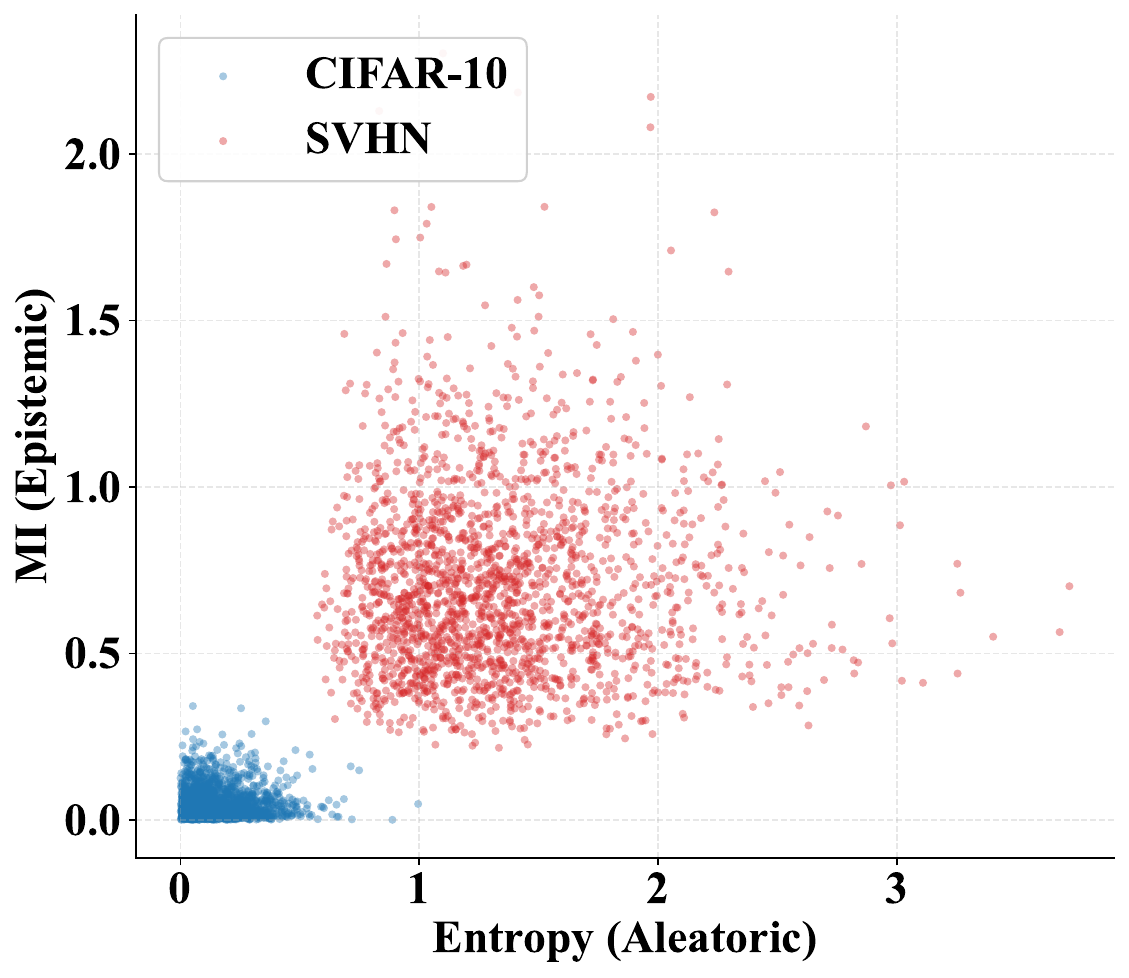}
  \caption{\textsc{CIFAR}-10 vs \textsc{SVHN}}
\end{subfigure}
\hfill
\begin{subfigure}[b]{0.24\textwidth}
  \centering
  \safeincludegraphics[width=\linewidth]{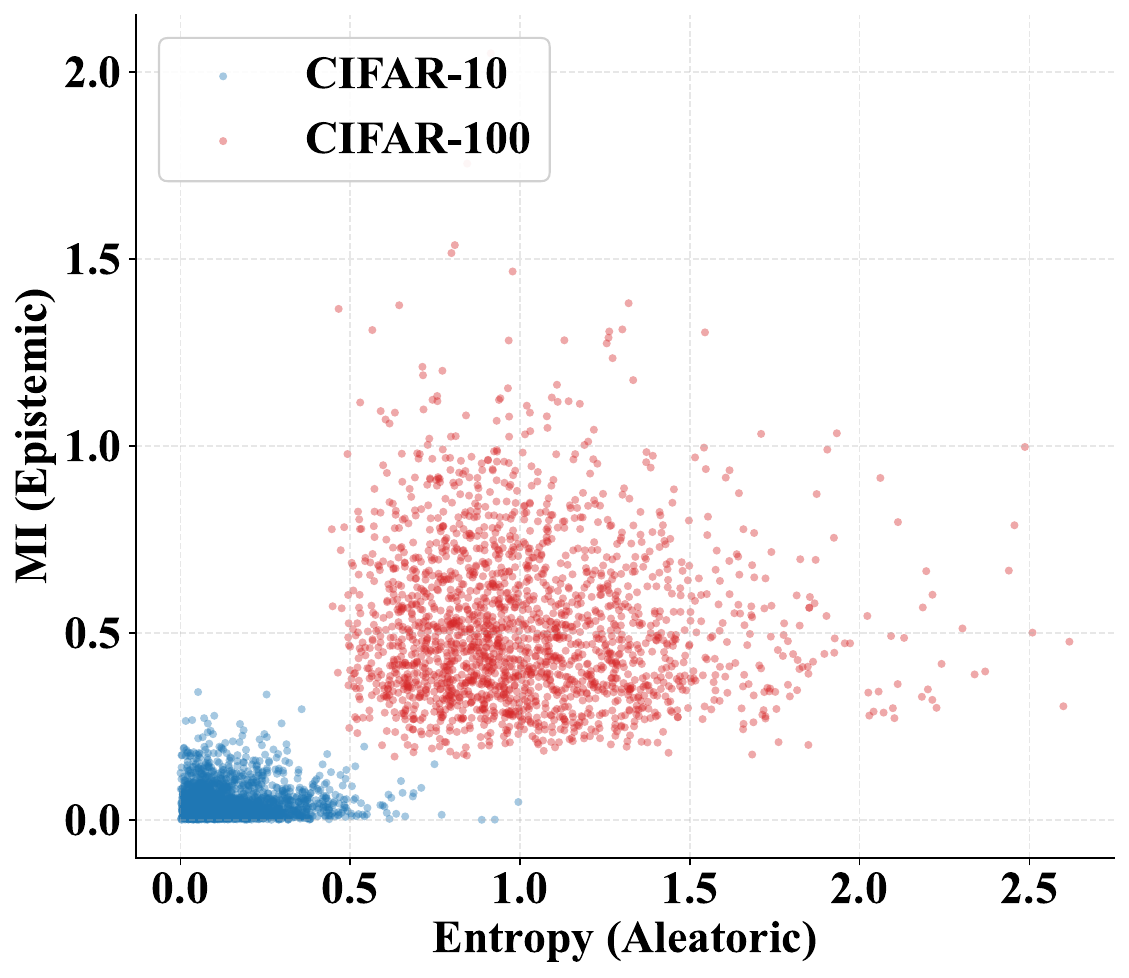}
  \caption{\textsc{CIFAR}-10 vs \textsc{CIFAR}-100}
\end{subfigure}
\hfill
\begin{subfigure}[b]{0.24\textwidth}
  \centering
  \safeincludegraphics[width=\linewidth]{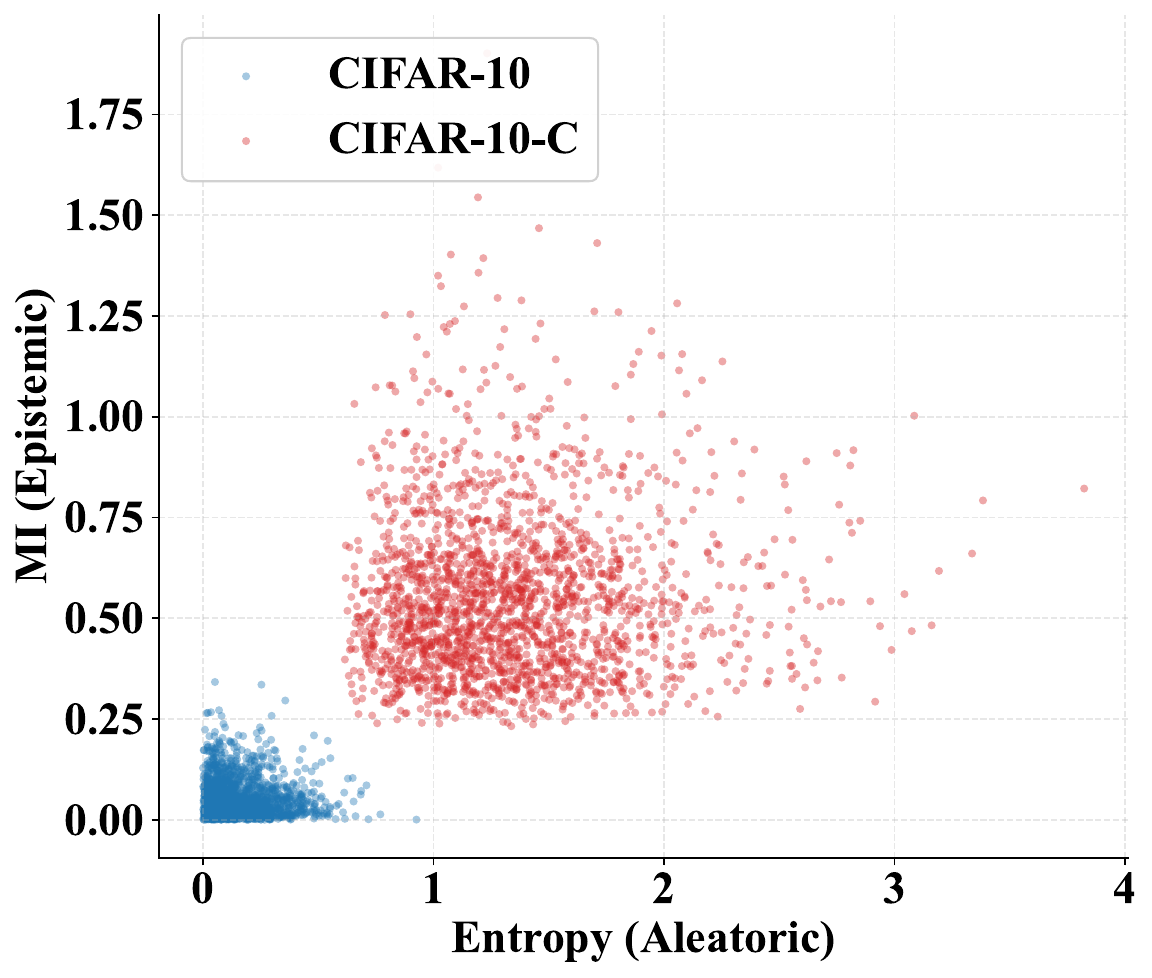}
  \caption{\textsc{CIFAR}-10 vs \textsc{CIFAR}-10-C}
\end{subfigure}
\hfill
\begin{subfigure}[b]{0.24\textwidth}
  \centering
  \safeincludegraphics[width=\linewidth]{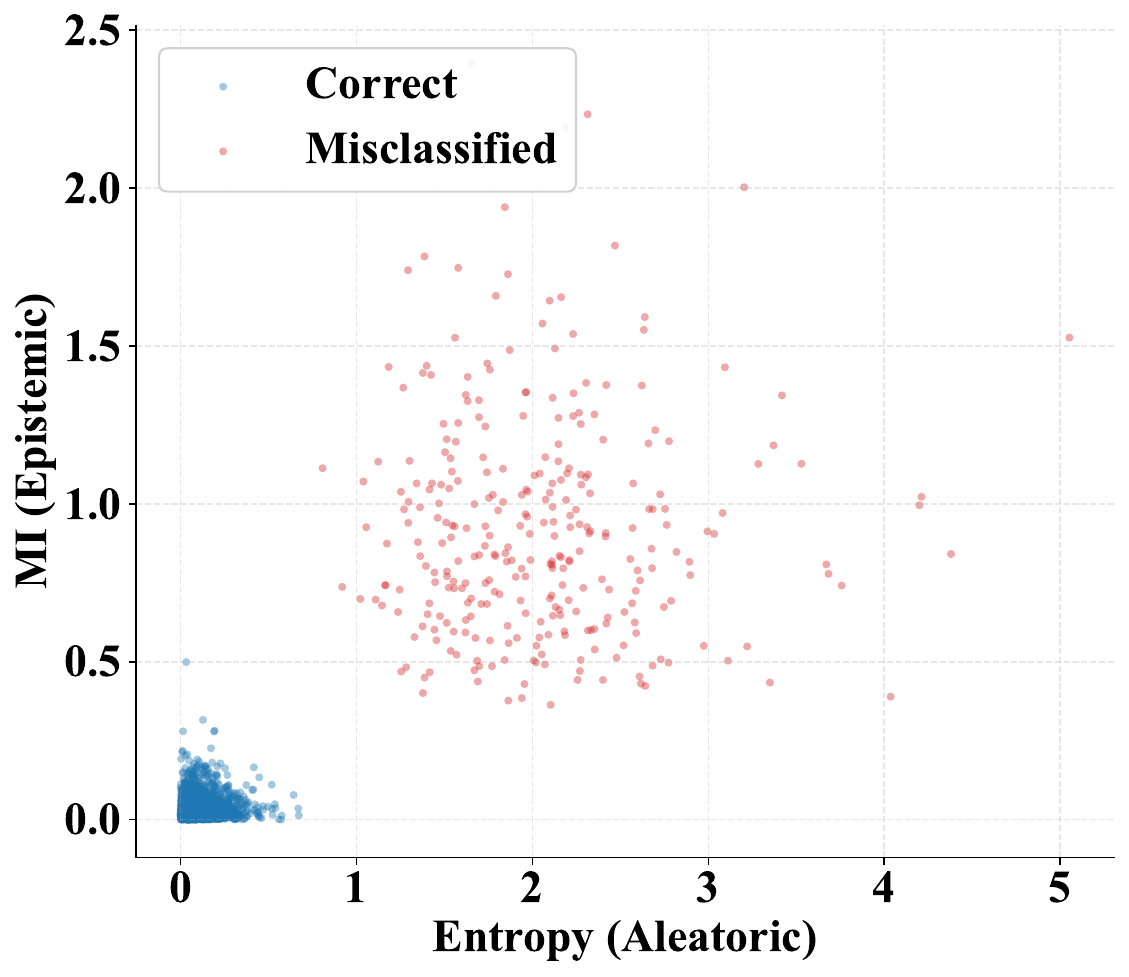}
  \caption{Correct~vs~Misclass}
\end{subfigure}

\caption{
Entropy (aleatoric) vs.\ \MI{} (epistemic) scatter plots for \gemfi{}.
Top row: \textsc{MNIST} shifts; bottom row: \textsc{CIFAR}-10 shifts.
Panels (d) and (h) show Correct~vs~Misclass for \textsc{MNIST} and \textsc{CIFAR}-10, respectively.
\ID{} samples (blue) cluster in the low-entropy, low-\MI{} region, while \OOD{} and corrupted samples (red) exhibit higher values,
enabling effective threshold-based \OOD{} detection.
}
\label{fig:entropy-mi-scatter}
\end{figure*}


\subsection{Score Comparison Boxplots}
\label{app:boxplots}
Figure~\ref{fig:boxplots} compares common \OOD{} scoring functions--\MaxP{}, predictive entropy, \MI{}, and total evidence $\alpha_0$--across \ID (CIFAR-10), near-\OOD{} (CIFAR-100), and far-\OOD{} (SVHN) samples. Well-separated score distributions indicate stronger discriminative uncertainty. In our setting, \MI{} and $\alpha_0$ provide the clearest separation, supporting their use as primary epistemic and evidential signals.
\begin{figure}[!t]
\centering
\begin{subfigure}[b]{0.48\columnwidth}
  \centering
  \safeincludegraphics[width=\linewidth]{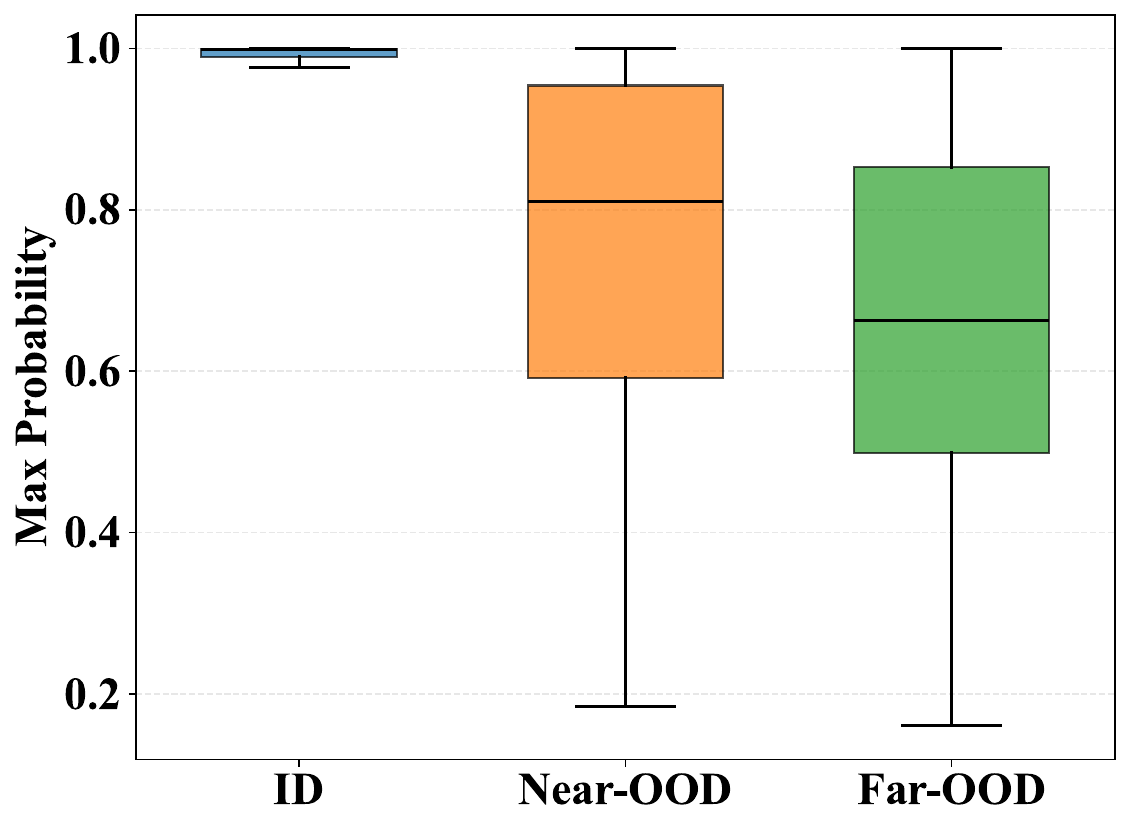}
  \caption{\MaxP{} scores}
\end{subfigure}
\hfill
\begin{subfigure}[b]{0.48\columnwidth}
  \centering
  \safeincludegraphics[width=\linewidth]{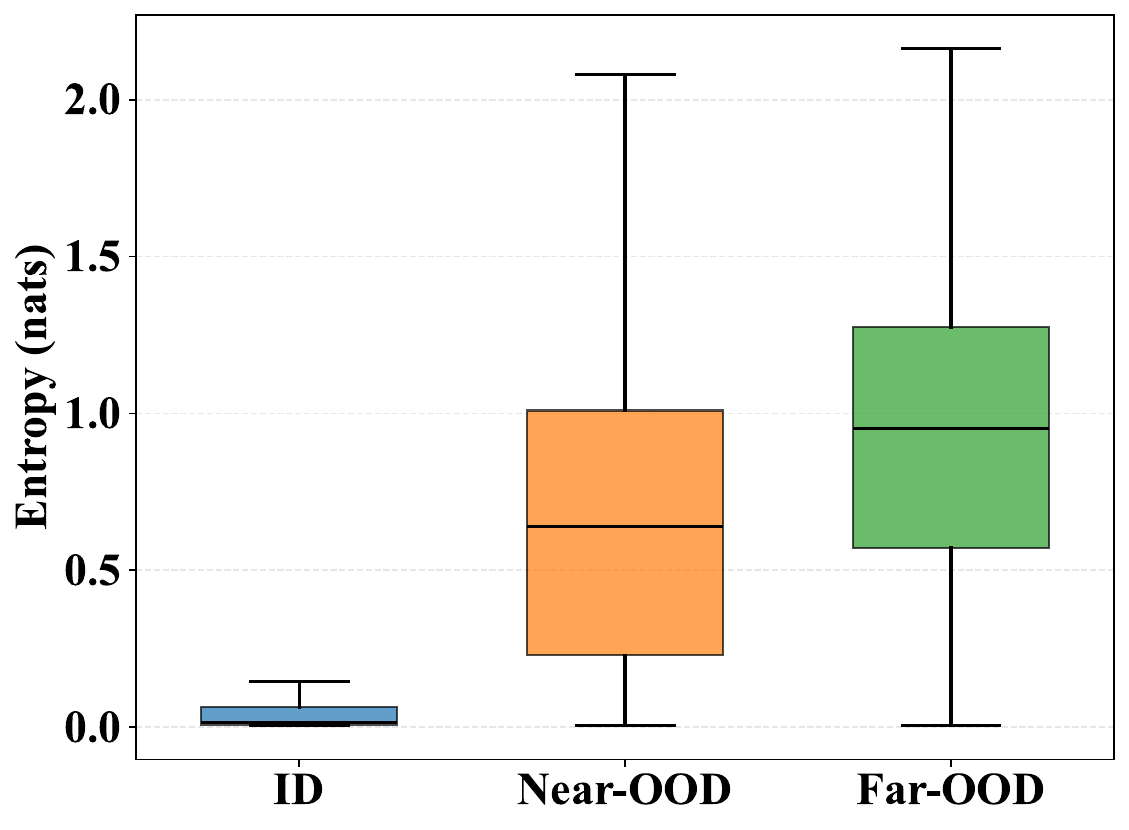}
  \caption{Entropy scores}
\end{subfigure}
\begin{subfigure}[b]{0.48\columnwidth}
  \centering
  \safeincludegraphics[width=\linewidth]{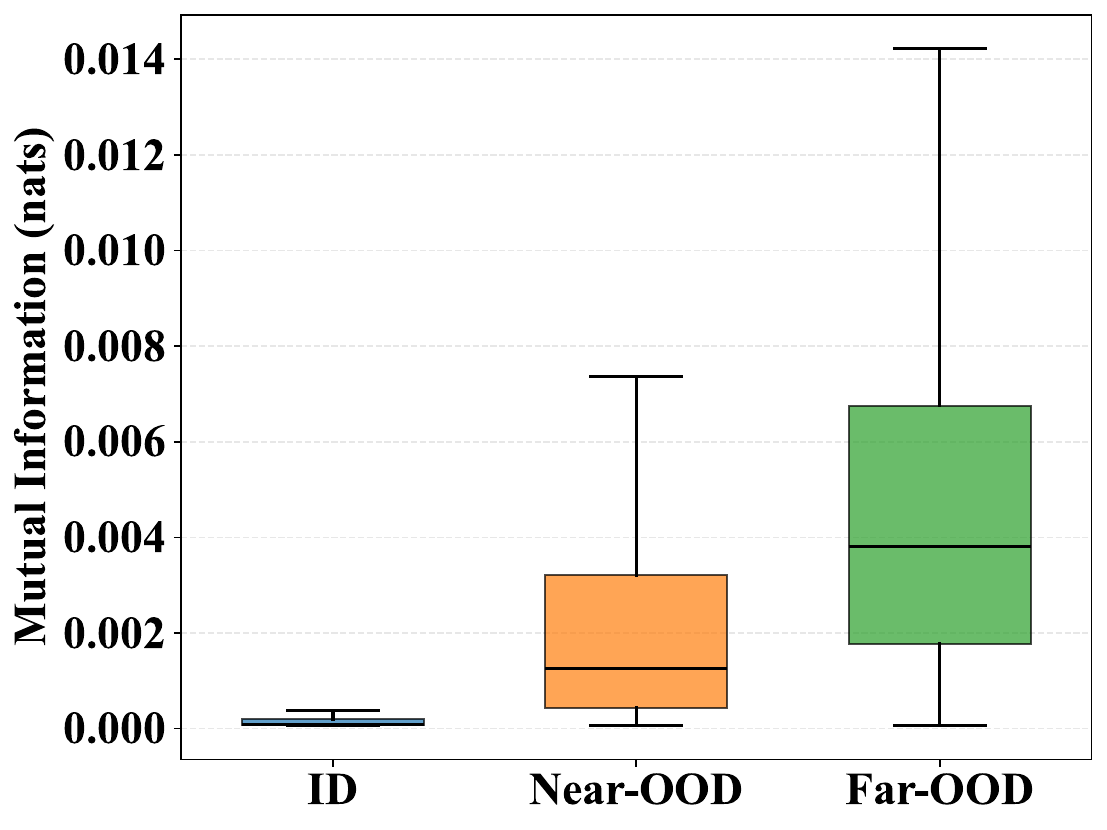}
  \caption{\textsc{\MI{}} scores}
\end{subfigure}
\hfill
\begin{subfigure}[b]{0.48\columnwidth}
  \centering
  \safeincludegraphics[width=\linewidth]{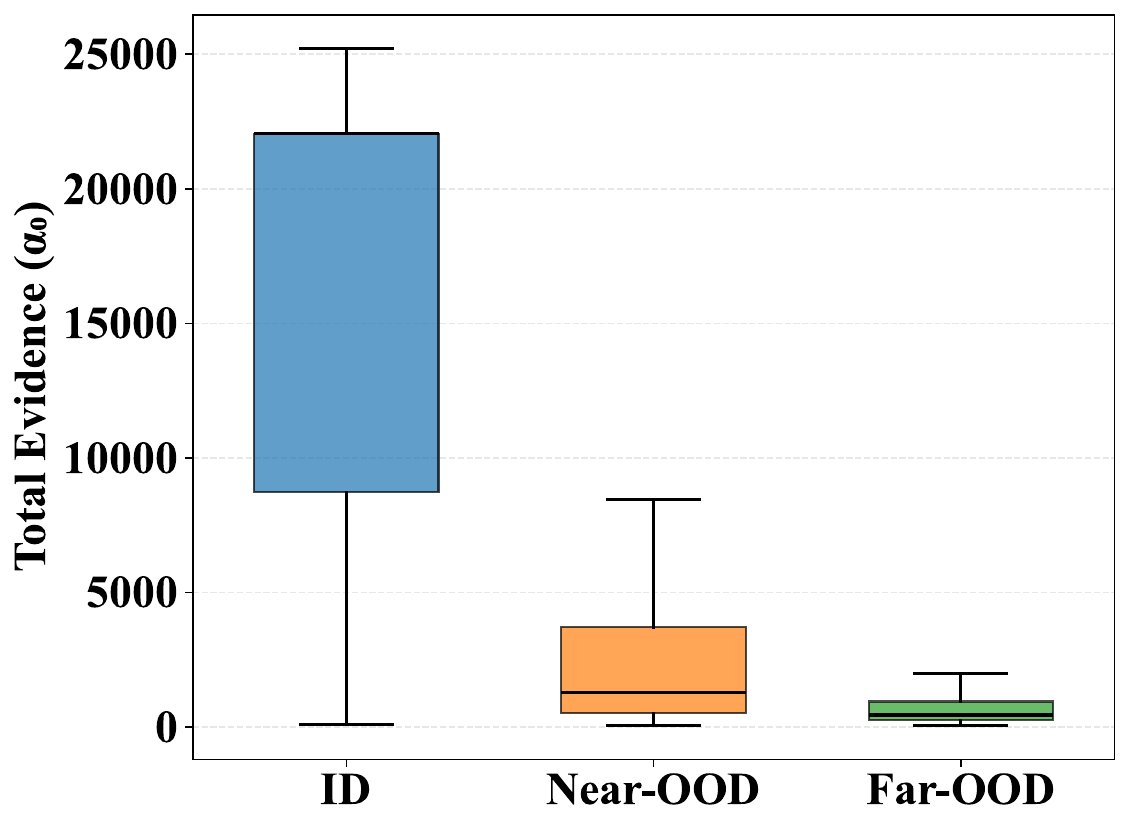}
  \caption{Total evidence $\alpha_0$}
\end{subfigure}
\caption{
Box plots comparing uncertainty scores for \textsc{\ID{}} (\textsc{CIFAR}-10), Near-\textsc{\OOD{}} (\textsc{CIFAR}-100), and Far-\textsc{\OOD{}} (\textsc{SVHN}).
\textsc{\MI{}} and $\alpha_0$ show the clearest separation between \textsc{\ID{}} and \textsc{\OOD{}} samples.
}
\label{fig:boxplots}
\end{figure}

\paragraph{Discussion.}
The boxplots in Figure~\ref{fig:boxplots} summarize uncertainty statistics over \ID, near-\OOD{}, and far-\OOD{} samples, emphasizing separation between regimes rather than a single scalar score.
Compared to baselines, \gemfi{} yields higher epistemic indicators (e.g., entropy/\MI{} proxies) on \OOD{} while maintaining lower uncertainty on \ID{} data, suggesting a better trade-off between selectivity and predictive sharpness.
Importantly, the improved separation is most visible in the near-\OOD{} setting, where semantic overlap makes \OOD{} detection challenging and overconfidence is common~\cite{hendrycks2017baseline}.



\subsection{\OOD{} detection performance on dataset-shift benchmarks (\AUPR{}/\AUROC{})}
\label{app:ood_dataset_shift}

Table~\ref{tab:ood_aupr_auroc} summarizes dataset-shift results for \gemfi{}, reporting \ID{} test accuracy and \OOD{} detection
metrics computed from multiple uncertainty scores (\MaxP{}, total evidence $\alpha_0$, Energy, predictive Entropy, and mixture-aware \MI{}).
For the CIFAR-10$\rightarrow$TinyImageNet benchmark, we resize TinyImageNet images from the original $64\times64$ to $32\times32$ to match
the CIFAR-10 input resolution used by our model. This design choice avoids introducing input resolution as an additional confounding factor
in the dataset-shift setting, so changes in uncertainty scores and \OOD{} metrics are primarily attributable to distribution shift rather than
to input-size or architecture/preprocessing differences. For \OOD{} detection, we treat \OOD{} samples as the positive class and report both
\AUPR{} and \AUROC{} (higher is better).

Beyond detection metrics, we include evidential/epistemic diagnostics. In our setting, total evidence $\alpha_0$ and mixture-aware \MI{}
are expected to be relatively high on \ID{} inputs and to decrease on \OOD{} inputs, reflecting reduced support and increased epistemic
uncertainty under distribution shift. We additionally report Energy and predictive Entropy as complementary uncertainty signals: Energy
provides a calibrated separation cue in logit/probability space, whereas Entropy summarizes overall predictive uncertainty.
Taken together, improvements in \AUPR{}/\AUROC{} across these scores (alongside the desired \ID$\uparrow$/\OOD$\downarrow$ trends for
$\alpha_0$ and \MI{}) provide consistent evidence that \gemfi{} yields robust \OOD{} separation without compromising \ID{} accuracy.

\begin{table*}[!t]
\caption{\OOD{} detection performance of \gemfi{} on dataset-shift benchmarks. For each scoring function, reported are the \ID{} test accuracy and the corresponding \OOD{} detection metrics \AUPR{} and \AUROC{}.}
\label{tab:ood_aupr_auroc}
\centering
\scriptsize
\begin{tabular}{l c ccccc ccccc}
\toprule
\multirow{2}{*}{Dataset} &
\multirow{2}{*}{Test Acc.$\uparrow$} &
\multicolumn{5}{c}{\AUPR{}$\uparrow$} &
\multicolumn{5}{c}{\AUROC{}$\uparrow$} \\
\cmidrule(lr){3-7}\cmidrule(lr){8-12}
 & &
\MaxP{}$\uparrow$ & $\alpha_0$$\uparrow$ & Energy$\uparrow$ & Entropy$\uparrow$ & \MI{}$\uparrow$ &
\MaxP{}$\uparrow$ & $\alpha_0$$\uparrow$ & Energy$\uparrow$ & Entropy$\uparrow$ & \MI{}$\uparrow$ \\
\midrule
MNIST$\rightarrow$KMNIST       & 98.78 & 99.95 & 99.96 & 99.97 & 99.98 & 99.99 & 99.94 & 99.93 & 99.97 & 99.98 & 99.99 \\
MNIST$\rightarrow$FMNIST       & 98.78 & 99.99 & 99.99 & 99.99 & 99.98 & 99.99 & 99.99 & 99.99 & 99.98 & 99.98 & 99.99 \\
CIFAR-10$\rightarrow$SVHN      & 93.75 & 91.27 & 92.59 & 91.35 & 92.16 & 93.06 & 93.65 & 95.09 & 94.03 & 94.83 & 95.41  \\
CIFAR-10$\rightarrow$CIFAR-100 & 93.75 & 90.30 & 90.20 & 90.50 & 90.75 & 89.60 & 88.06 & 89.06 & 88.46 & 88.94 & 89.22 \\
CIFAR-10$\rightarrow$TinyImageNet & 93.47 & 89.23 & 89.68 & 89.48 & 89.78 & 90.37 & 88.06 & 89.82 & 88.51 & 89.04 & 89.93 \\
\bottomrule
\end{tabular}
\end{table*}

\section{Sensitivity Analysis}
\label{app:sensitivity_analysis}
\subsection{Qualitative Comparison with Baselines}
Figure~\ref{fig:tsne-comparison} presents a side-by-side comparison of the latent structures learned by \gemcore{} (top row), \gemmix{} (middle row), and \gemfi{} (bottom row).
The first column shows raw feature embeddings, while the second column depicts the normalized space used for density estimation.
The third column overlays \textsc{OOD} samples (SVHN/\textsc{CIFAR}-100) on \textsc{ID} data (\textsc{CIFAR}-10).
Notably, \gemfi{} achieves the most compact class clustering and the clearest separation between \textsc{ID} and \textsc{OOD} regions, validating the synergy between the mixture-of-beliefs architecture and Fisher-informed regularization.
\begin{figure*}[!t]
\centering
\newcommand{\tsneW}{0.85\textwidth}

\captionsetup[subfigure]{skip=0pt}

\begin{subfigure}[t]{\tsneW}
  \centering
  \safeincludegraphics[width=\linewidth]{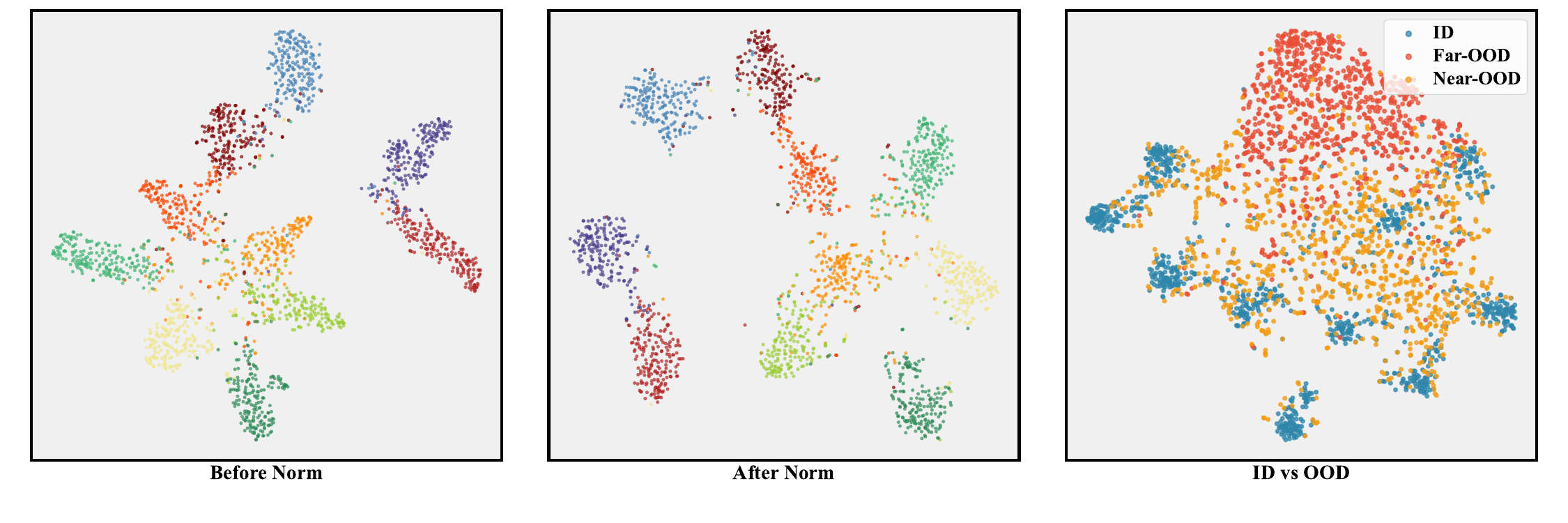}
  \caption{GEM-CORE}
\end{subfigure}\\

\begin{subfigure}[t]{\tsneW}
  \centering
  \safeincludegraphics[width=\linewidth]{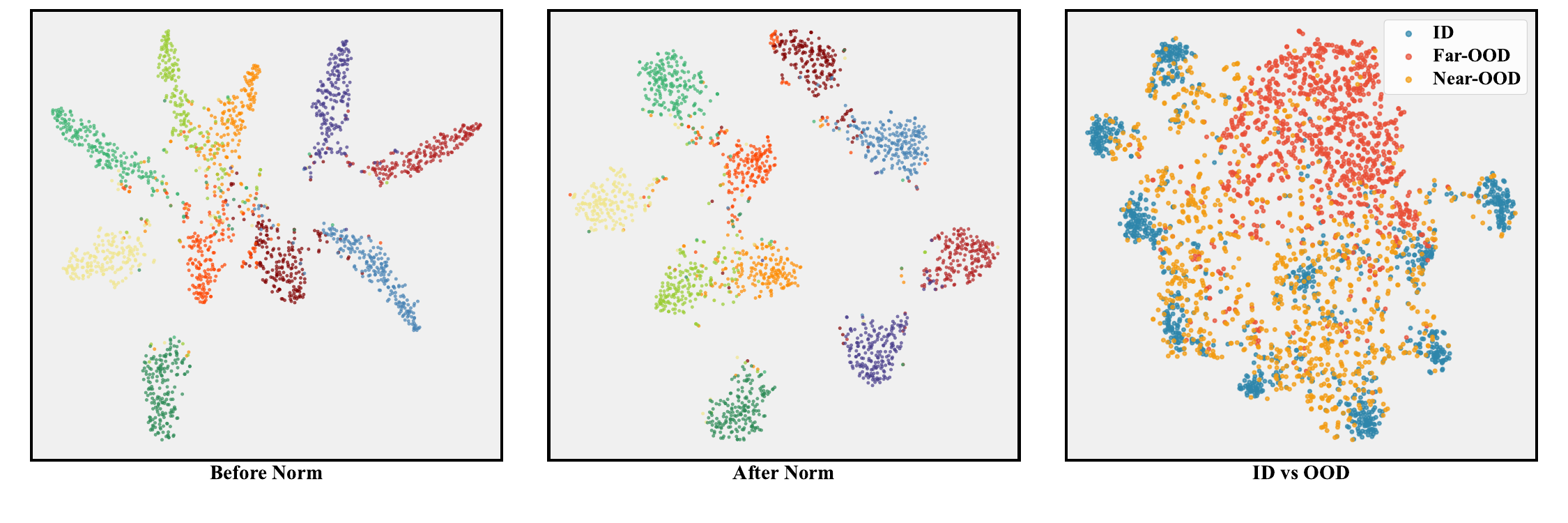}
  \caption{GEM-MIX}
\end{subfigure}\\

\begin{subfigure}[t]{\tsneW}
  \centering
  \safeincludegraphics[width=\linewidth]{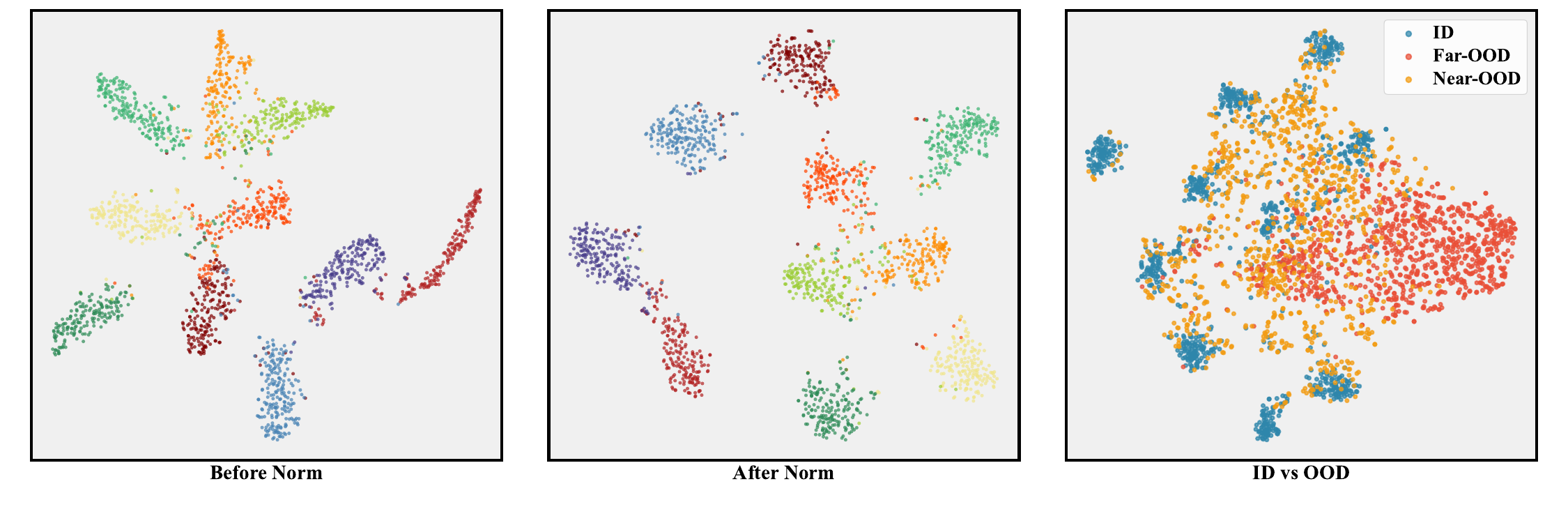}
  \caption{GEM-FI}
\end{subfigure}

\caption{
Qualitative comparison of feature spaces across methods.
Rows: GEM-CORE (top), GEM-MIX (middle), GEM-FI (bottom).
Columns: Before Normalization, After Normalization, ID (Blue) vs OOD (Red/Orange).
}
\label{fig:tsne-comparison}
\end{figure*}

\subsection{Effect of $\lambda$ on conflict score}
\label{app:conflict-lambda}

Figure~\ref{fig:app:lambda} illustrates how the mixing coefficient $\lambda$
controls the relative contribution of inter-class and intra-class conflict
in the \gemfi{} formulation. We define the conflict score $C$ as a weighted combination of inter-class disagreement (variance of means) and intra-class disagreement (mean of variances), modulated by $\lambda \in [0, 1]$.
When $\lambda = 0$, the conflict score depends solely on inter-class disagreement,
resulting in vertical gradients dominated by $C_{\text{inter}}$.
As $\lambda$ increases, the influence of intra-class conflict becomes more pronounced,
leading to smoother diagonal transitions across the conflict landscape.
At $\lambda = 1$, the score is fully determined by intra-class inconsistency,
encouraging more stable mixture allocations and reducing sensitivity to spurious
inter-class fluctuations near decision boundaries.

\begin{figure*}[!t]
  \centering
  \safeincludegraphics[width=0.95\textwidth]{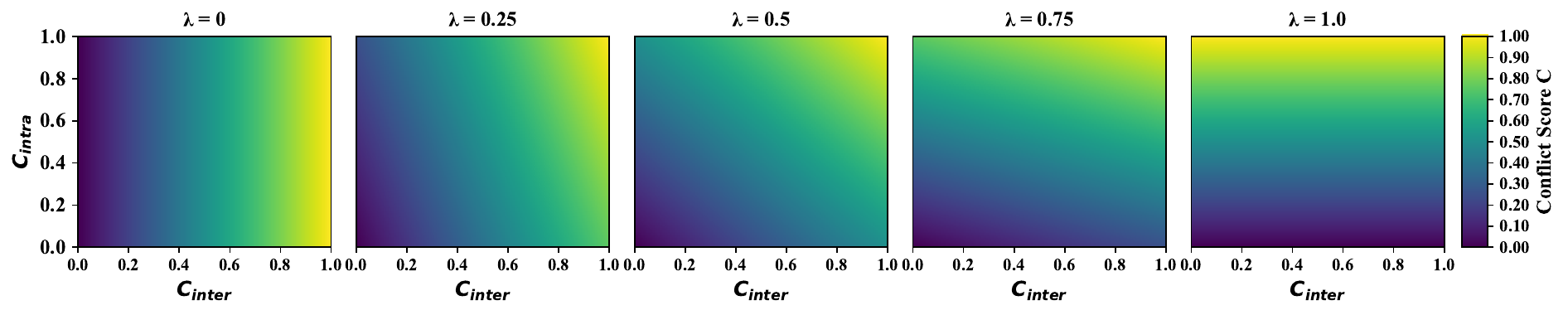}
  \caption{Effect of the mixing coefficient $\lambda$ on the conflict score $C$ in \gemfi{}.
Each panel shows $C$ as a function of inter-class conflict $C_{\text{inter}}$
and intra-class conflict $C_{\text{intra}}$ for a fixed value of $\lambda$
(from left to right: $\lambda \in {0, 0.25, 0.5, 0.75, 1.0}$).
Larger values of $C$ indicate stronger disagreement between evidential components.
  }
  \label{fig:app:lambda}
\end{figure*}

\subsection{Sensitivity to $\lambda_{FI}$ and $\lambda_{KL}$}
\label{app:sensitivity_lambda}

To complement the mixture-size analysis, we also examine sensitivity to the Fisher regularization strength $\lambda_{FI}$ and the evidential regularization coefficient $\lambda_{KL}$ on CIFAR-10$\rightarrow$SVHN. Across both sweeps, the default setting remains competitive while nearby values yield similar accuracy and OOD separation, indicating that the method is not driven by a narrow hyperparameter sweet spot.

\begin{table}[H]
\caption{Sensitivity to $\lambda_{FI}$ on CIFAR-10$\rightarrow$SVHN.}
\label{tab:lambda_fi_appendix}
\centering
\compacttablesetup
\begin{tabular}{c cc}
\toprule
$\lambda_{FI}$ & Test Acc.$\uparrow$ & Epis.\ \AUPR{}$\uparrow$ \\
\midrule
0.01 & 93.68 $\pm$ 0.40 & 91.43 $\pm$ 0.55 \\
0.05 & 93.71 $\pm$ 0.30 & 92.12 $\pm$ 0.45 \\
0.10 & \textbf{93.75 $\pm$ 0.36} & \textbf{92.59 $\pm$ 0.31} \\
0.30 & 93.65 $\pm$ 0.35 & 92.33 $\pm$ 0.42 \\
1.00 & 93.40 $\pm$ 0.50 & 90.80 $\pm$ 0.80 \\
\bottomrule
\end{tabular}
\end{table}

\begin{table}[H]
\caption{Sensitivity to $\lambda_{KL}$ on CIFAR-10$\rightarrow$SVHN.}
\label{tab:lambda_kl_appendix}
\centering
\compacttablesetup
\begin{tabular}{c cc}
\toprule
$\lambda_{KL}$ & Test Acc. & Epis. AUPR \\
\midrule
$10^{-5}$ & \underline{93.60 $\pm$ 0.28} & \underline{91.9 $\pm$ 0.42} \\
$10^{-4}$ & \textbf{93.75 $\pm$ 0.36} & \textbf{92.59 $\pm$ 0.31} \\
$10^{-3}$ & 93.30 $\pm$ 0.32 & 92.2 $\pm$ 0.50 \\
$10^{-2}$ & 92.50 $\pm$ 0.45 & 91.5 $\pm$ 0.65 \\
\bottomrule
\end{tabular}
\end{table}

\subsection{Effect of mixture size $K$}
\label{app:sensitivity_k}
To assess the impact of the mixture size in \gemfi{}, we vary the number of heads $K \in {3,4,5}$ and report \OOD{} detection performance together with \ID{} test accuracy on CIFAR-10 (Table~\ref{tab:gemfi_k_cifar}).
We observe that performance is relatively stable across different values of $K$, while $K=3$ provides the best overall trade-off in these runs, indicating that a small mixture is sufficient to capture meaningful epistemic structure without over-parameterizing the model.

\begin{table}[H]
\caption{
\AUPR{} scores of \textsc{\OOD{}} detection for \gemfi{} with different mixture sizes $K$ on \textsc{CIFAR}-10,
based on aleatoric and epistemic uncertainty, along with test accuracy.
}
\label{tab:gemfi_k_cifar}
\centering
\compacttablesetup
\begin{tabular}{c c cccc}
\toprule
$K$ & Test Acc.$\uparrow$ &
\multicolumn{2}{c}{CIFAR-10 $\rightarrow$ SVHN} &
\multicolumn{2}{c}{CIFAR-10 $\rightarrow$ CIFAR-100} \\
\cmidrule(lr){3-4}\cmidrule(lr){5-6}
 & (\%) & Alea.$\uparrow$ & Epis.$\uparrow$ & Alea.$\uparrow$ & Epis.$\uparrow$ \\
\midrule
3 
& \textbf{93.75 $\pm$ 0.36} 
& \textbf{91.27 $\pm$ 0.29} 
& \textbf{92.59 $\pm$ 0.31} 
& \textbf{90.30 $\pm$ 0.06} 
& \textbf{90.20 $\pm$ 0.06} \\
4 
& 91.71 $\pm$ 0.14
& 90.70 $\pm$ 0.30 
& 92.03 $\pm$ 0.21 
& 89.04 $\pm$ 0.12
& 87.63 $\pm$ 0.09\\
5 
& 92.62 $\pm$ 0.20
& 90.62 $\pm$ 0.14
& 92.54 $\pm$ 0.14
& 90.28 $\pm$ 0.08
& 90.15 $\pm$ 0.08\\
\bottomrule
\end{tabular}
\end{table}

\subsection{Support-Conditioned Behavior Diagnostics}
\label{app:support_diagnostics}

We analyze how predictive uncertainty, calibration, energy, and accuracy/confidence vary as a function of a feature-space support proxy.
Support is measured via $k$NN distance ($k{=}10$) to a CIFAR-10 training feature bank.
We report trends for CIFAR-10 (\textsc{ID}), CIFAR-100 (near-\textsc{OOD}), and SVHN (far-\textsc{OOD}).

\begin{figure*}[t]
\centering
\safeincludegraphics[width=0.80\textwidth]{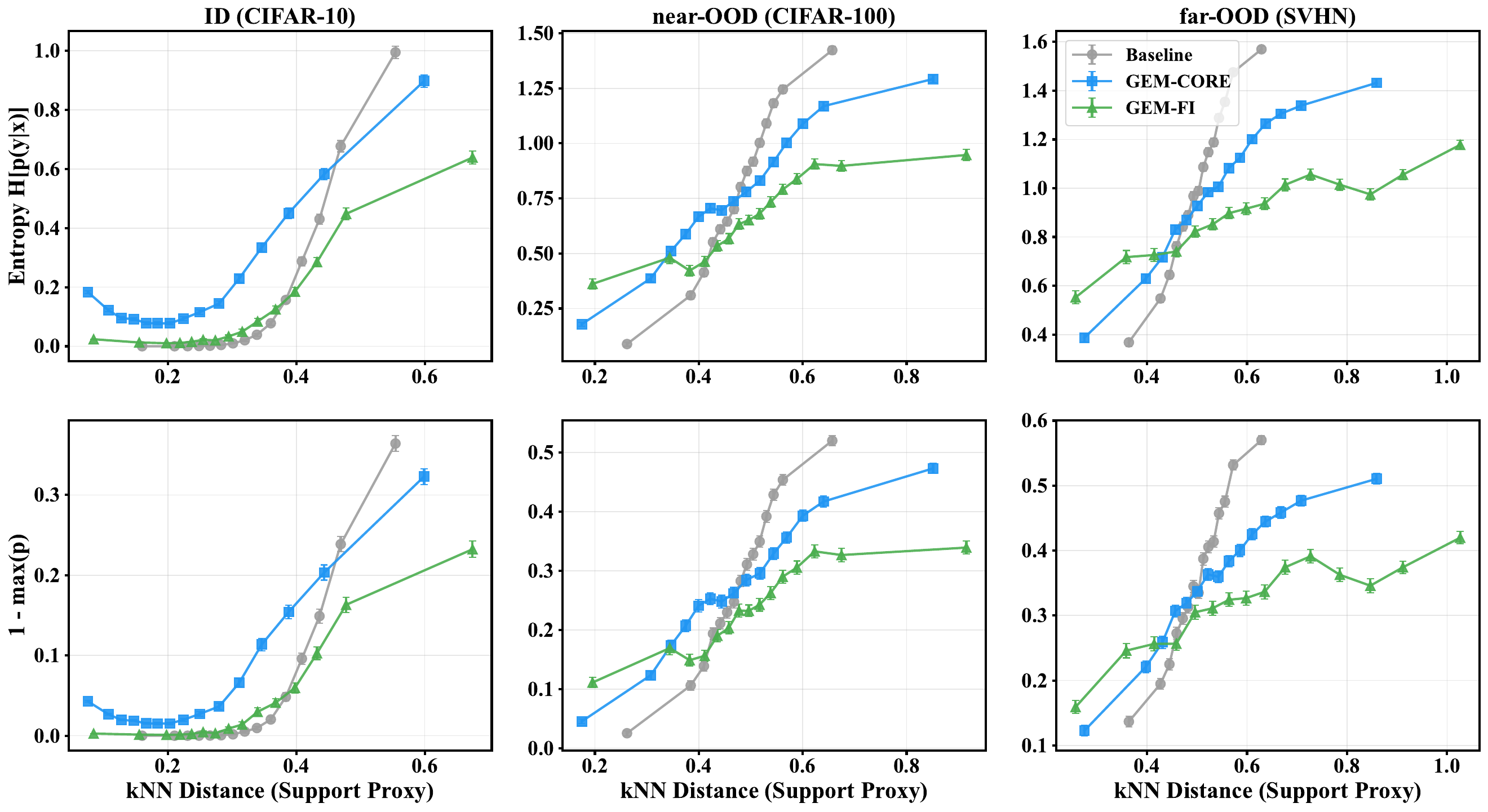}
\caption{Uncertainty vs.\ support (entropy / $1-\max p$).}
\label{fig:supp-uncert}
\end{figure*}

\paragraph{Uncertainty vs.\ support.}
Figure~\ref{fig:supp-uncert} shows that predictive uncertainty increases as $k$NN distance grows (i.e., support decreases) across \textsc{ID}, near-\textsc{OOD}, and far-\textsc{OOD}.
This monotone rise is visible under both entropy and $1-\max p$, indicating that samples farther from the CIFAR-10 feature bank are systematically harder and/or less well supported.
Compared to the baseline, the proposed variants exhibit a more pronounced uncertainty increase in low-support regions, reflecting more cautious behavior under distribution shift.

\begin{figure*}[t]
\centering
\safeincludegraphics[width=0.80\textwidth]{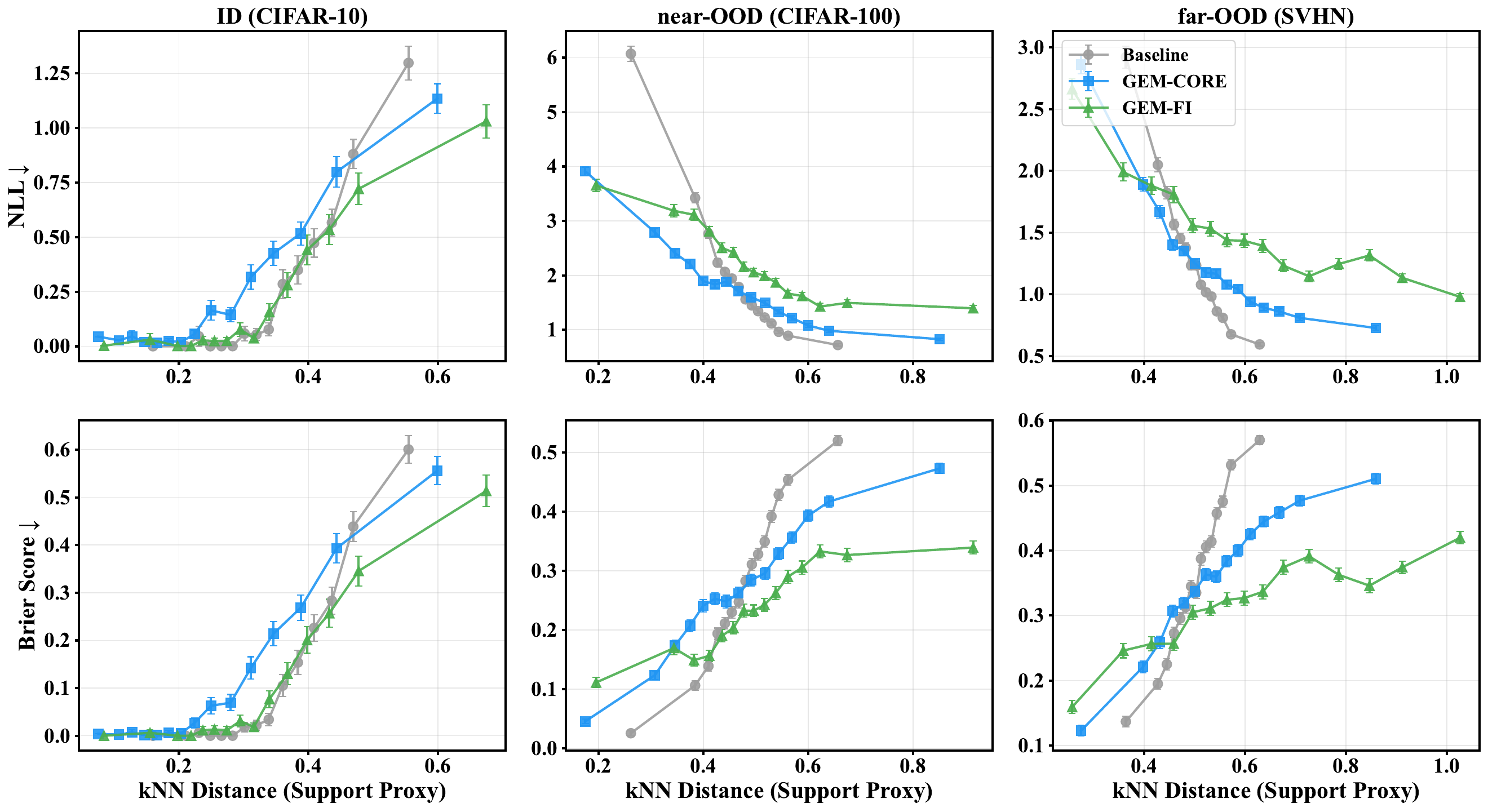}
\caption{Calibration vs.\ support (NLL, Brier; $\downarrow$ better).}
\label{fig:supp-calib}
\end{figure*}

\paragraph{Calibration vs.\ support.}
Figure~\ref{fig:supp-calib} reports support-conditioned calibration using proper scoring rules (NLL and Brier; $\downarrow$ better).
Calibration degrades as support decreases, consistent with low-support inputs being more error-prone and shift-prone.
Importantly, for matched support levels, the proposed methods generally achieve lower NLL and/or Brier score than the baseline, suggesting that the improved uncertainty behavior is accompanied by better-aligned predictive probabilities rather than merely increased conservatism.

\begin{figure*}[t]
\centering
\safeincludegraphics[width=0.80\textwidth]{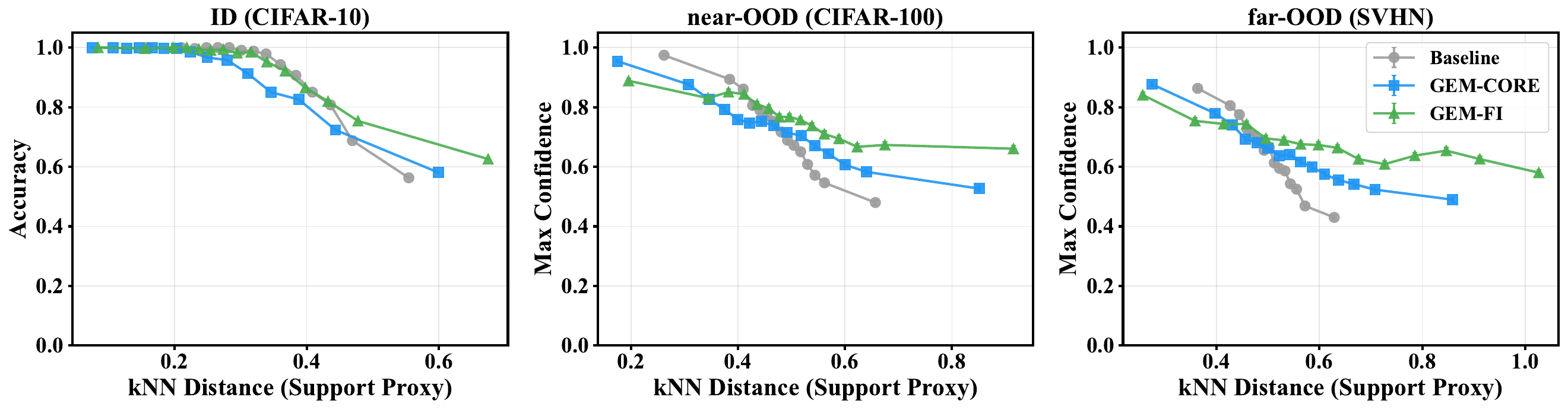}
\caption{Accuracy (\textsc{ID}; $\uparrow$ better) and confidence (\textsc{OOD}; $\downarrow$ better) vs.\ support.}
\label{fig:supp-accconf}
\end{figure*}

\paragraph{Accuracy and confidence vs.\ support.}
Figure~\ref{fig:supp-accconf} connects support to performance and confidence.
On \textsc{ID}, accuracy decreases with increasing $k$NN distance, reflecting that low-support samples are inherently more challenging.
For \textsc{OOD} splits—where accuracy can be less informative due to label-space mismatch—we instead inspect max-confidence: a desirable reliability signature is reduced confidence as support decreases.
The proposed variants display a more support-sensitive confidence profile in low-support bins, consistent with improved selective caution under shift.

\begin{figure}[t]
\centering
\safeincludegraphics[width=0.98\columnwidth]{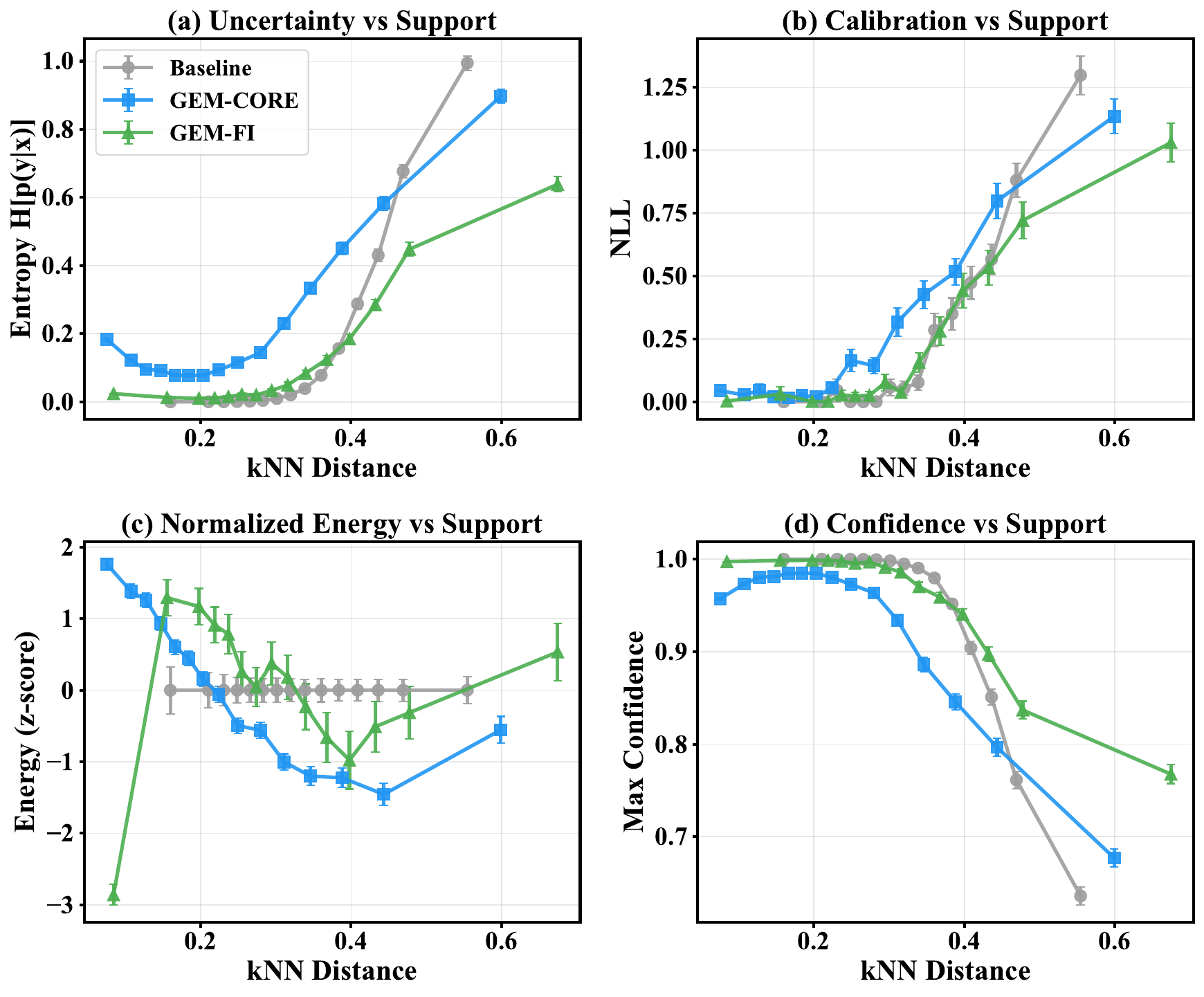}
\caption{Normalized energy vs.\ support. While the relationship is broadly increasing on average, local non-monotonicity can appear in transition regimes near class boundaries or mixed-support regions; this does not materially affect the bulk OOD detection behavior, where energy still serves as a distance-informed support signal.}
\label{fig:supp-energy}
\end{figure}

\paragraph{Normalized energy vs.\ support.}
Figure~\ref{fig:supp-energy} plots the learned energy signal against the same support proxy using a normalized scale (e.g., $z$-scoring) to enable comparison across splits and methods.
The relationship between energy and the $k$NN proxy is not uniformly monotone and can vary by regime, indicating that energy is not acting as a direct, universal support estimator under this crude proxy.
This is expected: the energy head is trained end-to-end as part of the gating/uncertainty mechanism, not to match an explicit density model or support-distance objective.
Accordingly, we interpret energy as a learnable control signal whose utility is evidenced indirectly through the support-conditioned reliability improvements in uncertainty, calibration, and confidence (Figs.~\ref{fig:supp-uncert}--\ref{fig:supp-energy}).

\subsection{Fair Comparison with VOS}
\label{app:fair_vos}

Because VOS is used only in the full GEM-FI pipeline, we also report a matched comparison in which VOS exposure is added to baseline methods and removed from GEM-FI. Table~\ref{tab:fair_vos_appendix} shows that VOS improves all methods to some degree, but does not explain the gains by itself: even without VOS, GEM-FI remains stronger than the VOS-augmented DAEDL and Re-EDL baselines.

\begin{table}[H]
\caption{Fair comparison with VOS on CIFAR-10$\rightarrow$SVHN. Aleatoric and epistemic columns report AUPR using MaxP and $\alpha_0$/MI-style scores, respectively.}
\label{tab:fair_vos_appendix}
\centering
\compacttablesetup
\begin{tabular}{lcc}
\toprule
Method & Alea.\ \AUPR{}$\uparrow$ & Epis.\ \AUPR{}$\uparrow$ \\
\midrule
DAEDL & 85.50 $\pm$ 1.40 & 85.54 $\pm$ 1.40 \\
DAEDL + VOS & 87.12 $\pm$ 1.20 & 87.34 $\pm$ 1.25 \\
Re-EDL & 87.84 $\pm$ 0.96 & 89.89 $\pm$ 1.39 \\
Re-EDL + VOS & 88.95 $\pm$ 1.00 & 90.67 $\pm$ 1.10 \\
GEM-FI (without VOS) & 90.50 $\pm$ 0.30 & 91.82 $\pm$ 0.32 \\
GEM-FI (with VOS) & \textbf{91.27 $\pm$ 0.29} & \textbf{92.59 $\pm$ 0.31} \\
\bottomrule
\end{tabular}
\end{table}

\subsection{Additional VOS and Density-Scaling Ablations}
\label{app:vos_rho_ablation}

Table~\ref{tab:vos_replace_appendix} isolates the role of the VOS boundary-sharpening mechanism by replacing it with simple feature-space noise. VOS remains the stronger auxiliary choice, but the core gating and mixture design still retain substantial performance without structured synthetic outliers. Table~\ref{tab:rho_one_appendix} isolates the contribution of the supplementary density scaler $\rho(z)$: most of the gain is retained when $\rho(z)=1$, while the full model provides a further improvement.

\begin{table}[H]
\caption{Effect of replacing VOS with simple feature-space noise on CIFAR-10$\rightarrow$SVHN.}
\label{tab:vos_replace_appendix}
\centering
\compacttablesetup
\begin{tabular}{lcc}
\toprule
Method & Alea.\ \AUPR{}$\uparrow$ & Epis.\ \AUPR{}$\uparrow$ \\
\midrule
GEM-FI + VOS & \textbf{91.27 $\pm$ 0.29} & \textbf{92.59 $\pm$ 0.31} \\
GEM-FI + feature-space noise & 89.50 $\pm$ 0.35 & 89.80 $\pm$ 0.38 \\
\bottomrule
\end{tabular}
\end{table}

\begin{table}[H]
\caption{Effect of removing the supplementary density scaler on CIFAR-10$\rightarrow$SVHN.}
\label{tab:rho_one_appendix}
\centering
\compacttablesetup
\begin{tabular}{lcc}
\toprule
Model & Alea.\ \AUPR{}$\uparrow$ & Epis.\ \AUPR{}$\uparrow$ \\
\midrule
GEM-FI with $\rho(z)$ & \textbf{91.27 $\pm$ 0.29} & \textbf{92.59 $\pm$ 0.31} \\
GEM-FI with $\rho(z)=1$ & 89.50 $\pm$ 0.35 & 91.00 $\pm$ 0.40 \\
\bottomrule
\end{tabular}
\end{table}

\subsection{Energy--Support Correlation Statistics}
\label{app:energy_support_stats}

To complement the qualitative support-conditioned plots in Section~\ref{app:support_diagnostics}, Table~\ref{tab:energy_support_corr_appendix} reports rank-correlation statistics between learned energy and $k$NN support distance, and Table~\ref{tab:energy_bins_appendix} reports a decile-based summary on CIFAR-10. The average trend is increasing, while local deviations remain concentrated in transition regimes near class boundaries and mixed-support regions.

\begin{table}[H]
\caption{Spearman correlation between learned energy and $k$NN support distance.}
\label{tab:energy_support_corr_appendix}
\centering
\compacttablesetup
\begin{tabular}{lcc}
\toprule
Dataset & Spearman $\rho$ & $p$-value \\
\midrule
MNIST & $0.82 \pm 0.03$ & $< 10^{-6}$ \\
CIFAR-10 & $0.74 \pm 0.05$ & $< 10^{-6}$ \\
\bottomrule
\end{tabular}
\end{table}

\begin{table}[H]
\caption{Mean $k$NN support distance across energy deciles on CIFAR-10 test data. Decile 1 denotes lowest energy (highest support), and decile 10 the highest energy regime.}
\label{tab:energy_bins_appendix}
\centering
\scriptsize
\setlength{\tabcolsep}{4pt}
\resizebox{0.96\linewidth}{!}{%
\begin{tabular}{lcccccccccc}
\toprule
Energy decile & 1 & 2 & 3 & 4 & 5 & 6 & 7 & 8 & 9 & 10 \\
\midrule
Mean $k$NN dist. & 4.8 & 5.5 & 6.3 & 7.0 & 7.8 & 8.6 & 9.4 & 10.3 & 11.4 & 12.5 \\
Std. dev. & 1.2 & 1.4 & 1.7 & 1.9 & 2.1 & 2.4 & 2.7 & 3.1 & 3.5 & 3.8 \\
\bottomrule
\end{tabular}%
}
\end{table}

\subsection{Head Diversity and Routing Specialization}
\label{app:head_diversity}

We further quantify the diversity induced by the mixture architecture. Table~\ref{tab:head_diversity_appendix} shows that head disagreement increases from ID to near-OOD to far-OOD settings, indicating that the mixture captures complementary uncertainty patterns rather than collapsing to identical behavior. Table~\ref{tab:routing_specialization_appendix} summarizes router specialization: easy ID cases concentrate most probability mass on one head, while low-support and OOD inputs induce higher routing entropy and softer allocations.

\begin{table}[H]
\caption{Head-diversity statistics for GEM-FI with $K=3$ on CIFAR-10. Pairwise cosine similarity is averaged over all head pairs.}
\label{tab:head_diversity_appendix}
\centering
\compacttablesetup
\begin{tabular}{lcc}
\toprule
Split & Pairwise cosine sim. & Disagreement rate \\
\midrule
ID (CIFAR-10 test) & 0.91 $\pm$ 0.05 & 9 $\pm$ 2\% \\
Near-OOD (CIFAR-100) & 0.77 $\pm$ 0.09 & 31 $\pm$ 3\% \\
Far-OOD (SVHN) & 0.61 $\pm$ 0.12 & 47 $\pm$ 5\% \\
\bottomrule
\end{tabular}
\end{table}

\begin{table}[H]
\caption{Router specialization summary for GEM-FI with $K=3$ on CIFAR-10. Lower entropy and larger maximum router weight indicate stronger specialization.}
\label{tab:routing_specialization_appendix}
\centering
\compacttablesetup
\begin{tabular}{lcc}
\toprule
Condition & Head entropy $H(\pi)$ & Mean max router wt. \\
\midrule
Easy ID (high-conf., correct) & 0.22 $\pm$ 0.08 & 0.87 $\pm$ 0.04 \\
Boundary (low-conf., correct) & 0.75 $\pm$ 0.12 & 0.58 $\pm$ 0.06 \\
OOD (SVHN) & 1.05 $\pm$ 0.09 & 0.42 $\pm$ 0.05 \\
\bottomrule
\end{tabular}
\end{table}

\section{Failure Modes, Limitations, and Compute Resources}
\label{app:fail_compute}

\subsection{Failure Modes}
\label{app:failures}
While the proposed method demonstrates strong performance in practice, we observe the following failure modes.
First, \textit{head collapse} may occur: without $\pi$-entropy regularization (or with an insufficient coefficient), mixture weights can concentrate on a single head, effectively reducing the model to a single-expert predictor.
Second, \textit{confidence lock-in} persists for a small subset of samples: some misclassified inputs remain highly confident due to shortcut features or class-conditional biases learned early in training, and the gate may not sufficiently down-weight these cases.
Third, \textit{energy-gate saturation} can happen: very large-magnitude energies may saturate the gating function, reducing sensitivity to intermediate support differences; mild evaluation-time desaturation helps mitigate this effect.
Fourth, \textit{expert redundancy} may emerge: multiple heads can converge to similar solutions (especially with limited diversity pressure), which reduces the benefit of maintaining multiple experts and can amplify head collapse.
Finally, \textit{training instability under aggressive settings} can arise: large learning rates, very small batch sizes, or strong weight decay may lead to oscillatory mixture weights and brittle gating dynamics.

\subsection{Limitations}
\label{app:limitations}
Our approach has several practical limitations.
First, performance can be sensitive to mixture-related hyperparameters (e.g., entropy strength, number of heads, and gate scaling); poor settings may yield overly sharp routing or overly uniform routing.
Second, multi-head evaluation increases inference cost: although the added parameters are small, running multiple heads introduces a modest latency overhead compared to a single-head evidential baseline.
Third, the method benefits from stable optimization; in low-precision or memory-constrained regimes, careful tuning (e.g., gradient clipping or conservative schedules) may be required to avoid brittle gating behavior.
Fourth, head behaviors are not inherently interpretable: while mixture weights provide some signal, attributing semantic meaning to each head is not guaranteed without additional constraints.

\subsection{Compute Resources}
\label{app:compute_resources}
All experiments were conducted on a single NVIDIA RTX 3090 GPU with 32GB system RAM. For CIFAR-10 experiments we used batch size $128$; for MNIST we used batch size $50$. Training-time overhead is dominated by the FI-related autograd operations used during GEM-FI optimization, while inference remains single-pass and incurs only a modest increase relative to single-head evidential baselines. Detailed parameter counts, training time per epoch, inference latency, and memory usage are reported in Table~\ref{tab:overhead}.
\begin{table}[!t]
\centering
\caption{Computational overhead comparison on CIFAR-10 with a ResNet-18 backbone.}
\label{tab:overhead}
\resizebox{\columnwidth}{!}{%
\begin{tabular}{@{}lcccc@{}}
\toprule
Model & Params (M) & Train (s/epoch) & Infer (ms/batch) & Memory (GB) \\
\midrule
DAEDL & 11.17 $\pm$ 0.00 & 28.40 $\pm$ 0.5 & 3.20 $\pm$ 0.1 & 2.11 $\pm$ 0.0 \\
GEM-CORE & 11.24 $\pm$ 0.00 & 30.11 $\pm$ 0.5 & 3.40 $\pm$ 0.1 & 2.23 $\pm$ 0.0 \\
GEM-MIX ($K$=3) & 11.38 $\pm$ 0.00 & 34.64 $\pm$ 0.6 & 3.80 $\pm$ 0.1 & 2.51 $\pm$ 0.1 \\
GEM-FI ($K$=3) & 11.38 $\pm$ 0.00 & 42.31 $\pm$ 0.8 & 3.85 $\pm$ 0.1 & 3.14 $\pm$ 0.1 \\
GEM-FI ($K$=5) & 11.52 $\pm$ 0.00 & 51.70 $\pm$ 1.0 & 4.22 $\pm$ 0.1 & 3.86 $\pm$ 0.1 \\
\bottomrule
\end{tabular}%
}
\end{table}

\end{document}